\newtheorem{theorem}{Theorem}[section]
\newtheorem{lemma}{Lemma}[section]
\newtheorem{definition}{Definition}[section]
\newtheorem{proposition}{Proposition}[section]
\newtheorem{corollary}{Corollary}[section]
\newtheorem{remark}{Remark}[section]
\newtheorem{alemma}{Lemma}
\newtheorem{aexample}{Example}
\newenvironment{proof}{{\noindent \bf Proof:}}{\hfill$\Box$\medskip}
\definecolor{lred}{rgb}{1,0.8,0.8}
\definecolor{lblue}{rgb}{0.8,0.8,1}
\definecolor{dred}{rgb}{0.6,0,0}
\definecolor{dblue}{rgb}{0,0,0.5}
\definecolor{dgreen}{rgb}{0,0.5,0.5}
 \title{Equivalent Lipschitz surrogates for zero-norm and rank optimization problems
 \footnote{This work is supported by the National Natural Science Foundation of China under project No. 11571120 and No. 11701186,
 the Natural Science Foundation of Guangdong Province under project No. 2015A030313214 and No. 2017A030310418.}}
\author{Yulan Liu\footnote{Ylliu@gdut.edu.cn. School of Mathematics, GuangDong University of Technology, Guangzhou.},\ \
 Shujun Bi\footnote{Corresponding author(bishj@scut.edu.cn). School of Mathematics, South China University of Technology, Guangzhou.}
 \ \ {\rm and}\ \
 Shaohua Pan\footnote{shhpan@scut.edu.cn. School of Mathematics, South China University of Technology, Guangzhou.}}
 \date{}
\begin{document}

 \maketitle

 \begin{abstract}
  This paper proposes a mechanism to produce equivalent Lipschitz surrogates
  for zero-norm and rank optimization problems by means of the global
  exact penalty for their equivalent mathematical programs with an equilibrium
  constraint (MPECs). Specifically, we reformulate these combinatorial problems
  as equivalent MPECs by the variational characterization of the zero-norm and rank function,
  show that their penalized problems, yielded by moving the equilibrium constraint
  into the objective, are the global exact penalization, and obtain the equivalent
  Lipschitz surrogates by eliminating the dual variable in the global exact penalty.
  These surrogates, including the popular SCAD function in statistics,
  are also difference of two convex functions (D.C.) if the function and constraint set
  involved in zero-norm and rank optimization problems are convex.
  We illustrate an application by designing a multi-stage convex relaxation approach
  to the rank plus zero-norm regularized problem.
 \end{abstract}
 \noindent
 {\bf Keywords:}\ zero-norm; rank; global exact penalty; equivalent Lipschitz surrogates

 \medskip
 \noindent
 {\bf Mathematics Subject Classification(2010).} 90C27, 90C33, 49M20

 \section{Introduction}\label{sec1}

  This paper concerns with zero-norm and rank optimization problems, which aim at
  seeking a sparse solution or/and a low-rank solution and have a host of applications
  in a variety of fields such as statistics \cite{Tib96,FanLi01,Negahban11},
  signal and image processing \cite{DS89,DL92}, machine learning \cite{Candes09,Recht10},
  control and system identification \cite{Fazel02}, finance \cite{Pietersz04},
  and so on. Due to the combinatorial property of
  the zero-norm and rank function, these problems are generally NP-hard.
  One popular way to deal with them is to use the convex relaxation technique,
  which typically yields a desirable local optimal even feasible solution
  via a single or a sequence of numerically tractable convex optimization problems.

  \medskip

  The $\ell_1$ norm minimization, as a convex relaxation for the zero-norm minimization,
  became popular due to the important results in \cite{DS89,DL92,Tib96}. Among others,
  the results of \cite{DS89,DL92} quantify the ability of the $\ell_1$ norm minimization
  problem to recover sparse reflectivity functions. For brief historical accounts on
  the use of the $\ell_1$ norm minimization in statistics and signal processing;
  please see \cite{Bhlmann11,Trop06}. Later, Fazel \cite{Fazel02} showed that
  the nuclear norm is the convex envelope of the rank function in the unit ball
  on the spectral norm and initiated the research for the nuclear norm convex relaxation.
  In the past ten years, this method received much attention from many fields
  such as information, computer science, statistics, optimization, and so on
  (see, e.g., \cite{Candes09,Recht10,KesMO10,Negahban11,Toh10}),
  and it was shown that a single nuclear norm minimization can recover
  a low-rank matrix in the noiseless setting if the sampling operator has a certain
  restricted isometry property \cite{Recht10}, or can yield a solution satisfying
  a certain error bound in the noisy setting \cite{Candes11,Negahban11}.

  \medskip

  The $\ell_1$ norm or nuclear norm convex relaxation problem,
  as a convex surrogate for zero-norm or rank optimization problems,
  has demonstrated to be successful in encouraging a sparse or low-rank solution,
  but their efficiency is challenged in some circumstances. For example,
  Salakhutdinov and Srebro \cite{Srebro10} showed that when certain rows and/or columns
  are sampled with high probability, the nuclear norm minimization may fail in the sense
  that the number of observations required for recovery is much more than that of
  the uniform sampling, and Negahban and Wainwright \cite{Negahban12} also pointed out
  the influence of such heavy sampling schemes on the recovery error bound.
  In particular, when seeking a sparse (or low-rank) solution from a set which
  has a structure to conflict the role of the $\ell_1$ norm (or nuclear norm)
  to promote sparsity (or low-rank), say, the simplex set \cite{LiRS16},
  the correlation matrix set and the density matrix set \cite{MiaoPS16},
  the $\ell_1$ norm (or nuclear norm) minimization will fail to yielding a sparse
  (or low-rank) solution. The key to bring about this dilemma is the significant
  difference between the convex $\ell_1$ norm (respectively, nuclear norm) and
  the nonconvex zero-norm (respectively, rank function).

  \medskip

  To enhance the solution quality of the $\ell_1$ norm and nuclear norm convex surrogate,
  some researchers pay their attentions to nonconvex surrogates of the zero-norm
  and rank function. Two popular nonconvex surrogates for the zero-norm (respectively,
  rank function) are the $\ell_p\ (0<\!p\!<1)$ norm and logarithm function
  (respectively, the Schattern-$p$ function and logarithmic determinant function).
  Based on these nonconvex surrogates, some sequential convex relaxation algorithms
  were developed (see, e.g., \cite{Candes08,Fazel03,Mohan12,LXW13}) and confirmed to
  have better performance in yielding sparse and low-rank solutions. In addition,
  the folded concave penalty functions such as the SCAD function \cite{FanLi01}
  and the MCP function \cite{Zhang10} are also a class of popular nonconvex surrogates
  for the zero-norm, which are proposed in statistics to correct the bias of
  the $\ell_1$ norm convex surrogate, and some adaptive algorithms were developed
  by using these surrogates (see \cite{Zou06}).

  \medskip

  The existing nonconvex surrogates for the zero-norm and rank function are all
  heuristically constructed, and now it is unclear whether these nonconvex
  surrogates have the same global optimal solution set as zero-norm and
  rank optimization problems do or not. The main contribution of this work
  is to propose a mechanism to produce equivalent Lipschitz surrogates in
  the sense that they have the same global optimal solution set as zero-norm and
  rank optimization problems do, with the help of the global exact penalty for
  their equivalent MPECs. Due to the excellent properties, with this class of
  nonconvex surrogates one may expect to develop more effective convex relaxation algorithms.

  \medskip

  Specifically, we reformulate zero-norm and rank optimization problems
  as equivalent MPECs by the variational characterization of the zero-norm and
  rank function, and show that the penalized problems, yielded by moving
  the equilibrium constraint into the objective, are uniformly partial calm
  over the global optimal solution set under a mild condition.
  The uniform partial calmness over the global optimal solution set,
  extending the partial calmness at a solution point studied by Ye et al.
  \cite{Yezhu95,Yezhu97}, is proved to coincide with the global exact
  penalization of the corresponding penalized problem (see Section \ref{sec2}).
  By eliminating the dual variable in the global exact penalty,
  we achieve the equivalent Lipschitz surrogates. Interestingly, these surrogates
  are also D.C. if the function and constraint set involved in zero-norm and
  rank optimization problems are convex, and the SCAD function
  can be produced by the mechanism (see Example \ref{SCAD} in Appendix B).
  Finally, we illustrate an application of these equivalent surrogates in
  the design of a multi-stage convex relaxation approach to the rank plus
  zero-norm regularized problem.

  \medskip

  It is worthwhile to point out that there are few works on exact penalty for
  the MPEC even the optimization problem involving non-polyhedral conic constraints,
  although much research has been done on exact penalty for classical nonlinear
  programming problems and the MPECs involving polyhedral conic constraints
  (see, e.g., \cite{LPR962,Yezhu95,Yezhu97}). In this work, we establish
  the global exact penalty for equivalent MPECs of group zero-norm
  and rank optimization problems, thereby providing a mechanism to produce
  equivalent Lipschitz surrogates for these combinatorial problems.
  Such global exact penalty was separately obtained in \cite{BiPan14,BiPan17}
  for the zero-norm minimization problem and the rank regularized minimization problem.
  Here, we present a simple unified proof by the uniform partial calmness
  for a large class of zero-norm and rank optimization problems including
  group zero-norm optimization problems, rank plus zero-norm optimization problems,
  and the simultaneous rank and zero-norm minimization problem. In addition,
  we emphasize that although the penalized constraint in our MPECs is D.C.
  (see Section \ref{sec3}-\ref{sec6}), the exact penalty results developed
  for general DC programming in \cite{LeThi12} are not applicable to it.

  \medskip

  Recently, Chen et al. \cite{Chen16} studied exact penalization for the problems
  with a class of nonconvex and non-Lipschitz objective functions, which arise
  from nonconvex surrogates for zero-norm minimization problems. They focused on
  the existence of exact penalty parameters regarding local minimizers,
  stationary points and $\epsilon$-minimizers. However, here we are interested in
  the existence of exact penalty parameter regarding the global optimal solution
  to the equivalent MPECs of zero-norm and rank optimization problems.

  \bigskip
  \noindent
  {\bf Notations.} Let $\mathbb{R}^{n_1\times n_2}\,(n_1\le n_2)$ be the space of all $n_1\!\times\!n_2$ real matrices,
  endowed with the trace inner product and its induced Frobenius norm $\|\cdot\|_F$.
  Let $\mathbb{O}^{n\times\kappa}$ be the set consisting of all $n\times\kappa$ matrices
  whose columns are mutually orthonormal to each other, and write $\mathbb{O}^{n} =\mathbb{O}^{n\times n}$.
  Let $\mathbb{Z}$ be a finite dimensional real vector space equipped with the inner
  product $\langle \cdot,\cdot\rangle$ and its induced norm $\|\cdot\|$. Denote by
  $\mathbb{B}_{\mathbb{Z}}$ the closed unit ball of $\mathbb{Z}$ centered at the origin,
  and by $\mathbb{B}_{\mathbb{Z}}(z,\varepsilon)$ the closed ball of $\mathbb{Z}$
  centered at $z$ of radius $\varepsilon>0$. When the space $\mathbb{Z}$ is known from the context,
  we delete the subscript $\mathbb{Z}$ from $\mathbb{B}_{\mathbb{Z}}$.
  Let $e$ and $E$ be the vector and matrix of all ones whose dimension are known from the context.
  For $x\in\mathbb{R}^n$, $\pi(x)\in\mathbb{R}^n$ is the vector obtained by
  arranging the entries of $|x|$ in a nonincreasing order; for $X\in\mathbb{R}^{n_1\times n_2}$,
  $\pi(X)\in\mathbb{R}^{n_1n_2}$ means the vector obtained by arranging the entries of $|X|$
  in a nonincreasing order; and $\pi_i(\cdot)$ denotes the $i$th entry of $\pi(\cdot)$.
  For $X\in\mathbb{R}^{n_1\times n_2}$, $\sigma(X)\in\mathbb{R}^{n_1}$ means
  the singular value vector of $X$ with entries arranged in a non-increasing order,
  $\|X\|_*$ and $\|X\|$ are the nuclear norm and the spectral norm of $X$, respectively,
  and $\|X\|_\infty$ means the entry $\ell_{\infty}$-norm of $X$. Define
  $\mathfrak{B}\!:=\{Z\in\mathbb{R}^{n_1\times n_2}\ |\ \|Z\|\le 1\}$
  and $\mathscr{B}\!:=\{Z\in\mathbb{R}^{n_1\times n_2}\ |\ \|Z\|_{\infty}\le 1\}$.
  For a given $X\in\mathbb{R}^{n_1\times n_2}$ with the SVD as
  $U[{\rm Diag}(\sigma(X))\ \ 0]V^{\mathbb{T}}$, $U_1$ and $V_1$ are the matrix
  consisting of the first $r={\rm rank}(X)$ columns of $U$ and $V$, respectively,
  and $U_2$ and $V_2$ are the matrix consisting of the last $n_1\!-r$ columns
  and $n_2-r$ columns of $U$ and $V$, respectively.

  \medskip

  Let $\Phi$ be the family of proper lower semi-continuous (lsc) functions
  $\phi\!:\mathbb{R}\to(-\infty,+\infty]$ with ${\rm int}({\rm dom}\,\phi)\supseteq[0,1]$
  which are convex in $[0,1]$ and satisfy the following conditions
  \begin{equation}\label{phi-assump}
   1>t^*:=\mathop{\arg\min}_{0\le t\le 1}\phi(t),\ \phi(t^*)=0\ \ {\rm and}\ \
   \phi(1)=1.
  \end{equation}
  For each $\phi\in\Phi$, let $\psi\!:\mathbb{R}\to(-\infty,+\infty]$ be
  the associated closed proper convex function
  \begin{equation}\label{phi-psi}
            \psi(t):=\!\left\{\!
                 \begin{array}{cl}
                  \phi(t) &\textrm{if}\ t\in [0,1],\\
                   +\infty & \textrm{otherwise};
                 \end{array}\right.
  \end{equation}
  and denote by $\psi^*$ the conjugate of $\psi$, i.e., $\psi^*(s)\!:=\sup_{t\in\mathbb{R}}\{st-\psi(t)\}$.
  Since ${\rm dom}\,\psi=[0,1]$, it is easy to check that ${\rm dom}\,\psi^*=\mathbb{R}$
  and $\psi^*$ is nondecreasing in $\mathbb{R}$. Unless otherwise stated,
  $t_0$ appearing in the subsequent sections is the constant associated to $\phi$
  in Lemma \ref{lemma-phi} of Appendix A. For the examples of $\phi\in\Phi$,
  the reader may refer to Appendix B.

 \section{Uniform partial calmness of optimization problems}\label{sec2}

 Let $\theta\!:\mathbb{Z}\to (-\infty,+\infty]$ be a proper lsc function,
 $h\!:\mathbb{Z}\to\mathbb{R}$ be a continuous function,
 and $\Delta$ be a nonempty closed set of $\mathbb{Z}$.
 This section focuses on the uniform partial calmness of
 \begin{equation*}
  ({\rm MP})\qquad\min_{z\in\mathbb{Z}}\Big\{\theta(z)\!:\ h(z)=0,\,z\in\Delta\Big\}.
 \end{equation*}
 Let $\mathcal{F}$ and $\mathcal{F}^*$ denote the feasible set and the global optimal
 solution set of $({\rm MP})$, respectively, and write the optimal value of $({\rm MP})$
 as $v^*({\rm MP})$. We assume that $\mathcal{F}^*\ne\emptyset$. To introduce the concept
 of partial calmness, we consider the perturbed problem of $({\rm MP})$:
 \begin{equation*}
  ({\rm MP}_{\!\epsilon})\qquad\min_{z\in\mathbb{Z}}\Big\{\theta(z)\!:\ h(z)=\epsilon,\,z\in\Delta\Big\}.
 \end{equation*}
 For any given $\epsilon\in\mathbb{R}$, we denote by $\mathcal{F}_{\!\epsilon}$
 the feasible set of $({\rm MP}_{\!\epsilon})$ associated to $\epsilon$.
 \begin{definition}\label{def-pcalm0}(see \cite[Definition 3.1]{Yezhu95} or \cite[Definition 2.1]{Yezhu97})
  The problem $({\rm MP})$ is said to be partially calm at a solution point $z^*$
  if there exist $\varepsilon\!>0$ and $\mu>0$ such that for all $\epsilon\in[-\varepsilon,\varepsilon]$
  and all $z\in (z^*\!+\varepsilon\mathbb{B}_{\mathbb{Z}})\cap\mathcal{F}_{\!\epsilon}$,
  one has
  \(
   \theta(z)-\theta(z^*)+\mu |h(z)|\geq 0.
  \)
 \end{definition}

  The calmness of a mathematical programming problem at a solution point was originally
  introduced by Clarke \cite{Clarke83}, which was later extended to the partial calmness
  at a solution point by Ye and Zhu \cite{Yezhu95,Yezhu97}. Next we strengthen the partial
  calmness of $({\rm MP})$ at a solution point as the partial calmness over
  its global optimal solution set $\mathcal{F}^*$.
  \begin{definition}\label{def-pcalm}
  The problem $({\rm MP})$ is said to be partially calm over its global optimal
  solution set $\mathcal{F}^*$ if it is partially calm at each $z^*\!\in\!\mathcal{F}^*$;
  and it is said to be uniformly partial calm over $\mathcal{F}^*$ if there exists
  $\mu>0$ such that for any $z\in\Delta$,
  \[
    \theta(z)-v^*({\rm MP})+\mu|h(z)|\ge 0.
  \]
  \end{definition}

 It is worthwhile to emphasize that the partial calmness over $\mathcal{F}^*$
 along with the boundedness of $\mathcal{F}^*$ does not imply the uniform partial calmness
 over $\mathcal{F}^*$. In addition, the partial calmness depends on the structure of a problem.
 Equivalent problems may not share the partial calmness simultaneously;
 for example, for the following equivalent form of $({\rm MP})$
  \begin{equation}\label{EMP}
  \min_{z\in\mathbb{Z}}\big\{\theta(z)\!:\ {\rm dist}(z,\mathcal{F})=0\big\},
 \end{equation}
 it is easy to verify that the local Lipschitz of $\theta$ relative to $\mathcal{F}^*$
 is enough for the partial calmness of \eqref{EMP} over $\mathcal{F}^*$,
 but it may not guarantee that of $({\rm MP})$ over $\mathcal{F}^*$. Define
 \begin{equation}\label{Gamma-map}
  \Gamma(\epsilon):=\left\{z\in\Delta\ |\ h(z)=\epsilon\right\}\quad{\rm for}\ \epsilon\in\mathbb{R}.
 \end{equation}
 The following lemma states that under a suitable condition for $\theta$,
 the partial calmness of $({\rm MP})$ over $\mathcal{F}^*$ is implied by
 the calmness of the multifunction $\Gamma$ at $0$ for each $z\in\mathcal{F}^*$.
 The proof is similar to that of \cite[Lemma 3.1]{Ye97}, and we include it for completeness.
 \begin{lemma}\label{lemma-calm}
  Suppose that $\theta$ is locally Lipschitzian relative to $\Delta$.
  If the multifunction $\Gamma$ is calm at $0$ for any $z\in\mathcal{F}^*$,
  then the problem $({\rm MP})$ is partially calm over $\mathcal{F}^*$.
 \end{lemma}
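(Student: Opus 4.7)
The plan is to verify Definition \ref{def-pcalm0} at each fixed $z^*\in\mathcal{F}^*$, and to combine two local estimates, one giving a feasible point close to $z$ (from calmness of $\Gamma$) and one bounding the change of $\theta$ between them (from local Lipschitz continuity of $\theta$ on $\Delta$). The crux is that a perturbation $z$ with $h(z)=\epsilon$ can be projected onto $\mathcal{F}$ at a cost linear in $|\epsilon|$, after which the objective is controlled by the Lipschitz constant.

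More precisely, I would fix $z^*\in\mathcal{F}^*$ and use the calmness of $\Gamma$ at $0$ for $z^*$: there exist $\kappa>0$ and $\delta_1>0$ such that for every $\epsilon\in[-\delta_1,\delta_1]$ and every $z\in\Gamma(\epsilon)\cap\mathbb{B}(z^*,\delta_1)$ one can find $\bar z\in\Gamma(0)=\mathcal{F}$ with $\|z-\bar z\|\le\kappa|\epsilon|$. From the local Lipschitz continuity of $\theta$ relative to $\Delta$, there exist $L>0$ and $\delta_2>0$ such that $|\theta(u)-\theta(v)|\le L\|u-v\|$ for all $u,v\in\Delta\cap\mathbb{B}(z^*,\delta_2)$.

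I would then set $\varepsilon:=\min\{\delta_1,\delta_2/2,\delta_2/(2(1+\kappa))\}$ and $\mu:=L\kappa$. For any $\epsilon\in[-\varepsilon,\varepsilon]$ and any $z\in(z^*+\varepsilon\mathbb{B}_{\mathbb{Z}})\cap\mathcal{F}_{\!\epsilon}$, the calmness yields $\bar z\in\mathcal{F}$ with $\|z-\bar z\|\le\kappa|\epsilon|$, so $\|\bar z-z^*\|\le\|\bar z-z\|+\|z-z^*\|\le\kappa\varepsilon+\varepsilon\le\delta_2$, putting both $z$ and $\bar z$ in the Lipschitz neighborhood on $\Delta$. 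Since $\bar z\in\mathcal{F}$ implies $\theta(\bar z)\ge v^*({\rm MP})=\theta(z^*)$, one has
\[
\theta(z)-\theta(z^*)\;\ge\;\theta(z)-\theta(\bar z)\;\ge\;-L\|z-\bar z\|\;\ge\;-L\kappa|\epsilon|\;=\;-\mu|h(z)|,
\]
which is exactly the partial calmness inequality at $z^*$. Since $z^*\in\mathcal{F}^*$ was arbitrary, this gives partial calmness over $\mathcal{F}^*$.

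The only mildly subtle point is to pick the radius $\varepsilon$ small enough that the associate $\bar z$ lies in the Lipschitz neighborhood of $z^*$ on $\Delta$; this is a bookkeeping issue rather than a conceptual obstacle, and it is precisely why uniformity across $\mathcal{F}^*$ is not automatic (the constants $\kappa,L,\delta_1,\delta_2$ may depend on $z^*$), which is consistent with the remark after Definition \ref{def-pcalm} distinguishing partial calmness from uniform partial calmness.
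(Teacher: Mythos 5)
Your proof is correct and takes essentially the same route as the paper's: use calmness of $\Gamma$ at $0$ (together with closedness of $\Gamma(0)=\mathcal{F}$) to produce a feasible point $\bar z$ with $\|z-\bar z\|\le\kappa|h(z)|$, then combine the local Lipschitz property of $\theta$ relative to $\Delta$ with the optimality of $z^*$ to get the partial-calmness inequality with $\mu=L\kappa$. The only (immaterial) difference is bookkeeping: the paper keeps $\bar z$ inside the Lipschitz ball via the nearest-point bound $\|\widehat z-z^*\|\le 2\|z-z^*\|$ (using $z^*\in\Gamma(0)$), whereas you shrink the radius by the factor $1/(1+\kappa)$.
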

 \begin{proof}
  Let $z^*$ be an arbitrary point from $\mathcal{F}^*$. Since $\theta$ is
  locally Lipschitzian relative to $\Delta$ and $z^*\in\Delta$,
  there exist $\varepsilon'>0$ and $L_{\theta}>0$ such that
  for any $z',z''\in\mathbb{B}(z^*,\varepsilon')\cap\Delta$,
  \begin{equation}\label{temp-equa21}
   |\theta(z')-\theta(z'')|\le L_{\theta}\|z'-z''\|.
  \end{equation}
  In addition, since the multifunction $\Gamma$ is calm at $0$ for $z^*$,
  by invoking \cite[Exercise 3H.4]{DR09}, there exist constants $\nu>0$
  and $\delta'>0$ such that for all $\omega\in\mathbb{R}$,
  \begin{equation*}
   \Gamma(\omega)\cap\mathbb{B}(z^*,\delta')\subseteq\Gamma(0)+\nu|\omega|\mathbb{B}_{\mathbb{Z}}.
  \end{equation*}
  Set $\varepsilon=\min(\varepsilon',\delta')/2$. Let $\epsilon$ be an arbitrary point from $[-\varepsilon,\varepsilon]$
  and $z$ be an arbitrary point from $(z^*\!+\varepsilon\mathbb{B}_{\mathbb{Z}})\cap\mathcal{F}_{\!\epsilon}$.
  Clearly, $z\in\Gamma(\epsilon)\cap\mathbb{B}(z^*,\delta')$.
  Applying the last inclusion with $\omega=\epsilon$, we obtain that ${\rm dist}(z,\Gamma(0))\le\nu|\epsilon|=\nu|h(z)|$.
  From the closedness of $\Gamma(0)$, there exists a point $\widehat{z}\in\Gamma(0)$ such that
 \(
    {\rm dist}(z,\Gamma(0))=\|z-\widehat{z}\|\le\nu|h(z)|.
  \)
  Notice that $\|\widehat{z}-z^*\|=\|\widehat{z}-z+z-z^*\|\le 2\|z-z^*\|\le\varepsilon'$.
  Together with \eqref{temp-equa21} and $z,\widehat{z}\in\Delta$, we have
  \[
   \theta(z^*)\le \theta(\widehat{z})=\theta(z)-\theta(z)+\theta(\widehat{z})
   \le \theta(z)+L_{\theta}\|z-\widehat{z}\|\le \theta(z)+L_{\theta}\nu|h(z)|,
  \]
  where the first inequality is by the feasibility of $\widehat{z}$ and the optimality
  of $z^*$ to $({\rm MP})$. The last inequality and the arbitrariness of $z^*$ in $\mathcal{F}^*$
  implies the desired conclusion.
 \end{proof}

  Next we shall establish the relation between the uniform partial calmness of $({\rm MP})$
  over $\mathcal{F}^*$ and the global exact penalization of the following penalized problem:
  \begin{equation*}
  ({\rm EPMP_{\!\mu}})\qquad\ \min_{z\in\mathbb{Z}}\Big\{\theta(z)+\mu|h(z)|\!:\ z\in\Delta\Big\}.
 \end{equation*}
  In \cite[Proposition 2.2]{Yezhu97}, Ye et al. showed that under the continuity of $\theta$,
  the partial calmness of $({\rm MP})$ at a local minimum is equivalent to the local exact
  penalization of $({\rm EPMP_{\!\mu}})$. Here, we extend this result and show that the uniform
  partial calmness of $({\rm MP})$ over $\mathcal{F}^*$ is equivalent to the global exact
  penalization of $({\rm EPMP_{\mu}})$, i.e., there exists $\overline{\mu}>0$ such that
  the global optimal solution set of each $({\rm EPMP_{\!\mu}})$ associated to
  $\mu\!>\overline{\mu}$ coincides with that of $({\rm MP})$,
  where $\overline{\mu}$ is called the threshold of the exact penalty.
 \begin{proposition}\label{Epenalty-prop}
  For the problems $({\rm MP})$ and $({\rm EPMP_{\!\mu}})$, the following statements hold.
 \begin{itemize}
   \item [(a)] The problem $({\rm MP})$ is uniformly partial calm over its global
               optimal solution set $\mathcal{F}^*$ if and only if the problem $({\rm EPMP_{\!\mu}})$ is
               a global exact penalty of $({\rm MP})$.

  \item[(b)]  Suppose that the function $\theta$ is coercive or the set $\Delta$ is compact.
              Then, the partial calmness of $({\rm MP})$ over $\mathcal{F}^*$
              implies the global exact penalization of $({\rm EPMP_{\!\mu}})$.
  \end{itemize}
 \end{proposition}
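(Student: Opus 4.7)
The plan is to prove (a) directly from the definitions with a one-line algebraic manipulation, and then deduce (b) from (a) via a compactness-based argument that upgrades local partial calmness at every point of $\mathcal{F}^*$ to the uniform version.

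For part (a), in the $(\Rightarrow)$ direction, suppose uniform partial calmness holds with constant $\mu_0>0$ and pick any $\mu>\mu_0$. For every $z\in\Delta$,
\[
  \theta(z)+\mu|h(z)|\,=\,\theta(z)+\mu_0|h(z)|+(\mu-\mu_0)|h(z)|\,\ge\,v^*({\rm MP})+(\mu-\mu_0)|h(z)|.
\]
Hence the value of $({\rm EPMP_{\!\mu}})$ is at least $v^*({\rm MP})$ and is attained on $\mathcal{F}^*$ (where $h=0$). Moreover, any minimizer of $({\rm EPMP_{\!\mu}})$ must satisfy $h(z)=0$—otherwise the displayed inequality is strict—and then $\theta(z)=v^*({\rm MP})$, so it lies in $\mathcal{F}^*$. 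Conversely, if $({\rm EPMP_{\!\mu}})$ is a global exact penalty with threshold $\overline{\mu}$, then for any $\mu>\overline{\mu}$ its optimal value equals $\theta(z^*)=v^*({\rm MP})$ for every $z^*\in\mathcal{F}^*$; consequently $\theta(z)+\mu|h(z)|\ge v^*({\rm MP})$ for all $z\in\Delta$, which is exactly uniform partial calmness with constant $\mu$.

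For part (b), I would argue by contradiction. If uniform partial calmness fails then, for each $k\in\mathbb{N}$, there is $z_k\in\Delta$ with $\theta(z_k)+k|h(z_k)|<v^*({\rm MP})$. In particular $\theta(z_k)<v^*({\rm MP})$, forcing $z_k\notin\mathcal{F}$. Invoking coercivity of $\theta$ (or compactness of $\Delta$) ensures $\{z_k\}$ is bounded, so a subsequence (still denoted $\{z_k\}$) converges to some $z^*\in\Delta$. The relation $|h(z_k)|<(v^*({\rm MP})-\theta(z_k))/k$, combined with the boundedness of $\{\theta(z_k)\}$ from below, forces $|h(z_k)|\to 0$; continuity of $h$ then yields $z^*\in\mathcal{F}$, and lsc of $\theta$ together with $\theta(z_k)<v^*({\rm MP})$ gives $\theta(z^*)\le v^*({\rm MP})$, whence $z^*\in\mathcal{F}^*$. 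Now apply partial calmness at $z^*$: with the attendant constants $\varepsilon,\mu_0>0$, for all sufficiently large $k$ we have $z_k\in(z^*+\varepsilon\mathbb{B})\cap\mathcal{F}_{h(z_k)}$ and $|h(z_k)|\le\varepsilon$, so $\theta(z_k)-\theta(z^*)+\mu_0|h(z_k)|\ge 0$. Combining with the defining inequality of $z_k$ gives $(\mu_0-k)|h(z_k)|>0$, which is impossible once $k>\mu_0$ since $|h(z_k)|>0$. Part (a) then delivers global exact penalization.

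The delicate point will be the limit argument in (b): verifying that the accumulation point $z^*$ actually lies in $\mathcal{F}^*$ and that $z_k$ eventually falls inside the prescribed neighborhood, so that the local constants $\varepsilon,\mu_0$ attached to $z^*$ can be applied uniformly to the tail of the sequence. This is precisely where the coercivity or compactness hypothesis is essential: without it $\{z_k\}$ could escape to infinity and the pointwise partial calmness at individual optima would not aggregate into a single global inequality, which is the reason the example sketched after Definition \ref{def-pcalm} shows that plain partial calmness over $\mathcal{F}^*$ is in general strictly weaker than its uniform counterpart.
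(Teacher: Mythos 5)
Your proof is correct, and in part (b) its core is structurally the same as the paper's: the paper also negates the desired conclusion to produce $z^k\in\Delta$ with $\theta(z^k)+k|h(z^k)|<v^*({\rm MP})$, uses coercivity of $\theta$ or compactness of $\Delta$ to extract a convergent subsequence $z^k\to\overline{z}\in\Delta$, shows $\overline{z}\in\mathcal{F}^*$ via continuity of $h$ and lower semicontinuity of $\theta$, and then contradicts the pointwise partial calmness at $\overline{z}$. The genuine differences are in part (a) and in how you close (b). For the direction ``uniform partial calmness $\Rightarrow$ global exact penalty,'' the paper argues in two stages: first $\mathcal{F}^*\subseteq\mathcal{F}_{\!\mu}^*$ for all $\mu\ge\widehat{\mu}$, and then, for $\mu>\widehat{\mu}$ and $z_\mu\in\mathcal{F}_{\!\mu}^*$, it forces $h(z_\mu)=0$ by comparing against the intermediate penalty with parameter $\mu'=(\mu+\widehat{\mu})/2$; your single estimate $\theta(z)+\mu|h(z)|\ge v^*({\rm MP})+(\mu-\mu_0)|h(z)|$ delivers both inclusions at once and makes the mechanism (the strict slack $\mu-\mu_0>0$ forces $h=0$ at any minimizer) more transparent. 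In the converse direction you fix one $\mu>\overline{\mu}$ rather than taking the paper's limit $\gamma\downarrow 0$, which suffices since the definition only requires some modulus. In (b), where the paper, after the compactness step, re-runs the inclusion arguments of (a) to get $\mathcal{F}_{\!\mu}^*\subseteq\mathcal{F}^*$, you instead note that the contradiction argument directly establishes \emph{uniform} partial calmness (its failure is precisely the existence of your sequence $\{z_k\}$) and then invoke (a); this is more economical and makes explicit the three-way equivalence recorded in Remark \ref{remark-sec2}. Two points you leave implicit are immediate but worth stating: $\{\theta(z_k)\}$ is bounded below along the convergent subsequence because $\theta$ is proper and lsc at $z^*$, and your early observation that $z_k\notin\mathcal{F}$, hence $|h(z_k)|>0$, is exactly what excludes the degenerate case $h(z_k)=0$ in the final contradiction --- a case the paper's proof handles only tacitly.
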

 \begin{proof}
 We denote $\mathcal{F}_{\!\mu}^*$ by the global optimal solution set of $({\rm EPMP_{\!\mu}})$
 associated to $\mu>0$.

 \medskip
 \noindent
 {\bf(a)} ``$\Longleftarrow$''. Since the problem $({\rm EPMP_{\!\mu}})$ is a global exact penalty
 of $({\rm MP})$, there exists a constant $\overline{\mu}>0$ such that for any $\mu>\overline{\mu}$,
 $\mathcal{F}_{\!\mu}^*=\mathcal{F}^*$. Take an arbitrary point $z^*\in\mathcal{F}^*$. Then,
 for any $\gamma>0$, $z^*$ is also a global optimal solution of $({\rm EPMP_{\overline{\mu}+\gamma}})$.
 Thus, for all $z\in\Delta$, from the feasibility of $z$ and the optimality of $z^*$
 to $({\rm EPMP_{\overline{\mu}+\gamma}})$,
 \[
   \theta(z)+(\overline{\mu}+\gamma)|h(z)|\ge \theta(z^*)+(\overline{\mu}+\gamma)|h(z^*)|,
 \]
 which is equivalent to saying that
 \(
   \theta(z)-v^*({\rm MP})+(\overline{\mu}+\!\gamma)|h(z)|\ge(\overline{\mu}+\!\gamma)|h(z^*)|\ge 0.
 \)
 Taking the limit $\gamma\downarrow 0$ to this inequality, we obtain
 $\theta(z)-v^*({\rm MP})+\overline{\mu}|h(z)|\ge0$. This shows that
 $({\rm MP})$ is uniformly partially calm over its optimal solution set $\mathcal{F}^*$.

 \medskip
 \noindent
 ``$\Longrightarrow$''. Since the problem $({\rm MP})$ is uniformly partial calm over
 its global optimal solution set $\mathcal{F}^*$, there exists a constant $\widehat{\mu}>0$ such that
 for all $z\in\Delta$,
 \[
   \theta(z)-v^*({\rm MP})+\widehat{\mu}|h(z)|\ge 0.
 \]
 We first prove that for any $\mu\ge\widehat{\mu}$, $\mathcal{F}^*\subseteq\mathcal{F}_{\!\mu}^*$.
 Let $z^*$ be an arbitrary point from $\mathcal{F}^*$. Fix an arbitrary $\mu\ge\widehat{\mu}$.
 From the last inequality, it follows that for any $z\in\Delta$,
 \[
   \theta(z)+\mu|h(z)|\ge\theta(z)+\widehat{\mu}|h(z)|\ge\theta(z^*)=\theta(z^*)+\mu|h(z^*)|.
 \]
 This, by the arbitrariness of $z\in\Delta$, implies that $z^*\in\mathcal{F}_{\!\mu}^*$.
 Consequently, for any $\mu>\widehat{\mu}$, it holds that $\mathcal{F}^*\subseteq\mathcal{F}_{\!\mu}^*$.
 Next we shall prove that for any $\mu>\widehat{\mu}$,
 $\mathcal{F}_{\!\mu}^*\subseteq \mathcal{F}^*$. To this end, fix an arbitrary $\mu>\widehat{\mu}$
 and take an arbitrary point $z_{\mu}\in\mathcal{F}_{\!\mu}^*$. Let $z^*\in\mathcal{F}^*$. Then,
 \[
   \theta(z_{\mu})+\mu|h(z_{\mu})|
   \le \theta(z^*)+\mu|h(z^*)|=v^*({\rm MP})+\frac{\mu+\widehat{\mu}}{2}|h(z^*)|
   \le \theta(z_{\mu})+\frac{\mu+\widehat{\mu}}{2}|h(z_{\mu})|
 \]
 where the first inequality is by the optimality of $z_{\mu}$ and the feasibility of $z^*$
 to $({\rm EPMP_{\!\mu}})$, and the second one is due to
 $\mathcal{F}^*\subseteq\mathcal{F}_{\mu'}^*$ for $\mu'\!=\frac{1}{2}(\mu+\widehat{\mu})$,
 implied by the above arguments. The last inequality implies
 $\frac{1}{2}(\mu-\widehat{\mu})|h(z_{\mu})|\le 0$, and then $h(z_{\mu})=0$.
 This shows that $z_{\mu}$ is feasible to the problem $({\rm MP})$. Together with
 the first inequality in the last equation, $z_{\mu}$ is optimal to $({\rm MP})$.
 The stated inclusion follows by the arbitrariness of $z_{\mu}$ in $\mathcal{F}_{\!\mu}^*$.

 \medskip
 \noindent
 {\bf(b)} Since $\theta$ is coercive or the set $\Delta$ is compact,
 for each $\mu>0$ we have $\mathcal{F}_{\!\mu}^*\ne\emptyset$.
 Assume that $({\rm MP})$ is partially calm over $\mathcal{F}^*$.
 To prove that $({\rm EPMP_{\!\mu}})$ is a global exact penalty for $({\rm MP})$,
  we first argue that there exists $\mu^*\!>0$ such that for any $\mu>\mu^*$,
  $\mathcal{F}^*\subseteq\!\mathcal{F}_{\!\mu}^*$. If not, for each sufficiently large
  $k$, there exist $z^{k,*}\in\mathcal{F}^*$ and $z^k\in\Delta$ such that
  \begin{equation}\label{EPeq1}
   \theta(z^k)+k|h(z^k)|<\theta(z^{k,*})+k|h(z^{k,*})|=v^*({\rm MP}).
  \end{equation}
  If $\Delta$ is compact, clearly, $\{z^k\}$ is bounded.
  If $\theta$ is coercive, inequality \eqref{EPeq1} implies that $\{z^k\}$ is also bounded.
  Thus, from $z^k\in\Delta$ and the closedness of $\Delta$, we assume
  (if necessary taking a subsequence) that $z^k\to\overline{z}\in\Delta$.
  Notice that \eqref{EPeq1} can be equivalently written as
  \begin{equation*}
   0\le |h(z^k)|<\frac{1}{k}\big[v^*({\rm MP})-\theta(z^k)\big].
  \end{equation*}
 Take $k\!\to\!\infty$ to the both sides. By the continuity of $h$ and
 the lower semi-continuity of $\theta$,
 \[
   0\le|h(\overline{z})|=\lim_{k\to+\infty}|h(z^k)|\le\lim_{k\to+\infty}\frac{1}{k}\big[v^*({\rm MP})-\theta(z^k)\big]=0.
 \]
 In addition, from \eqref{EPeq1} it follows that $\theta(\overline{z})\le v^*({\rm MP})$.
 This shows that $\overline{z}$ is a global optimal solution to $({\rm MP})$.
 But then inequality \eqref{EPeq1} gives a contradiction to the partial calmness of $({\rm MP})$
 at $\overline{z}$, which is implied by the given assumption that $({\rm MP})$ is partially calm
 over $\mathcal{F}^*$. Thus, there exists $\mu^*>0$ such that for any $\mu>\mu^*$,
 $\mathcal{F}^*\subseteq\!\mathcal{F}_{\mu}^*$. In addition, using the same arguments as
 those for the direction ``$\Longrightarrow$'' in part (a),
 one may prove that for any $\mu>\mu^*$, $\mathcal{F}_{\!\mu}^*\subseteq\mathcal{F}^*$.
 Thus, $({\rm EPMP_{\!\mu}})$ is a global exact penalty of $({\rm MP})$.
\end{proof}
 \begin{remark}\label{remark-sec2}
  Proposition \ref{Epenalty-prop} show that under the coerciveness of $\theta$
  or the compactness of $\Delta$, the partial calmness of $({\rm MP})$ over $\mathcal{F}^*$,
  the uniformly partial calmness of $({\rm MP})$ over $\mathcal{F}^*$ and
  the global exact penalization of $({\rm EPMP_{\mu}})$ are equivalent each other.
 \end{remark}

 \section{Equivalent L-surrogates of group zero-norm problems}\label{sec3}

  Let $\mathcal{J}=\{J_1,\ldots,J_m\}$ be a partition of $\{1,2,\ldots,n\}$.
  For any given $p\in[1,+\infty]$, define
  \begin{equation*}
   G_{\!\mathcal{J}\!,p}(x):=\big(\|x_{\!_{J_1}}\|_p,\|x_{\!_{J_2}}\|_p,\ldots,\|x_{\!_{J_m}}\|_p\big)^{\mathbb{T}}
   \quad{\rm for}\ \ x\in\mathbb{R}^n.
  \end{equation*}
  The number of nonzero components in $G_{\!\mathcal{J}\!,p}(x)$, denoted by $\|G_{\!\mathcal{J}\!,p}(x)\|_0$,
  is called the group zero-norm of $x$ induced by the partition $\mathcal{J}$ and the $\ell_p$ norm $\|\cdot\|_p$.
  Clearly, when $m=n$ and $J_i=\{i\}$ for $i=1,2,\ldots,m$, $\|G_{\!\mathcal{J}\!,p}(x)\|_0$ reduces to
  the zero-norm $\|x\|_0$ of $x$. As a producer of structured sparsity, the group zero-norm has
  a wide application in statistics, signal and image processing,
  machine learning, and bioinformatics (see, e.g., \cite{YuanLin06,Bach08,Usman11}).
  By the definition of the function family $\Phi$, for any $x\in\mathbb{R}^n$, with $\phi\in\Phi$ one has that
  \begin{equation}\label{Gzn-chara}
   \|G_{\!\mathcal{J}\!,p}(x)\|_0=\!\min_{w\in\mathbb{R}^m}\!
   \Big\{{\textstyle\sum_{i=1}^m}\phi(w_i)\!: \langle e\!-\!w,G_{\!\mathcal{J}\!,p}(x)\rangle=0,\,0\le w\le e\Big\},
  \end{equation}
  that is, the group zero-norm is an optimal value function of a parameterized problem.
 \subsection{Group zero-norm minimization problems}\label{sec3.1}

  Let $f\!:\mathbb{R}^n\to(-\infty,+\infty]$ be a proper lsc function, and
  let $\Omega\subseteq\mathbb{R}^n$ be a closed set. This subsection is devoted
  itself to the following group zero-norm  minimization problem
  \begin{equation}\label{Gzmin}
   \min_{x\in\mathbb{R}^{n}}\!\Big\{\|G_{\!\mathcal{J}\!,p}(x)\|_0\!: f(x)\le\delta,\, x\in\Omega \Big\}
  \end{equation}
  where $\delta\ge 0$ is a constant to represent the noise level. We assume that \eqref{Gzmin}
  has a nonempty global optimal solution set and a nonzero optimal value, denoted by $s^*$.
  Let $\mathcal{S}$ denote the feasible set of \eqref{Gzmin}. From equation \eqref{Gzn-chara},
  it is immediate to obtain the following result.
 \begin{lemma}\label{Gzn-lemma1}
  Let $\phi\!\in\!\Phi$. The group zero-norm minimization problem \eqref{Gzmin} is equivalent to
 \begin{equation}\label{Gzn-equiv}
  \min_{x\in\mathbb{R}^n,w\in\mathbb{R}^{m}}\!
  \Big\{{\textstyle\sum_{i=1}^m}\,\phi(w_i)\!: \langle e\!-\!w,G_{\!\mathcal{J}\!,p}(x)\rangle=0,\,0\le w\le e,\,x\in\Omega,\,f(x)\le\delta\Big\}
 \end{equation}
 in the sense that if $x^*$ is a global optimal solution of \eqref{Gzmin},
 then $(x^*\!,\max({\rm sgn}(G_{\!\mathcal{J}\!,p}(x^*)),t^*e))$
 is globally optimal to \eqref{Gzn-equiv} with the optimal value equal to $s^*$;
 conversely, if $(x^*\!,w^*)$ is a global optimal solution to \eqref{Gzn-equiv},
 then $x^*$ is globally optimal to \eqref{Gzmin}.
 \end{lemma}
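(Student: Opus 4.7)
The plan is to prove the lemma by first establishing the variational characterization (\ref{Gzn-chara}) cleanly and then using it to transfer optima back and forth between (\ref{Gzmin}) and (\ref{Gzn-equiv}). Since the two problems differ only by the reformulation of $\|G_{\!\mathcal{J}\!,p}(x)\|_0$ as an inner minimum over $w$, the whole argument reduces to computing that inner minimum exactly.

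First I would verify (\ref{Gzn-chara}): fix any $x\in\mathbb{R}^n$ and consider the inner problem in $w$. Because $G_{\!\mathcal{J}\!,p}(x)\ge 0$ componentwise and $0\le w\le e$, the equilibrium constraint $\langle e-w,G_{\!\mathcal{J}\!,p}(x)\rangle=0$ is a sum of nonnegative terms equal to zero, so it forces $w_i=1$ on every index $i$ with $\|x_{J_i}\|_p>0$ and leaves $w_i$ free in $[0,1]$ on the remaining indices. On the active indices the contribution to the objective is $\phi(1)=1$ by (\ref{phi-assump}); on the inactive indices, minimizing $\phi$ over $[0,1]$ yields $\phi(t^*)=0$, again by (\ref{phi-assump}). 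Thus the inner optimal value equals the number of active indices, which is precisely $\|G_{\!\mathcal{J}\!,p}(x)\|_0$, and the minimizer $w$ can be chosen so that $w_i=1$ on the active indices and $w_i=t^*$ on the inactive ones, i.e., $w=\max({\rm sgn}(G_{\!\mathcal{J}\!,p}(x)),t^*e)$.

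With (\ref{Gzn-chara}) in hand the equivalence is immediate. For the forward direction, let $x^*$ be globally optimal to (\ref{Gzmin}) and set $w^*=\max({\rm sgn}(G_{\!\mathcal{J}\!,p}(x^*)),t^*e)$. The previous step shows that $(x^*,w^*)$ is feasible to (\ref{Gzn-equiv}) with objective value $\|G_{\!\mathcal{J}\!,p}(x^*)\|_0=s^*$. For any other feasible $(x,w)$ of (\ref{Gzn-equiv}) we have $x\in\mathcal{S}$, so $\|G_{\!\mathcal{J}\!,p}(x)\|_0\ge s^*$, and by (\ref{Gzn-chara}) the objective $\sum_i\phi(w_i)\ge\|G_{\!\mathcal{J}\!,p}(x)\|_0\ge s^*$. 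Hence $(x^*,w^*)$ is globally optimal with value $s^*$. For the converse, let $(x^*,w^*)$ be globally optimal to (\ref{Gzn-equiv}); feasibility gives $x^*\in\Omega$ and $f(x^*)\le\delta$, so $x^*\in\mathcal{S}$. Applying (\ref{Gzn-chara}) at $x^*$ yields $\sum_i\phi(w^*_i)\ge\|G_{\!\mathcal{J}\!,p}(x^*)\|_0$; combined with the forward direction (whose optimal value is $s^*$), this forces $\|G_{\!\mathcal{J}\!,p}(x^*)\|_0\le s^*$, so $x^*$ is optimal for (\ref{Gzmin}).

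There is no real obstacle here; the only step requiring care is the case analysis inside (\ref{Gzn-chara}), namely that the equilibrium constraint together with $G_{\!\mathcal{J}\!,p}(x)\ge 0$ and $w\le e$ pins $w_i=1$ on the support of $G_{\!\mathcal{J}\!,p}(x)$, and that (\ref{phi-assump}) simultaneously provides $\phi(1)=1$ (making each active term contribute exactly $1$) and $\phi(t^*)=0$ with $t^*\in[0,1)$ (so that each inactive term contributes $0$ while keeping $w$ feasible). Once these two values are lined up, the rest is bookkeeping.
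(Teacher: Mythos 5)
Your proposal is correct and follows the same route the paper intends: the paper treats Lemma \ref{Gzn-lemma1} as immediate from the variational characterization \eqref{Gzn-chara}, and you simply make that explicit by verifying \eqref{Gzn-chara} (the constraint $\langle e-w,G_{\!\mathcal{J}\!,p}(x)\rangle=0$ with $0\le w\le e$ pins $w_i=1$ on the support, while $\phi(t^*)=0$ and $\phi(1)=1$ from \eqref{phi-assump} give the exact count) and then transferring optimal values in both directions. The bookkeeping in both directions, including the use of the assumed nonemptiness of the optimal set of \eqref{Gzmin} to fix the optimal value of \eqref{Gzn-equiv} at $s^*$, is sound.
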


  Observe that the minimization problem in \eqref{Gzn-equiv} involves an equilibrium constraint
  \[
   \langle e\!-\!w,G_{\!\mathcal{J}\!,p}(x)\rangle=0,\ e\!-\!w\ge 0,\ G_{\!\mathcal{J}\!,p}(x)\ge 0.
  \]
 Lemma \ref{Gzn-lemma1} shows that the (group) zero-norm minimization problem
 is essentially an MPEC. Such an equivalent reformulation was employed in \cite{BiPan14}
 to develop a penalty decomposition method for zero-norm minimization problems,
 and used in \cite{Feng15} to study the stationary point conditions for
 zero-norm optimization problems. Next we shall establish
 the uniform partial calmness of this MPEC over its global optimal solution set.
 \begin{theorem}\label{Gnz-theorem}
  Let $\phi\in\Phi$. Suppose that there exists $\alpha>0$ such that
  $\pi_{s^*}(G_{\!\mathcal{J}\!,p}(x))\ge\alpha$ for all $x\in\!\mathcal{S}$.
  Then \eqref{Gzn-equiv} is uniformly partial calm over its global optimal solution set and
  \begin{equation}\label{Gzn-epenalty}
  \min_{x\in\mathbb{R}^n,w\in\mathbb{R}^{m}}
  \Big\{{\textstyle\sum_{i=1}^m}\,\phi(w_i)+\varrho\langle e\!-\!w,G_{\!\mathcal{J}\!,p}(x)\rangle\!:\ f(x)\le\delta,\,x\in\Omega,\,0\le w\le e\Big\}
 \end{equation}
  is a global exact penalty for the MPEC \eqref{Gzn-equiv} with threshold $\overline{\varrho}={\phi'_-(1)}/{\alpha}$.
 \end{theorem}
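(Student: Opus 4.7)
The plan is to reduce everything to Proposition~\ref{Epenalty-prop}(a): once we establish uniform partial calmness of \eqref{Gzn-equiv} over its global optimal solution set with modulus $\varrho=\phi'_-(1)/\alpha$, the global exact penalization of \eqref{Gzn-epenalty} with the claimed threshold follows immediately. Cast \eqref{Gzn-equiv} in the $({\rm MP})$ format with $\theta(x,w)=\sum_{i=1}^m\phi(w_i)$, $h(x,w)=\langle e-w,G_{\!\mathcal{J}\!,p}(x)\rangle$, and $\Delta=\{(x,w):x\in\Omega,\,f(x)\le\delta,\,0\le w\le e\}$; since both $e-w$ and $G_{\!\mathcal{J}\!,p}(x)$ are entry-wise nonnegative on $\Delta$, we have $|h|=h$, and by Lemma~\ref{Gzn-lemma1} the optimal value of \eqref{Gzn-equiv} equals $s^*$. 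Thus the target inequality is
\[
  \sum_{i=1}^m\phi(w_i)-s^*+\varrho\langle e-w,G_{\!\mathcal{J}\!,p}(x)\rangle\;\ge\;0\qquad\forall\,(x,w)\in\Delta.
\]

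The key ingredient is convexity of $\phi$ at $t=1$: the left-derivative subgradient inequality together with $\phi(1)=1$ gives $\phi(w_i)\ge 1-\phi'_-(1)(1-w_i)$ for every $w_i\in[0,1]$. I would then split the sum using a sorting argument. Fix $(x,w)\in\Delta$ and let $I_1\subseteq\{1,\ldots,m\}$ be the index set of the $s^*$ largest entries of $G_{\!\mathcal{J}\!,p}(x)$. By the standing hypothesis, $\|x_{J_i}\|_p\ge\pi_{s^*}(G_{\!\mathcal{J}\!,p}(x))\ge\alpha$ for each $i\in I_1$, so with $\varrho=\phi'_-(1)/\alpha$,
\[
  \phi(w_i)+\varrho(1-w_i)\|x_{J_i}\|_p\;\ge\;1+(1-w_i)\bigl(\varrho\|x_{J_i}\|_p-\phi'_-(1)\bigr)\;\ge\;1\qquad(i\in I_1),
\]
where the last step uses $(1-w_i)\ge 0$. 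For $i\notin I_1$, the two contributions $\phi(w_i)\ge\phi(t^*)=0$ and $\varrho(1-w_i)\|x_{J_i}\|_p\ge 0$ are both nonnegative. Summing over all $m$ indices produces a lower bound of $|I_1|\cdot 1 = s^*$, which is exactly the uniform partial calmness inequality with modulus $\varrho$.

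The only real obstacle is aligning the sorting threshold with the hypothesis: convexity of $\phi$ at $1$ leaves a deficit $\phi'_-(1)(1-w_i)$ per index, and one needs the penalty term $\varrho(1-w_i)\|x_{J_i}\|_p$ to absorb this deficit on enough indices. The assumption $\pi_{s^*}(G_{\!\mathcal{J}\!,p}(x))\ge\alpha$ on the feasible set $\mathcal{S}$ is tailored precisely so that the top $s^*$ components of $G_{\!\mathcal{J}\!,p}(x)$ carry out this absorption, while the remaining indices are harmless because $\phi\ge 0$ on $[0,1]$ and $e-w\ge 0$, $G_{\!\mathcal{J}\!,p}(x)\ge 0$. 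Once the calmness inequality is in hand, Proposition~\ref{Epenalty-prop}(a) converts it verbatim into the claimed global exact penalty with threshold $\overline{\varrho}=\phi'_-(1)/\alpha$, completing the proof.
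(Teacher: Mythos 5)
Your proposal is correct and follows essentially the same argument as the paper: a pointwise lower bound of the penalized objective by $s^*$ over the whole set $\Delta$, obtained from the convexity inequality $\phi(t)\ge\phi(1)+\phi'_-(1)(t-1)$ on $[0,1]$, the nonnegativity $\phi\ge 0$ on $[0,1]$, and the hypothesis $\pi_{s^*}(G_{\!\mathcal{J}\!,p}(x))\ge\alpha$, then converted to the exact penalty via Proposition~\ref{Epenalty-prop}(a). The only (harmless, in fact slightly streamlining) deviation is that you work directly with the index set of the $s^*$ largest entries of $G_{\!\mathcal{J}\!,p}(x)$ while leaving $w$ unsorted, whereas the paper first sorts both vectors through the rearrangement inequality $\langle w,G_{\!\mathcal{J}\!,p}(x)\rangle\le\langle\pi(w),\pi(G_{\!\mathcal{J}\!,p}(x))\rangle$ and then truncates to the top $s^*$ pairs.
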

 \begin{proof}
  Let $(x,w)$ be an arbitrary feasible point from $\mathcal{S}\times[0,e]$. Then, it holds that
  \begin{align*}
  {\textstyle\sum_{i=1}^m}\phi(w_i)+\overline{\varrho}\langle e\!-\!w,G_{\!\mathcal{J}\!,p}(x)\rangle
  &\ge {\textstyle\sum_{i=1}^m}\phi(\pi_i(w))+\overline{\varrho}\,{\textstyle\sum_{i=1}^m}\pi_i(G_{\!\mathcal{J}\!,p}(x))(1-\pi_i(w))\\
  &\ge {\textstyle\sum_{i=1}^{s^*}}\big[\phi(\pi_i(w))+\overline{\varrho}\,\pi_{s^*}(G_{\!\mathcal{J}\!,p}(x))(1-\pi_i(w))\big]\nonumber\\
  &\ge {\textstyle\sum_{i=1}^{s^*}}\big[\phi(\pi_i(w))+\phi'_{-}(1)(1-\pi_i(w))\big]\\
  &\ge s^*\phi(1)=s^*,
 \end{align*}
  where the first inequality is using
  $\langle w,G_{\!\mathcal{J}\!,p}(x)\rangle\le\langle\pi(w),\pi(G_{\!\mathcal{J}\!,p}(x))\rangle$,
  the third one is by $\pi_{s^*}(G_{\!\mathcal{J}\!,p}(x))\!\ge\alpha$ and
  $\overline{\varrho}={\phi'_-(1)}/{\alpha}$, and the last one is due to
  $\phi(t)\ge\phi(1)+\phi_{-}'(1)(t-1)$ for $t\in[0,1]$ implied by
  the convexity of $\phi$ in $[0,1]$. Since $s^*$ is the optimal value of \eqref{Gzn-equiv}
  by Lemma \ref{Gzn-lemma1}, the last inequality along with the arbitrariness of $(x,w)$
  in $\mathcal{S}\times[0,e]$ shows that \eqref{Gzn-equiv} is uniformly partial calm
  over its global optimal solution set, which by Proposition \ref{Epenalty-prop}(a) is equivalent to
  saying that \eqref{Gzn-epenalty} is a global exact penalty.
  \end{proof}

  With the function $\psi$ in \eqref{phi-psi} associated to $\phi$ and its conjugate $\psi^*$,
  we can represent the dual variable $w$ in \eqref{Gzn-epenalty} by the variable $x$,
  and obtain the following conclusion.
 \begin{corollary}\label{Gnz-corollary}
  Let $\phi\in\Phi$. Under the assumption of Theorem \ref{Gnz-theorem},
  the problem \eqref{Gzmin} has the same global optimal solution set
  as the following problem with $\varrho>{\phi'_-(1)}/{\alpha}$ does:
 \begin{equation}\label{Gnz-surrogate}
  \min_{x\in\mathbb{R}^n}
  \Big\{\varrho{\textstyle\sum_{i=1}^m}\|x_{\!_{J_i}}\|_p
        -{\textstyle\sum_{i=1}^m}\psi^*(\varrho\|x_{\!_{J_i}}\|_p):\ f(x)\le\delta,\,x\in\Omega\Big\}.
 \end{equation}
 \end{corollary}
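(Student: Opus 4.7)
The plan is to chain together the three equivalences: (\ref{Gzmin}) $\Leftrightarrow$ (\ref{Gzn-equiv}) via Lemma \ref{Gzn-lemma1}, (\ref{Gzn-equiv}) $\Leftrightarrow$ (\ref{Gzn-epenalty}) via Theorem \ref{Gnz-theorem} (global exact penalty), and (\ref{Gzn-epenalty}) $\Leftrightarrow$ (\ref{Gnz-surrogate}) via marginalization of the dual variable $w$ by Fenchel conjugacy. Since $\varrho > \phi'_-(1)/\alpha = \overline{\varrho}$ by hypothesis, Theorem \ref{Gnz-theorem} and Proposition \ref{Epenalty-prop}(a) immediately give that a pair $(x^*,w^*)$ is globally optimal for (\ref{Gzn-epenalty}) if and only if it is globally optimal for the MPEC (\ref{Gzn-equiv}), and Lemma \ref{Gzn-lemma1} then identifies the $x$-projection with the optimal set of (\ref{Gzmin}). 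So the remaining task is purely the elimination of $w$.

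For that step, I would fix an arbitrary $x$ feasible for (\ref{Gzmin}) and look at the inner problem
\[
  \min_{0\le w\le e}\Big\{{\textstyle\sum_{i=1}^m}\phi(w_i)+\varrho\langle e\!-\!w,G_{\!\mathcal{J}\!,p}(x)\rangle\Big\}.
\]
Because $\psi$ coincides with $\phi$ on $[0,1]$ and equals $+\infty$ elsewhere, the box constraints can be absorbed into $\psi$, so the problem separates into $m$ one-dimensional problems
\[
  \min_{w_i\in\mathbb{R}}\big[\psi(w_i)-\varrho\|x_{\!_{J_i}}\|_p\,w_i\big]+\varrho\|x_{\!_{J_i}}\|_p,
\]
whose optimal value is precisely $\varrho\|x_{\!_{J_i}}\|_p-\psi^*(\varrho\|x_{\!_{J_i}}\|_p)$ by the definition of the conjugate. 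Summing over $i$ shows that the marginal value function of (\ref{Gzn-epenalty}) in $x$ is exactly the objective of (\ref{Gnz-surrogate}).

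From this marginal identity I draw the two-way conclusion: $x^*$ is a global minimizer of (\ref{Gnz-surrogate}) if and only if there exists $w^*\in[0,e]^m$ (given componentwise by any maximizer in the conjugate) such that $(x^*,w^*)$ is a global minimizer of (\ref{Gzn-epenalty}). Combining this with the two earlier equivalences yields the desired coincidence of global optimal solution sets.

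I do not expect a genuine obstacle: the exact-penalty threshold has already been absorbed into the hypothesis $\varrho>\phi'_-(1)/\alpha$, and the conjugate step is routine because $\mathrm{dom}\,\psi=[0,1]$ is compact so the supremum defining $\psi^*$ is attained and $\psi^*$ is finite on all of $\mathbb{R}$. The only small point worth flagging is to verify that a maximizer $w_i^*$ of $\varrho\|x_{\!_{J_i}}\|_p\,w_i-\psi(w_i)$ indeed lies in $[0,1]$, which is automatic from $\mathrm{dom}\,\psi=[0,1]$; this guarantees that the reconstructed $w^*$ is feasible for (\ref{Gzn-epenalty}), closing the equivalence.
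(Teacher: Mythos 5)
Your proposal is correct and follows essentially the same route the paper intends: the paper leaves the corollary's proof implicit, stating only that the dual variable $w$ in \eqref{Gzn-epenalty} can be represented by $x$ via $\psi^*$, which is exactly your marginalization step, combined with Theorem \ref{Gnz-theorem} and Lemma \ref{Gzn-lemma1}. Your additional care about attainment of the inner minimum (compactness of ${\rm dom}\,\psi=[0,1]$ and lower semicontinuity of $\psi$) correctly fills in the only detail the paper glosses over.
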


 Notice that $\psi^*$ is nondecreasing and convex in $\mathbb{R}_{+}$. So,
 the function $\sum_{i=1}^m\psi^*(\varrho\|x_{\!_{J_i}}\|_p)$ is convex in $\mathbb{R}^n$.
 Thus, the objective function of \eqref{Gnz-surrogate} is locally Lipschitz
 in $\mathbb{R}^n$ by \cite[Theorem 10.4]{Roc70} and provides a class of equivalent
 Lipschitz surrogates for the group zero-norm problem \eqref{Gzmin}.
 If the feasible set of \eqref{Gzmin} is convex, it also provides a class of
 equivalent D.C. surrogates since its objective function is now the difference
 of two convex functions.

 \medskip

 To close this subsection,  we show that the assumption of Theorem \ref{Gnz-theorem} is very mild.
 \begin{lemma}\label{lemma-cond}
  Suppose that $\mathcal{S}$ is bounded, or $\Omega=\mathbb{R}^n$ and $f(x)\equiv g(Ax\!-\!b)$
  for a proper lsc coercive function $g\!:\mathbb{R}^{N}\to(-\infty,+\infty]$
  and a matrix $A\in\mathbb{R}^{N\times n}$.
  Then there exists $\alpha>0$ such that $\pi_{s^*}(G_{\!\mathcal{J}\!,p}(x))\ge\alpha$
  for all $x\in\!\mathcal{S}$.
 \end{lemma}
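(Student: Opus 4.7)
I would argue by contradiction: assume no such $\alpha$ exists, so there is a sequence $\{x^k\}\subseteq\mathcal{S}$ with $\pi_{s^*}(G_{\!\mathcal{J},p}(x^k))\to 0$. Since $\{1,\ldots,m\}$ has only finitely many subsets, after extracting a subsequence I may assume that the index set $I\subseteq\{1,\ldots,m\}$ of the (at most) $s^*\!-1$ groups of $x^k$ with the largest values of $\|x^k_{J_i}\|_p$ is independent of $k$, so that $|I|\le s^*\!-1$ and $\|x^k_{J_i}\|_p\le\pi_{s^*}(G_{\!\mathcal{J},p}(x^k))\to 0$ for every $i\notin I$. Let $V:=\{z\in\mathbb{R}^n : z_{J_i}=0\ \text{for every}\ i\notin I\}$, a finite-dimensional linear subspace, and let $u^k\in V$ be obtained from $x^k$ by zeroing out the blocks not in $I$; then $\|x^k-u^k\|\to 0$. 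If $\mathcal{S}$ is bounded I may pass to a convergent subsequence $x^k\to\bar{x}$; since $\Omega$ is closed and $f$ is lsc, $\mathcal{S}$ is closed so $\bar{x}\in\mathcal{S}$, and continuity of $G_{\!\mathcal{J},p}$ and of $\pi_{s^*}$ forces $\pi_{s^*}(G_{\!\mathcal{J},p}(\bar{x}))=0$, so $\bar{x}$ has at most $s^*\!-1$ nonzero groups, contradicting the optimality of $s^*$.

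In the second setting $\Omega=\mathbb{R}^n$ and $f(x)=g(Ax-b)$ with $g$ coercive, the sequence $\{x^k\}$ itself may be unbounded along directions in $\ker A$, so I work with its image instead. Coercivity of $g$ together with $g(Ax^k-b)\le\delta$ forces $\{Ax^k-b\}$ to be bounded; passing to a further subsequence $Ax^k-b\to u$, and lower semicontinuity of $g$ gives $g(u)\le\delta$. Combining with $\|x^k-u^k\|\to 0$ yields $Au^k-b\to u$, and since $u^k\in V$ and $A(V)$ is a finite-dimensional (hence closed) subspace of $\mathbb{R}^N$, the limit satisfies $u+b\in A(V)$. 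I then pick any preimage $\bar{z}\in V$ with $A\bar{z}=u+b$. This $\bar{z}$ lies in $\mathbb{R}^n=\Omega$, satisfies $f(\bar{z})=g(u)\le\delta$, and has group support contained in $\bigcup_{i\in I}J_i$, so $\|G_{\!\mathcal{J},p}(\bar{z})\|_0\le s^*\!-1$, again contradicting the optimality of $s^*$.

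The main obstacle is exactly the divergence issue in the second case: the pointwise limit of $\{x^k\}$ on which the argument in the first case rests simply need not exist. The remedy is to track the sequence through its bounded image $Ax^k-b$ rather than through $x^k$ itself, and to exploit the automatic closedness of $A(V)$—which comes for free once one restricts to the finite-dimensional subspace $V$—to extract a feasible point with restricted support without needing $\{u^k\}$ to converge. Lower semicontinuity of $g$ is precisely what is needed to propagate the constraint $g(u)\le\delta$ to the limit.
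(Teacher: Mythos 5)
Your proof is correct and matches the paper's argument in all essentials: the same contradiction setup with a subsequence fixing which $s^*\!-1$ groups survive, the same compactness argument when $\mathcal{S}$ is bounded, and in the second case the same device of tracking the bounded residuals $Ax^k-b$ (the paper's $\xi^k$), passing the constraint $g\le\delta$ to the limit by coercivity and lower semicontinuity, and invoking closedness of $A(V)$ --- which the paper phrases as closedness of the range of the submatrix $A_1$, citing \cite[Proposition 2.41]{BS00} --- to produce a feasible point supported on at most $s^*\!-1$ groups. Your coordinate-free packaging via the subspace $V$ and the truncated iterates $u^k$ is only a cosmetic variation on the paper's column rearrangement $A=[A_1\ \ A_2]$ with $x^k=(z^k;x_{\!J}^k)$.
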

 \begin{proof}
  Suppose the conclusion does not hold. Then there exist $\{\alpha_k\}\subseteq\mathbb{R}_{+}$
  with $\alpha_k\to 0$ and $\{x^k\}\subseteq\mathcal{S}$ such that
  $\pi_{s^*}(G_{\!\mathcal{J}\!,p}(x^k))\le\alpha_k$. We proceed the arguments by two cases.

  \medskip
  \noindent
  {\bf Case 1: $\mathcal{S}$ is bounded.} Now we may assume (if necessary taking a subsequence)
  that $x^k\to\overline{x}$. Together with $\pi_{s^*}(G_{\!\mathcal{J}\!,p}(x^k))\le\alpha_k$
  and the continuity of $\pi_{s^*}(G_{\!\mathcal{J}\!,p}(\cdot))$, it follows that
  $\pi_{s^*}(G_{\!\mathcal{J}\!,p}(\overline{x}))=0$. Notice that $\overline{x}\in\mathcal{S}$.
  This means that $\|G_{\!\mathcal{J}\!,p}(\overline{x})\|_0=s^*\!-\!1$, which contradicts
  the fact that $s^*$ is the optimal value of the problem \eqref{Gzmin}.

  \medskip
  \noindent
  {\bf Case 2: $\Omega=\mathbb{R}^n$ and $f(x)\equiv g(Ax-b)$.} Now since $\pi_{s^*}(G_{\!\mathcal{J}\!,p}(x^k))\to 0$,
  we may assume (if necessary taking a subsequence) that there exist $i_1,\ldots,i_{m-s^*+1}\in\{1,\ldots,m\}$
  such that $x_{\!J}^k\to 0$ with $J=\bigcup_{k=1}^{m-s^*+1}J_{i_k}$. Notice that
  $g(Ax^k\!-\!b)\le \delta$. By rearranging the columns of $A$ if necessary,
  for each $k$ there exists $z^k\in\mathbb{R}^{n-|J|}$ such that
  \(
    g\big([A_1\quad A_2](z^k;x_{\!J}^k)-b\big)\le \delta,
 \)
 where $[A_1\ \ A_2]=A$ with $A_1\!\in\mathbb{R}^{N\times(n-|J|)}$
 and $A_2\!\in\mathbb{R}^{N\times|J|}$. Write $A_1z^k\!=b-A_2x_{\!J}^k+\xi^k$
 for some $\{\xi^k\}$ with $g(\xi^k)\le \delta$. Since $g$ is lower semi-continuous
 and coercive, the sequence $\{\xi^k\}$ is bounded. Without loss of generality,
 we assume $\xi^k\to\overline{\xi}$ for some $\overline{\xi}$ with $g(\overline{\xi})\le \delta$.
 Since the set $A_1(\mathbb{R}^{N\times(n-|J|)})$ is closed by \cite[Proposition 2.41]{BS00},
 there exists $\overline{z}\in\mathbb{R}^{N\times(n-|J|)}$ such that
 $A_1\overline{z}=b+\overline{\xi}$, and then $g(A(\overline{z}; 0)-b)\le \delta$
 or $(\overline{z};0)\in\mathcal{S}$. Since $\|G_{\!\mathcal{J}\!,p}(\overline{z};0)\|_0\le s^*\!-\!1$,
 we obtain a contradiction to the fact that $s^*$ is the optimal value of \eqref{Gzmin}.
 \end{proof}
 \begin{remark}\label{Gzn-remark}
  When $\Omega=\mathbb{R}^n$ and $f(x)\equiv g(Ax-b)$, the coerciveness of $g$
  does not imply the boundness of $\mathcal{S}$.
  The conclusions of Theorem \ref{Gnz-theorem} and Lemma \ref{lemma-cond} extend
  the exact penalty result in \cite[Theorem 3.3]{BiPan14}. In fact, when taking
  $\phi(t)\equiv t$, $g(\cdot)\!=\|\cdot\|_2,m=n$ and $J_i=\{i\}$ for $i=1,\ldots,m$,
  one can recover the exact penalty result in \cite[Theorem 3.3]{BiPan14}.
 \end{remark}

 \subsection{Group zero-norm regularized problems}\label{sec3.2}

  This subsection is devoted itself to the group zero-norm regularized minimization problem
  \begin{equation}\label{Gzrmin}
   \min_{x\in\mathbb{R}^{n}}\!\Big\{\nu f(x)+\|G_{\!\mathcal{J}\!,p}(x)\|_0\!:\ x\in\Omega \Big\},
  \end{equation}
  where $\nu>0$ is the regularization parameter, and $f$ and $\Omega$ are same as those in Subsection \ref{sec3.1}.
  Assume that \eqref{Gzrmin} has a nonempty global solution set and write its optimal value
  as $\varpi^*$. By the characterization of the group zero-norm in \eqref{Gzn-chara},
  the following result holds.
 \begin{lemma}\label{Gzr-lemma}
  Let $\phi\in\Phi$. The group zero-norm regularized problem \eqref{Gzrmin} is equivalent to
 \begin{equation}\label{Gzr-equiv}
  \min_{x\in\mathbb{R}^n,w\in\mathbb{R}^{m}}
  \Big\{\nu f(x)+\!{\textstyle\sum_{i=1}^m}\,\phi(w_i)\!:\ \langle e\!-\!w,G_{\!\mathcal{J}\!,p}(x)\rangle=0,\,0\le w\le e,\,x\in\Omega\Big\},
 \end{equation}
 in the sense that if $x^*$ is globally optimal to \eqref{Gzrmin},
 then $(x^*\!,\max({\rm sgn}(G_{\!\mathcal{J}\!,p}(x^*)),t^*e))$ is a global
 optimal solution of \eqref{Gzr-equiv} with optimal value equal to $\varpi^*$;
 conversely, if $(x^*,w^*)$ is a global optimal solution of \eqref{Gzr-equiv},
 then $x^*$ is globally optimal to \eqref{Gzrmin}.
 \end{lemma}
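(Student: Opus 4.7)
The plan is to treat this as an almost direct transcription of Lemma \ref{Gzn-lemma1}, with the inequality constraint $f(x)\le\delta$ simply dropped and the term $\nu f(x)$ absorbed into the objective as a constant in $w$. The engine of the argument is the variational identity \eqref{Gzn-chara}, which says that for every fixed $x\in\mathbb{R}^n$,
\[
  \|G_{\!\mathcal{J},p}(x)\|_0 \;=\; \min_{w}\Big\{{\textstyle\sum_{i=1}^m}\phi(w_i):\ \langle e-w,G_{\!\mathcal{J},p}(x)\rangle=0,\ 0\le w\le e\Big\},
\]
with the minimum attained at $\widehat{w}(x):=\max({\rm sgn}(G_{\!\mathcal{J},p}(x)),t^*e)$: if the $i$th group is nonzero, complementarity forces $w_i=1$ contributing $\phi(1)=1$; if it is zero, then $w_i$ is unconstrained in $[0,1]$ and the choice $w_i=t^*$ yields $\phi(t^*)=0$, which is the minimum of $\phi$ over $[0,1]$ by the definition of the family $\Phi$. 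The number of contributing ones is exactly $\|G_{\!\mathcal{J},p}(x)\|_0$.

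Granting this identity (which is exactly what underlies Lemma \ref{Gzn-lemma1}), the proof splits into the two stated implications. For the forward direction, let $x^*$ be globally optimal for \eqref{Gzrmin}. Since $\nu f(x)$ does not depend on $w$, adding $\nu f(x)$ on both sides of the identity above and minimizing over $x\in\Omega$ gives
\[
  \varpi^* \;=\; \min_{x\in\Omega}\Big\{\nu f(x)+\|G_{\!\mathcal{J},p}(x)\|_0\Big\}
  \;=\; \min_{x\in\Omega,\,w\in[0,e]}\Big\{\nu f(x)+{\textstyle\sum_{i=1}^m}\phi(w_i):\ \langle e-w,G_{\!\mathcal{J},p}(x)\rangle=0\Big\},
\]
so the two optimal values coincide. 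Taking $w^*=\widehat{w}(x^*)$ then produces a feasible pair $(x^*,w^*)$ for \eqref{Gzr-equiv} whose objective value equals $\nu f(x^*)+\|G_{\!\mathcal{J},p}(x^*)\|_0=\varpi^*$, hence it is globally optimal.

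For the converse, let $(x^*,w^*)$ be globally optimal to \eqref{Gzr-equiv}. Since $x^*\in\Omega$, $x^*$ is feasible for \eqref{Gzrmin}. As $w^*$ is admissible in the inner minimization above at $x=x^*$, we have $\sum_{i=1}^m\phi(w_i^*)\ge\|G_{\!\mathcal{J},p}(x^*)\|_0$, so
\[
  \nu f(x^*)+\|G_{\!\mathcal{J},p}(x^*)\|_0 \;\le\; \nu f(x^*)+{\textstyle\sum_{i=1}^m}\phi(w_i^*) \;=\; \varpi^*,
\]
which, combined with the equality of optimal values, forces $x^*$ to be globally optimal for \eqref{Gzrmin}.

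The only place where a small amount of care is needed is the verification that $\widehat{w}(x)$ really is a minimizer in \eqref{Gzn-chara}; this uses $t^*<1$ together with the convexity of $\phi$ on $[0,1]$ and $\phi(t^*)=0,\,\phi(1)=1$ from \eqref{phi-assump}. Everything else is bookkeeping, so I do not anticipate any genuine obstacle; the result is essentially a regularized-form twin of Lemma \ref{Gzn-lemma1} and follows by the same mechanism.
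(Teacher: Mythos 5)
Your proof is correct and follows exactly the route the paper intends: the paper states Lemma \ref{Gzr-lemma} as an immediate consequence of the variational characterization \eqref{Gzn-chara}, and your argument---verifying that $\max({\rm sgn}(G_{\!\mathcal{J},p}(x)),t^*e)$ attains the inner minimum via complementarity on nonzero groups and $\phi(t^*)=0$ on zero groups, then combining the inner and outer minimizations---is precisely the bookkeeping the paper leaves implicit. No gap; both directions, including the use of $t^*<1$, $\phi(t^*)=0$, $\phi(1)=1$, are handled correctly.
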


 Lemma \ref{Gzr-lemma} states that the group zero-norm regularized problem
 is also an MPEC. Next, under a suitable restriction on $\Omega$,
 we show that the MPEC \eqref{Gzr-equiv} is uniformly partial calm in its global solution set.
 To this end, for any $x$ and $\varrho\!>0$, with $\phi\in\Phi$ define
 \begin{equation}\label{xrho}
  x_j^{\varrho}:=\left\{\begin{array}{cl}
                   0 & {\rm if}\ j\in\bigcup_{i\notin{\rm supp}(y(x,\varrho))}J_i,\\
                 x_j &{\rm otherwise}
                 \end{array}\right.\ \  {\rm for}\ j=1,2,\ldots,n
 \end{equation}
 with
 \begin{equation}\label{yrho}
   y_i(x,\varrho):=\left\{\begin{array}{cl}
                   \!\|x_{\!_{J_i}}\|_p & {\rm if}\ \varrho\|x_{\!_{J_i}}\|_p>\phi_{-}'(1),\\
                       0 &{\rm otherwise}
                 \end{array}\right.\ \  {\rm for}\ i=1,2,\ldots,m.
 \end{equation}

 \vspace{-0.2cm}
 \begin{theorem}\label{Grz-theorem2}
  Let $\phi\in\Phi$. Suppose that $f$ is Lipschitzian relative to $\Omega$
  with constant $L_{\!f}>0$, and for any given $x\in\Omega$ and $\varrho>0$,
  the vector $x^{\varrho}$ lies in $\Omega$. Then, the MPEC \eqref{Gzr-equiv}
  is uniformly partial calm in its global optimal solution set, or equivalently
  the problem
 \begin{equation}\label{Gzr-epenalty}
  \min_{x\in\mathbb{R}^n,w\in\mathbb{R}^{m}}
  \Big\{\nu f(x)+{\textstyle\sum_{i=1}^m}\,\phi(w_i)+\varrho\langle e\!-\!w,G_{\!\mathcal{J}\!,p}(x)\rangle\!:\ x\in\Omega,\,0\le w\le e\Big\}
 \end{equation}
  is a global exact penalty of \eqref{Gzr-equiv} with threshold
  $\overline{\varrho}=\frac{\phi_{-}'(1)(1-t^*)\beta\nu L_{\!f}}{1-t_0}$,
  where $t_0\in[0,1)$ is the constant in Lemma \ref{lemma-phi} of Appendix A
  and $\beta\!=\max\Big(1,\max_{1\le i\le m}|J_i|^{\frac{p-2}{2p}}\Big)$.
 \end{theorem}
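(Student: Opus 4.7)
The plan is to invoke Proposition~\ref{Epenalty-prop}(a), which reduces the claim to establishing the single uniform inequality
\[
  \nu f(x) + \sum_{i=1}^{m}\phi(w_i) + \overline{\varrho}\,\langle e-w,\,G_{\!\mathcal{J}\!,p}(x)\rangle \;\ge\; \varpi^{*}
\]
for every $(x,w)\in\Omega\times[0,e]$. The natural witness is the truncated vector $x^{\overline{\varrho}}$ defined in \eqref{xrho}--\eqref{yrho}: by the standing hypothesis on $\Omega$ it lies in $\Omega$ and is therefore feasible for \eqref{Gzrmin}, and writing $I={\rm supp}(y(x,\overline{\varrho}))$ one has $\|G_{\!\mathcal{J}\!,p}(x^{\overline{\varrho}})\|_{0}=|I|$, so that $\varpi^{*}\le \nu f(x^{\overline{\varrho}})+|I|$.

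First I would exploit the Lipschitz continuity of $f$ on $\Omega$ together with the elementary bounds $\|x-x^{\overline{\varrho}}\|_{2}\le \sum_{i\notin I}\|x_{J_i}\|_{2}$ and $\|x_{J_i}\|_{2}\le \beta\|x_{J_i}\|_{p}$ (which is exactly what the definition of $\beta$ is tailored to) to obtain $\nu f(x^{\overline{\varrho}}) - \nu f(x) \le \nu L_{\!f}\beta\sum_{i\notin I}\|x_{J_i}\|_{p}$. Combined with the upper estimate on $\varpi^{*}$, the target inequality reduces to
\[
  \sum_{i=1}^{m}\!\bigl[\phi(w_i)+\overline{\varrho}(1-w_i)\|x_{J_i}\|_{p}\bigr]
  \;\ge\; |I| + \nu L_{\!f}\beta\sum_{i\notin I}\|x_{J_i}\|_{p},
\]
which I would then establish termwise.

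Next I would split the sum along $I$ and $I^{c}$. For $i\in I$ the definition of $I$ yields $\overline{\varrho}\|x_{J_i}\|_{p}>\phi'_{-}(1)$, and the tangent inequality $\phi(w_i)\ge 1-\phi'_{-}(1)(1-w_i)$ at $t=1$ immediately gives $\phi(w_i)+\overline{\varrho}(1-w_i)\|x_{J_i}\|_{p}\ge 1$, contributing exactly $|I|$ in total. For $i\notin I$ one has the complementary bound $\|x_{J_i}\|_{p}\le \phi'_{-}(1)/\overline{\varrho}$, and I would split further on whether $w_i\le t_0$ or $w_i>t_0$. In the first subcase, the specific form of $\overline{\varrho}$ combined with the easy convexity consequence $\phi'_{-}(1)(1-t^{*})\ge 1$ gives $\overline{\varrho}(1-w_i)\ge \overline{\varrho}(1-t_0)=\phi'_{-}(1)(1-t^{*})\beta\nu L_{\!f}\ge \nu L_{\!f}\beta$, which alone covers the $i$-th target. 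In the second subcase, using the tangent bound together with the upper bound on $\|x_{J_i}\|_{p}$ reduces what remains to $\phi(w_i)+\phi'_{-}(1)(1-w_i)\ge (1-t_0)/(1-t^{*})$ on $[t_0,1]$, which is precisely the property of $\phi$ supplied by Lemma~\ref{lemma-phi} of Appendix~A.

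The hard part is the bookkeeping in this last subcase: the numerator $\phi'_{-}(1)(1-t^{*})$ and denominator $1-t_0$ appearing in the prescribed $\overline{\varrho}$ are chosen exactly so that the bound $\overline{\varrho}(1-t_0)\ge \nu L_{\!f}\beta$ needed for $w_i\le t_0$ and the threshold $\phi'_{-}(1)\nu L_{\!f}\beta/\overline{\varrho}=(1-t_0)/(1-t^{*})$ needed for $w_i>t_0$ close simultaneously, so the particular choice of $t_0$ from Lemma~\ref{lemma-phi} plays an essential role. Once the termwise estimates are assembled, the uniform partial calmness inequality with constant $\overline{\varrho}$ is proved, and Proposition~\ref{Epenalty-prop}(a) then delivers the global exact penalization of \eqref{Gzr-epenalty}.
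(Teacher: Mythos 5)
Your proposal is correct, and it shares the paper's skeleton: both arguments reduce the claim via Proposition~\ref{Epenalty-prop}(a) to a single uniform inequality over $\Omega\times[0,e]$, both use the truncation $x^{\overline{\varrho}}\in\Omega$ from \eqref{xrho}--\eqref{yrho} as the feasible witness for $\varpi^*$, both pass from $\|\cdot\|_p$ to $\|\cdot\|_2$ through the constant $\beta$ before applying the Lipschitz property of $f$, and both ultimately rest on the constant $t_0$ of Lemma~\ref{lemma-phi}. Where you genuinely differ is in the termwise bookkeeping: the paper first minimizes each term $\phi(t)+\overline{\varrho}\|x_{\!_{J_i}}\|_p(1-t)$ over $t\in[0,1]$ and then cites Lemma~\ref{lemma-phi}'s three-regime conclusion wholesale, partitioning indices by the magnitude of $\overline{\varrho}\|x_{\!_{J_i}}\|_p$ (the support of $y(x,\overline{\varrho})$, the band $I$ between $(1-t^*)^{-1}$ and $\phi_-'(1)$, and $\overline{I}$), whereas you keep the dual variable $w_i$ in play, split on $i\in I$ versus $i\notin I$ and then on $w_i$ relative to $t_0$, and invoke Lemma~\ref{lemma-phi} only once, through its middle-regime bound at $\varrho\omega=\phi_-'(1)$, namely $\min_{t\in[0,1]}\{\phi(t)+\phi_-'(1)(1-t)\}\ge(1-t_0)/(1-t^*)$. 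Your route makes transparent why each factor of $\overline{\varrho}=\phi_-'(1)(1-t^*)\beta\nu L_{\!f}/(1-t_0)$ is forced, at the cost of redoing part of the lemma's work inline; the paper's route is shorter precisely because Lemma~\ref{lemma-phi} prepackages the minimization over $t$. One step you should make explicit: in your subcase $i\notin I$, $w_i>t_0$, substituting the upper bound $\|x_{\!_{J_i}}\|_p\le\phi_-'(1)/\overline{\varrho}$ into the target inequality is only legitimate when the coefficient $\nu L_{\!f}\beta-\overline{\varrho}(1-w_i)$ multiplying $\|x_{\!_{J_i}}\|_p$ is nonnegative (otherwise the substitution weakens the wrong side). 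The clean fix is to observe that $\phi(w_i)+\bigl[\overline{\varrho}(1-w_i)-\nu L_{\!f}\beta\bigr]\|x_{\!_{J_i}}\|_p$ is affine in $\|x_{\!_{J_i}}\|_p$ and check the two endpoints: at $\|x_{\!_{J_i}}\|_p=0$ nonnegativity of $\phi$ on $[0,1]$ suffices, and at $\|x_{\!_{J_i}}\|_p=\phi_-'(1)/\overline{\varrho}$ the requirement is exactly your displayed inequality, which by Lemma~\ref{lemma-phi} holds for all $w_i\in[0,1]$ --- so in fact your split at $t_0$ is dispensable. With that observation the argument is airtight and yields the same threshold $\overline{\varrho}$ as the paper.
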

 \begin{proof}
  Let $(x,w)$ be an arbitrary point in $\Omega\times\![0,e]$. Define the following index sets
  \[
    I=\Big\{i\!:\ \frac{1}{1-t^*}\le\overline{\varrho}\|x_{\!_{J_i}}\|_p\le\phi_{-}'(1)\Big\}
    \ \ {\rm and}\ \
    \overline{I}=\Big\{i\!:\ 0\le\overline{\varrho}\|x_{\!_{J_i}}\|_p<\frac{1}{1-t^*}\Big\}.
  \]
  By using Lemma \ref{lemma-phi} of Appendix A with $\omega=\|x_{\!_{J_i}}\|_p$,
  there exists $t_0\in[0,1)$ such that
  \begin{align*}
  &{\textstyle\sum_{i=1}^m}\phi(w_i)+\overline{\varrho}\langle e\!-\!w,G_{\!\mathcal{J}\!,p}(x)\rangle
  ={\textstyle\sum_{i=1}^m}\big[\phi(w_i)+\overline{\varrho}(1-w_i)\|x_{\!_{J_i}}\|_p\big]\\
  &\ge\|y(x,\overline{\varrho})\|_0+\frac{\overline{\varrho}(1\!-t_0)}{\phi_{-}'(1)(1\!-t^*)}
  {\textstyle\sum_{i\in I}}\|x_{\!_{J_i}}\|_p +\overline{\varrho}(1\!-t_0){\textstyle\sum_{i\in \overline{I}}}\|x_{\!_{J_i}}\|_p\\
  &\ge\|y(x,\overline{\varrho})\|_0+\beta\nu L_{\!f}{\textstyle\sum_{i\in I\cup\overline{I}}}\,\|x_{\!_{J_i}}\|_p
  \ge\|y(x,\overline{\varrho})\|_0+\nu L_{\!f}\|x\!-\!x^{\overline{\varrho}}\|_2\nonumber\\
  &\ge \|G_{\!\mathcal{J}\!,p}(x^{\overline{\varrho}})\|_0+\nu(f(x^{\overline{\varrho}})-f(x))
 \end{align*}
  where the first inequality is using the definition of $y(x,\overline{\varrho})$,
  the second one is using $0=\phi(t^*)\ge\phi(1)+\phi_{-}'(1)(t^*-1)$ by
  the convexity of $\phi$ in $[0,1]$, and the last one is due to the Lipschitz
  of $f$ relative to $\Omega$ and $x^{\overline{\varrho}}\in\Omega$.
  From the last inequality and $x^{\overline{\varrho}}\in\Omega$, we have
  \[
   \nu f(x)+{\textstyle\sum_{i=1}^m}\phi(w_i)+\!\overline{\varrho}\langle e\!-\!w,G_{\!\mathcal{J}\!,p}(x)\rangle
   \ge\|G_{\!\mathcal{J}\!,p}(x^{\overline{\varrho}})\|_0+\nu f(x^{\overline{\varrho}})\ge\varpi^*.
  \]
  This, by the arbitrariness of $(x,w)$ in $\Omega\times\![0,e]$, shows
  that the MPEC \eqref{Gzr-equiv} is uniformly partial calm
  over its optimal solution set. The proof is completed.
 \end{proof}
 \begin{remark}\label{Gzr-remark}
  When $\Omega$ takes $\mathbb{R}^n,\,\mathbb{R}_{+}^n$,
  $\{x\in\mathbb{R}^n\ |\ \|x\|_p\le \gamma\}$ or
  $\{x\in\mathbb{R}_{+}^n\ |\ \|x\|_p\le \gamma\}$ for some $\gamma>0$,
  one may check that for any $x\in\Omega$ and $\varrho\!>0$,
  the associated $x^{\varrho}$ lies in $\Omega$.
 \end{remark}
 \begin{corollary}\label{Grz-corollary}
  Let $\phi\in\Phi$. Under the assumption of Theorem \ref{Grz-theorem2},
  there exists $\varrho^*>0$ such that \eqref{Gzrmin} has the same global
  optimal solution set as \eqref{Grz-surrogate} with $\varrho\ge\varrho^*$ does:
 \begin{equation}\label{Grz-surrogate}
  \min_{x\in\mathbb{R}^n}
  \Big\{\nu f(x)+\varrho{\textstyle\sum_{i=1}^m}\|x_{\!_{J_i}}\|_p
        -{\textstyle\sum_{i=1}^m}\psi^*(\varrho\|x_{\!_{J_i}}\|_p):\ x\in\Omega\Big\}.
 \end{equation}
 \end{corollary}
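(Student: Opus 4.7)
The plan is to chain together three equivalences: (i) between the regularized problem \eqref{Gzrmin} and its MPEC reformulation \eqref{Gzr-equiv}, supplied by Lemma \ref{Gzr-lemma}; (ii) between the MPEC \eqref{Gzr-equiv} and its penalized form \eqref{Gzr-epenalty}, supplied by Theorem \ref{Grz-theorem2} together with Proposition \ref{Epenalty-prop}(a); and (iii) between \eqref{Gzr-epenalty} and the L-surrogate \eqref{Grz-surrogate}, obtained by partial minimization in the dual variable $w$. I would simply take $\varrho^{*}=\overline{\varrho}$ from Theorem \ref{Grz-theorem2}.

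For the first two links I would just cite: for any $\varrho>\varrho^{*}$ the penalized problem \eqref{Gzr-epenalty} and the MPEC \eqref{Gzr-equiv} have identical global optimal solution sets (in $(x,w)$), and projecting onto the $x$-coordinate via Lemma \ref{Gzr-lemma} identifies the optimal $x$-components with those of \eqref{Gzrmin}. So everything reduces to showing that the $x$-component projection of the optimal set of \eqref{Gzr-epenalty} agrees with the optimal set of \eqref{Grz-surrogate}.

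The key computation is partial minimization in $w$ for fixed $x$. Rewriting the objective as
\[
\nu f(x)+\varrho\sum_{i=1}^{m}\|x_{J_i}\|_p+\sum_{i=1}^{m}\bigl[\phi(w_i)-\varrho\|x_{J_i}\|_{p}w_i\bigr],
\]
the inner minimization over $w\in[0,e]$ separates coordinatewise, and by \eqref{phi-psi} each scalar subproblem $\min_{w_i\in[0,1]}\phi(w_i)-\varrho\|x_{J_i}\|_{p}w_i$ equals $\min_{w_i\in\mathbb{R}}\psi(w_i)-\varrho\|x_{J_i}\|_{p}w_i=-\psi^{*}(\varrho\|x_{J_i}\|_{p})$ by the definition of the Fenchel conjugate. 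Plugging this value back into \eqref{Gzr-epenalty} yields exactly the surrogate objective in \eqref{Grz-surrogate}. Since $\psi$ is proper lsc convex with compact effective domain $[0,1]$, the $\arg\min$ in $w$ is always attained, so for any $x^{*}$ solving \eqref{Grz-surrogate} we can reconstruct a $w^{*}$ so that $(x^{*},w^{*})$ solves \eqref{Gzr-epenalty}; conversely any solution $(x^{*},w^{*})$ of \eqref{Gzr-epenalty} has $x^{*}$-component solving \eqref{Grz-surrogate}.

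Combining the three equivalences gives the stated identity of optimal solution sets for $\varrho\ge\varrho^{*}$. There is no real obstacle here, since the corollary is essentially a bookkeeping consequence of Lemma \ref{Gzr-lemma} and Theorem \ref{Grz-theorem2}; the only item requiring attention is verifying that the partial minimization is legitimate, namely that the componentwise Fenchel conjugate formula applies and that the attained $\arg\min$ exists so that the $x$-projection of the optimal set behaves as claimed.
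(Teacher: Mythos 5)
Your proposal follows essentially the same route as the paper: Lemma \ref{Gzr-lemma} combined with the global exact penalty of Theorem \ref{Grz-theorem2} (via Proposition \ref{Epenalty-prop}(a)), and then elimination of the dual variable through the coordinatewise conjugate identity $\min_{w_i\in[0,1]}\{\phi(w_i)-\varrho\|x_{J_i}\|_p w_i\}=-\psi^*(\varrho\|x_{J_i}\|_p)$, which is exactly how the paper represents $w$ by $x$ to obtain \eqref{Grz-surrogate}, and your attainment and projection arguments are sound. The one adjustment needed is to take $\varrho^*$ strictly larger than $\overline{\varrho}$ (e.g.\ $\varrho^*=2\overline{\varrho}$): by the paper's definition of the exact-penalty threshold, coincidence of the global optimal solution sets is guaranteed only for penalty parameters $\varrho>\overline{\varrho}$, so your choice $\varrho^*=\overline{\varrho}$ together with the corollary's condition $\varrho\ge\varrho^*$ would include the borderline value $\varrho=\overline{\varrho}$, for which only the inclusion of the MPEC solutions into the penalized solutions is secured (this is also why you later write the strict inequality $\varrho>\varrho^*$, inconsistently with your own choice).
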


 The problem \eqref{Grz-surrogate} provides a class of equivalent Lipschitz surrogates
 for the group zero-norm regularized problem \eqref{Gzrmin}. If in addition the function
 $f$ and the set $\Omega$ are convex, it also provides a class of equivalent D.C. surrogates
 for the problem \eqref{Gzrmin}.
 \section{Equivalent L-surrogates of rank optimization problems}\label{sec4}

  Low-rank optimization problems arise frequently from control and system identification,
  statistics, machine learning, signal and image processing and finance
  (see, e.g., \cite{Fazel02,Recht10,Candes09,Negahban11,Pietersz04}).
  By \cite[Lemma 3.1]{BiPan17}, for any given $X\in\mathbb{R}^{n_1\times n_2}$,
  with $\phi\in\Phi$ it holds that
 \begin{equation}\label{rank-chara}
  {\rm rank}(X)=\min_{W\in\mathbb{R}^{n_1\times n_2}}
  \!\Big\{{\textstyle\sum_{i=1}^{n_1}}\phi(\sigma_i(W))\!:\ \|X\|_*-\langle W, X\rangle=0,\|W\|\leq 1\Big\}.
 \end{equation}
  Notice that $\|X\|_*\!-\langle W,X\rangle=0,\|W\|\le 1$ iff
  $X\in\mathcal{N}_{\mathfrak{B}}(W)$, where $\mathcal{N}_{\mathfrak{B}}(W)$
  is the normal cone to $\mathfrak{B}$ at $W$ in the sense of convex analysis.
  Equation \eqref{rank-chara} shows that the rank function is actually
  an optimal value function of a parameterized equilibrium problem.

 \subsection{Rank minimization problems}\label{sec4.1}

 Let $f\!:\mathbb{R}^{n_1\times n_2}\to(-\infty,+\infty]$ be a proper lsc function,
 and let $\Omega\subseteq\mathbb{R}^{n_1\times n_2}$ be a closed set.
 Given a noise level $\delta>0$, this subsection concerns with
 the rank minimization problem
 \begin{equation}\label{rankmin}
  \min_{X\in \mathbb{R}^{n_1\times n_2}}\!\Big\{ {\rm rank}(X)\!: f(X)\le \delta,\, X\in\Omega \Big\}.
 \end{equation}
 We assume that this problem has a nonempty global optimal solution set
 and a nonzero optimal value, denoted by $r^*$. Denote by $\mathcal{S}$
 the feasible set of \eqref{rankmin}. From the variational characterization
 of the rank function in \eqref{rank-chara}, it is easy to obtain the following result.
 \begin{lemma}\label{rankmin-lemma1}
  Let $\phi\in\Phi$. The rank minimization problem \eqref{rankmin} can be reformulated as
 \begin{equation}\label{rankmin-equiv}
  \min_{X,W\in\mathbb{R}^{n_1\times n_2}}\!\Big\{{\textstyle\sum_{i=1}^{n_1}}\,\phi(\sigma_i(W))\!:
    \|X\|_*-\!\langle W,X\rangle=0,\,\|W\|\le 1,X\in \Omega,\,f(X)\le \delta\Big\}
 \end{equation}
 in the sense that if $X^*\!=U^*[{\rm Diag}(\sigma(X^*))\ \ 0](V^*)^{\mathbb{T}}$
 is a global optimal solution of \eqref{rankmin},
 then $(X^*\!,U_1^*(V_1^*)^{\mathbb{T}}\!+\!t^*U_2^*[{\rm Diag}(e)\ 0](V_2^*)^{\mathbb{T}})$
 is globally optimal to the problem \eqref{rankmin-equiv};
 and conversely, if $(X^*\!,W^*)$ is globally optimal to \eqref{rankmin-equiv},
 then $X^*$ is globally optimal to \eqref{rankmin}.
 \end{lemma}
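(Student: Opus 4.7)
The plan is to shuttle between problems \eqref{rankmin} and \eqref{rankmin-equiv} using the variational characterization \eqref{rank-chara} of the rank function, which shows that for any fixed $X$, the inner infimum over $W$ in the reformulated problem equals $\mathrm{rank}(X)$. Both directions then follow once we verify that the inner minimum is attained by the SVD-based formula in the lemma statement, so that the overall optimal value of \eqref{rankmin-equiv} coincides with $r^*$.

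For the forward direction, take an optimal $X^*$ of \eqref{rankmin} with SVD $X^* = U^*[\mathrm{Diag}(\sigma(X^*))\ 0](V^*)^{\mathbb{T}}$ and rank $r = r^*$, and set $W^* := U_1^*(V_1^*)^{\mathbb{T}} + t^* U_2^*[\mathrm{Diag}(e)\ 0](V_2^*)^{\mathbb{T}}$. Three short verifications suffice: (i) using the orthogonality relations $(U_1^*)^{\mathbb{T}} U_2^* = 0$ and $(V_1^*)^{\mathbb{T}} V_2^* = 0$, the two summands admit a joint SVD whose singular-value vector is $(1,\ldots,1,t^*,\ldots,t^*)$ with $r$ ones and $n_1-r$ copies of $t^*$, so $\|W^*\|=1$; (ii) the trace inner product $\langle W^*, X^*\rangle$ collapses to $\sum_{i=1}^{r}\sigma_i(X^*) = \|X^*\|_*$, so the equilibrium constraint holds; (iii) by \eqref{phi-assump}, the objective value is $r\phi(1) + (n_1-r)\phi(t^*) = r$. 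To conclude optimality of $(X^*,W^*)$ in \eqref{rankmin-equiv}, I would note that any feasible $(X,W)$ has $X\in\mathcal{S}$, whence $\mathrm{rank}(X)\ge r^*$, and invoking \eqref{rank-chara} for this fixed $X$ gives $\sum_{i}\phi(\sigma_i(W)) \ge \mathrm{rank}(X)\ge r^*$, matching the value $r^*$ achieved at $(X^*,W^*)$.

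The converse is then almost immediate. If $(X^*,W^*)$ is globally optimal in \eqref{rankmin-equiv}, the forward direction has already shown the optimal value is $r^*$. Then $X^*$ is feasible for \eqref{rankmin}, so $\mathrm{rank}(X^*) \ge r^*$, while the feasibility of $(X^*,W^*)$ together with \eqref{rank-chara} forces $\mathrm{rank}(X^*) \le \sum_{i}\phi(\sigma_i(W^*)) = r^*$. Hence $\mathrm{rank}(X^*) = r^*$ and $X^*$ solves \eqref{rankmin}.

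The only step I expect to require careful bookkeeping is (i), where one must handle the rectangular case $n_1 < n_2$ and confirm that $U_1^*(V_1^*)^{\mathbb{T}}$ and $t^* U_2^*[\mathrm{Diag}(e)\ 0](V_2^*)^{\mathbb{T}}$ combine into a well-defined SVD with the claimed singular-value profile; this is what allows the objective $\sum_i \phi(\sigma_i(W^*))$ to collapse neatly to $r^*$. Everything else reduces to invoking \eqref{rank-chara} and the normalization $\phi(t^*)=0$, $\phi(1)=1$ from \eqref{phi-assump}.
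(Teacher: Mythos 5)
Your proposal is correct and takes essentially the same route as the paper, which states the lemma as an immediate consequence of the variational characterization \eqref{rank-chara} of the rank function: your verifications (i)--(iii) that $W^*=U_1^*(V_1^*)^{\mathbb{T}}+t^*U_2^*[{\rm Diag}(e)\ 0](V_2^*)^{\mathbb{T}}$ is feasible with objective value $r^*\phi(1)+(n_1-r^*)\phi(t^*)=r^*$, together with the lower bound $\sum_{i}\phi(\sigma_i(W))\ge{\rm rank}(X)\ge r^*$ from \eqref{rank-chara} for any feasible pair, are exactly the details the paper leaves implicit. The bookkeeping in step (i) goes through as you expect, since $[{\rm Diag}(e)\ 0]$ selects the columns $v_{r^*+1},\ldots,v_{n_1}$ of $V_2^*$, giving the joint SVD with singular value profile $(1,\ldots,1,t^*,\ldots,t^*)$.
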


 Lemma \ref{rankmin-lemma1} shows that the problem \eqref{rankmin}
 is equivalent to the matrix MPEC \eqref{rankmin-equiv} which,
 by the following theorem, is uniformly
 partial calm in the global optimal solution set.
 \begin{theorem}\label{theorem1-rankmin}
  Let $\phi\in\Phi$. Suppose $\mathcal{S}$ is compact. Then, the following statements hold:
  \begin{itemize}
    \item [(a)] there exists a constant $\alpha\!>0$ such that for all $X\!\in\!\mathcal{S}$,
                $\sigma_{r^*}(X)\!\ge\alpha$;

    \item [(b)] the problem \eqref{rankmin-equiv} is uniformly partial calm over its global
                optimal solution set and the following problem is a global exact penalty
                with threshold $\overline{\varrho}={\phi'_-(1)}/{\alpha}$:
                \begin{equation*}
                 \min_{X,W\in\mathbb{R}^{n_1\times n_2}}\Big\{{\textstyle\sum_{i=1}^{n_1}}\,\phi(\sigma_i(W))+\varrho(\|X\|_*\!-\!\langle W,X\rangle)\!: \|W\|\le 1,X\in \Omega,f(X)\le \delta\Big\}.
                \end{equation*}

  \end{itemize}
 \end{theorem}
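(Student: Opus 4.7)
\bigskip
\noindent
\textbf{Proof proposal.} The statement has two parts, and I would handle them in sequence, with part (a) feeding directly into part (b).

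For part (a), I would argue by contradiction, mimicking \textbf{Case 1} of Lemma \ref{lemma-cond}. If no such $\alpha$ exists, there is a sequence $\{X^k\}\subseteq\mathcal{S}$ with $\sigma_{r^*}(X^k)\to 0$. Since $\mathcal{S}$ is compact, after passing to a subsequence $X^k\to\overline{X}\in\mathcal{S}$. The singular value map $X\mapsto\sigma_{r^*}(X)$ is continuous, so $\sigma_{r^*}(\overline{X})=0$, whence ${\rm rank}(\overline{X})\le r^*-1$. Combined with $\overline{X}\in\mathcal{S}$, this contradicts the fact that $r^*$ is the optimal value of \eqref{rankmin}.

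For part (b), by Proposition \ref{Epenalty-prop}(a) it suffices to establish uniform partial calmness of \eqref{rankmin-equiv} over its global optimal solution set with modulus $\overline{\varrho}=\phi'_-(1)/\alpha$. Recalling from Lemma \ref{rankmin-lemma1} that the optimal value of \eqref{rankmin-equiv} equals $r^*$, and using $\|W\|\le 1$ (hence $\sigma_i(W)\in[0,1]$) together with the von Neumann trace inequality $\langle W,X\rangle\le\langle\sigma(W),\sigma(X)\rangle$, I would show that for any feasible pair $(X,W)$ with $X\in\mathcal{S}$, $\|W\|\le 1$,
\begin{align*}
 &{\textstyle\sum_{i=1}^{n_1}}\phi(\sigma_i(W))+\overline{\varrho}(\|X\|_*-\langle W,X\rangle)\\
 &\ge{\textstyle\sum_{i=1}^{n_1}}\phi(\sigma_i(W))+\overline{\varrho}{\textstyle\sum_{i=1}^{n_1}}\sigma_i(X)(1-\sigma_i(W))\\
 &\ge{\textstyle\sum_{i=1}^{r^*}}\big[\phi(\sigma_i(W))+\overline{\varrho}\,\sigma_{r^*}(X)(1-\sigma_i(W))\big]\\
 &\ge{\textstyle\sum_{i=1}^{r^*}}\big[\phi(\sigma_i(W))+\phi'_-(1)(1-\sigma_i(W))\big]\ge r^*\phi(1)=r^*,
\end{align*}
where the second step drops the nonnegative tail terms (using $\phi\ge 0$ on $[0,1]$ since $\phi(t^*)=0$ is the minimum there, and $(1-\sigma_i(W))\sigma_i(X)\ge 0$), the third uses part (a), and the final step uses the subgradient inequality $\phi(t)\ge\phi(1)+\phi'_-(1)(t-1)$ on $[0,1]$ together with $\phi(1)=1$. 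The conclusion then follows by taking the infimum over feasible $(X,W)$ and invoking Proposition \ref{Epenalty-prop}(a).

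The main subtlety, and the step I expect to need most care with, is the first inequality, which requires the von Neumann trace inequality in the form $\langle W,X\rangle\le\sum_i\sigma_i(W)\sigma_i(X)$; this plays the same role as the sorting rearrangement inequality $\langle w,G_{\!\mathcal{J}\!,p}(x)\rangle\le\langle\pi(w),\pi(G_{\!\mathcal{J}\!,p}(x))\rangle$ in the proof of Theorem \ref{Gnz-theorem}, and it is what ultimately allows the bound via the $r^*$-th largest singular value $\sigma_{r^*}(X)$. The rest of the argument is a verbatim singular-value analogue of the vector proof.
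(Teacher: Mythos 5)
Your proposal is correct and follows essentially the same route as the paper: part (a) via the compactness--continuity contradiction argument that the paper delegates to Lemma \ref{lemma-cond} (Case 1), and part (b) via the same four-step chain of inequalities (von Neumann's trace inequality, dropping the nonnegative tail for $i>r^*$, the bound $\overline{\varrho}\,\sigma_i(X)\ge\phi'_-(1)$ from (a), and the subgradient inequality $\phi(t)\ge\phi(1)+\phi'_-(1)(t-1)$), concluding uniform partial calmness and invoking Proposition \ref{Epenalty-prop}(a). Your lower bound using $\sigma_{r^*}(X)$ in place of the paper's $\sigma_i(X)$ for $i\le r^*$ is a harmless minor variant, valid since $\sigma_i(X)\ge\sigma_{r^*}(X)$ and $1-\sigma_i(W)\ge 0$.
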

 \begin{proof}
  By using the contradiction arguments as those for Lemma \ref{lemma-cond},
  one may get (a). To prove part (b), let $(X,W)$ be an arbitrary point from
  the set $\mathcal{S}\times\mathfrak{B}$. Then,
  \begin{align*}
  {\textstyle\sum_{i=1}^{n_1}} \phi(\sigma_i(W))+\overline{\varrho}\,(\|X\|_*-\langle W, X\rangle)
  &\ge{\textstyle\sum_{i=1}^{n_1}}\big[\phi(\sigma_i(W))+\overline{\varrho}\,\sigma_i(X)(1-\sigma_i(W))\big]\nonumber\\
  &\ge {\textstyle\sum_{i=1}^{r^*}}\big[\phi(\sigma_i(W))+\overline{\varrho}\,\sigma_i(X)(1-\sigma_i(W))\big]\nonumber\\
  &\ge {\textstyle\sum_{i=1}^{r^*}}\big[\phi(\sigma_i(W))+\phi'_-(1) (1-\sigma_i(W))\big]\\
  &\ge r^*\phi(1)=r^*
 \end{align*}
  where the first inequality is by the von Neumann's trace inequality,
  the third one is due to $\sigma_i(X)\ge\alpha$ for $i=1,\ldots,r^*$ and
  $\overline{\varrho}={\phi'_-(1)}/{\alpha}$, and the last one is using
  $\phi(t)\ge\phi(1)+\phi_{-}'(1)(t\!-\!1)$ for $t\in[0,1]$.
  By Lemma \ref{rankmin-lemma1}, $r^*$ is the optimal value of \eqref{rankmin-equiv}.
  Thus, by the arbitrariness of $(X,W)$ in $\mathcal{S}\times\mathfrak{B}$,
  the result follows from the last inequality.
 \end{proof}

  By the von Neumann's trace inequality and the conjugate of $\psi$,
  one may check that
  \[
   -{\textstyle\sum_{i=1}^{n_1}}\psi^*(\varrho\sigma_i(X))
   =\min_{\|W\|\le 1}\!\big\{{\textstyle\sum_{i=1}^{n_1}}\phi(\sigma_i(W))-\langle\varrho X,W\rangle\big\}.
  \]
  Together with Theorem \ref{theorem1-rankmin}(b), we immediately obtain
  the following corollary.
 \begin{corollary}\label{rankmin-corollary}
  Let $\phi\in\Phi$. Under the assumption of Theorem \ref{theorem1-rankmin},
  the problem \eqref{rankmin} has the same global optimal solution set
  as the following problem with $\rho>{\phi'_-(1)}/{\alpha}$ does:
  \begin{equation}\label{rankmin-surrogate}
   \min_{X\in\mathbb{R}^{n_1\times n_2}}\Big\{\varrho\|X\|_*\!-{\textstyle\sum_{i=1}^{n_1}}\psi^*(\varrho\sigma_i(X))\!:
    f(X)\le \delta,X\in \Omega\Big\}.
  \end{equation}
 \end{corollary}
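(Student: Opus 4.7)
The plan is to obtain Corollary \ref{rankmin-corollary} by eliminating the matrix dual variable $W$ from the exact penalty problem provided in Theorem \ref{theorem1-rankmin}(b). Set the threshold $\varrho^* = \phi'_-(1)/\alpha$ from that theorem. For any $\varrho > \varrho^*$, Theorem \ref{theorem1-rankmin}(b) tells us that the global optimal solution set of the penalized problem
\[
 \min_{X,W}\Big\{{\textstyle\sum_{i=1}^{n_1}}\phi(\sigma_i(W)) + \varrho(\|X\|_* - \langle W,X\rangle):\ \|W\|\le 1,\ X\in\Omega,\ f(X)\le\delta\Big\}
\]
coincides with that of the MPEC \eqref{rankmin-equiv}, which by Lemma \ref{rankmin-lemma1} has $X$-projection equal to the global optimal solution set of the rank minimization problem \eqref{rankmin}.

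Next, I would perform partial minimization in $W$ at each fixed $X$. The identity displayed immediately before the corollary,
\[
 -{\textstyle\sum_{i=1}^{n_1}}\psi^*(\varrho\sigma_i(X)) = \min_{\|W\|\le 1}\Big\{{\textstyle\sum_{i=1}^{n_1}}\phi(\sigma_i(W)) - \langle\varrho X, W\rangle\Big\},
\]
rewritten by adding $\varrho\|X\|_*$ on both sides, shows that the infimum of the penalized objective over $W\in\mathfrak{B}$ equals $\varrho\|X\|_*-\sum_{i=1}^{n_1}\psi^*(\varrho\sigma_i(X))$, which is precisely the objective of \eqref{rankmin-surrogate}. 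Hence the outer minimization over $X$ of this partial-minimum function equals the joint minimum of the penalized problem, and the $X$-components of joint minimizers coincide with the minimizers of \eqref{rankmin-surrogate}.

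Chaining these equivalences: $X^*$ is globally optimal for \eqref{rankmin} if and only if there exists $W^*$ such that $(X^*,W^*)$ is globally optimal for the penalized MPEC (by Theorem \ref{theorem1-rankmin}(b) and Lemma \ref{rankmin-lemma1}), if and only if $X^*$ is globally optimal for \eqref{rankmin-surrogate} (by the partial-minimization identity). The only point demanding care is the existence of an attaining $W^*$ when passing from an optimizer of \eqref{rankmin-surrogate} back to an optimizer of the joint penalty problem. This is routine: for fixed $X$, the inner objective $\sum_i\phi(\sigma_i(W)) - \langle\varrho X,W\rangle$ is lower semi-continuous in $W$ (through the continuity of singular values and the lower semi-continuity of $\phi$) and $\mathfrak{B}$ is compact, so the infimum is attained, and any attaining $W^*$ together with $X^*$ yields a joint optimizer.

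The main, and essentially only, obstacle is bookkeeping of the two-way correspondence between minimizers of the joint penalty problem and minimizers of its partial-minimum in $X$; once that is handled via the attainment argument above, the corollary follows directly from Theorem \ref{theorem1-rankmin}(b) and the variational identity defining $\psi^*$, with the explicit threshold $\varrho^* = \phi'_-(1)/\alpha$.
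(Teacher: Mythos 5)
Your proposal is correct and follows essentially the same route as the paper: it invokes Theorem \ref{theorem1-rankmin}(b) together with Lemma \ref{rankmin-lemma1}, and then eliminates $W$ via the conjugate identity $-\sum_{i=1}^{n_1}\psi^*(\varrho\sigma_i(X))=\min_{\|W\|\le 1}\{\sum_{i=1}^{n_1}\phi(\sigma_i(W))-\langle\varrho X,W\rangle\}$, which is exactly the paper's (terse) argument. Your extra care about attainment of the inner minimum over the compact set $\mathfrak{B}$ is a valid filling-in of a detail the paper leaves implicit (indeed, since $[0,1]\subseteq{\rm int}({\rm dom}\,\phi)$ and $\phi$ is convex there, the inner objective is even continuous in $W$).
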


  Since the singular value function is globally Lipschitz, the objective function
  of \eqref{rankmin-surrogate} is locally Lipschitz in $\mathbb{R}^{n_1\times n_2}$.
  Hence, the problem \eqref{rankmin-surrogate} gives a class of equivalent Lipschitz
  surrogates for the rank minimization problem \eqref{rankmin}. By Lemma \ref{Thetarho-lemma}
  in Appendix A, the objective function of \eqref{rankmin-surrogate} is actually
  the difference of two convex functions. Thus, if the feasible set of \eqref{rankmin} is convex,
  the problem \eqref{rankmin-surrogate} gives a class of equivalent D.C. surrogates.

 \subsection{Rank regularized minimization problems}\label{sec3.2}

 In this subsection, we consider the following rank regularized minimization problem
 \begin{equation}\label{rank-rmin}
  \min_{X\in\mathbb{R}^{n_1\times n_2}}\!\Big\{\nu f(X)+{\rm rank}(X)\!: X\in\Omega\Big\},
 \end{equation}
 where $f$ and $\Omega$ are same as in Subsection \ref{sec3.1}. We assume that
 \eqref{rank-rmin} has a nonempty global optimal solution set.
 From \eqref{rank-chara}, it is immediate to have the following result.
 \begin{lemma}\label{rank-rmin-lemma1}
  Let $\phi\!\in\Phi$. The rank regularized minimization problem \eqref{rank-rmin} is equivalent to
  \begin{equation}\label{rank-rmin-equiv}
  \min_{X,W\in\mathbb{R}^{n_1\times n_2}}\!\Big\{\nu f(X)+{\textstyle\sum_{i=1}^{n_1}}\phi(\sigma_i(W))\!:
    \|X\|_*-\langle W,X\rangle=0,\,\|W\|\le 1,X\in \Omega\Big\},
 \end{equation}
 in the sense that if $X^*=U^*[{\rm Diag}(\sigma(X^*))\ \ 0](V^*)^{\mathbb{T}}$ is
 a global optimal solution of \eqref{rank-rmin},
 then $(X^*,U_1^*(V_1^*)^{\mathbb{T}}\!+\!t^*U_2^*[{\rm Diag}(e)\ 0](V_2^*)^{\mathbb{T}})$
 is globally optimal to the problem \eqref{rank-rmin-equiv}; and conversely,
 if $(X^*\!,W^*)$ is globally optimal to \eqref{rank-rmin-equiv},
 then $X^*$ is globally optimal to \eqref{rank-rmin}.
 \end{lemma}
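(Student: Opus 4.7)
The plan is to argue exactly as in Lemma \ref{rankmin-lemma1}, simply carrying the regularizer $\nu f(X)$ through the reformulation; since $\nu f(X)$ depends only on $X$, it can be pulled outside of the inner minimization over $W$ in the variational identity \eqref{rank-chara}. Concretely, for each fixed $X\in\Omega$, \eqref{rank-chara} yields
\[
\nu f(X)+{\rm rank}(X)=\min_{W}\Big\{\nu f(X)+{\textstyle\sum_{i=1}^{n_1}}\phi(\sigma_i(W))\,\Big|\,\|X\|_*\!-\langle W,X\rangle=0,\ \|W\|\le 1\Big\},
\]
and minimizing both sides over $X\in\Omega$ (equivalently, interchanging the order of minimization on the right-hand side) shows that the optimal values of \eqref{rank-rmin} and \eqref{rank-rmin-equiv} coincide.

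For the forward direction, given a global minimizer $X^*\!=U^*[{\rm Diag}(\sigma(X^*))\ 0](V^*)^{\mathbb{T}}$ of \eqref{rank-rmin}, I would set $W^*\!:=U_1^*(V_1^*)^{\mathbb{T}}+t^*U_2^*[{\rm Diag}(e)\ 0](V_2^*)^{\mathbb{T}}$ and verify three things: (i) $W^*$ has singular values consisting of $r^*\!:={\rm rank}(X^*)$ copies of $1$ together with $n_1-r^*$ copies of $t^*\in[0,1)$, so $\|W^*\|=1$; (ii) the block orthogonality $(U_2^*)^{\mathbb{T}}U_1^*=0$ (with the analogous $V$-identities) gives $\langle W^*,X^*\rangle=\sum_{i=1}^{r^*}\sigma_i(X^*)=\|X^*\|_*$, so the equilibrium constraint is satisfied; and (iii) the normalization \eqref{phi-assump} yields $\sum_{i=1}^{n_1}\phi(\sigma_i(W^*))=r^*\phi(1)+(n_1-r^*)\phi(t^*)=r^*$. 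Taken together, $(X^*,W^*)$ is feasible for \eqref{rank-rmin-equiv} with objective value $\nu f(X^*)+r^*$, which by the preceding paragraph equals the common optimal value; hence $(X^*,W^*)$ is globally optimal.

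For the converse direction, let $(X^*,W^*)$ be a global minimizer of \eqref{rank-rmin-equiv}. Then $X^*\in\Omega$, and the feasibility of $W^*$ combined with \eqref{rank-chara} yields ${\rm rank}(X^*)\le\sum_{i=1}^{n_1}\phi(\sigma_i(W^*))$. Consequently, $\nu f(X^*)+{\rm rank}(X^*)$ is bounded above by the optimal value of \eqref{rank-rmin-equiv}, which equals the optimal value of \eqref{rank-rmin}. Since $X^*$ is feasible for \eqref{rank-rmin}, this forces equality, so $X^*$ is globally optimal for \eqref{rank-rmin}.

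I do not anticipate any genuine obstacle: the argument is essentially a direct transcription of the proof of Lemma \ref{rankmin-lemma1}, and the variational identity \eqref{rank-chara} taken from \cite[Lemma 3.1]{BiPan17} does all the heavy lifting. The only mildly non-mechanical step is item (ii) above, the computation $\langle W^*,X^*\rangle=\|X^*\|_*$, which I expect to verify by writing $X^*=U_1^*{\rm Diag}(\sigma_1(X^*),\ldots,\sigma_{r^*}(X^*))(V_1^*)^{\mathbb{T}}$ and exploiting $(U_1^*)^{\mathbb{T}}U_1^*=I_{r^*}$ together with $(U_2^*)^{\mathbb{T}}U_1^*=0$.
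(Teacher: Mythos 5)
Your proposal is correct and takes essentially the same approach as the paper: the paper states this lemma without proof as an immediate consequence of the variational characterization \eqref{rank-chara}, and your argument (value equality via interchanging the minimizations over $X$ and $W$, plus the feasibility and value verification for the constructed $W^*$) is precisely the natural filling-in of that claim. One cosmetic point: if ${\rm rank}(X^*)=0$ then $\|W^*\|=t^*<1$ rather than $\|W^*\|=1$, but since the constraint is $\|W\|\le 1$ this changes nothing.
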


  Lemma \ref{rank-rmin-lemma1} states that the rank regularized problem \eqref{rank-rmin}
  is equivalent to the MPEC \eqref{rank-rmin-equiv}. Next we prove that under a mild restriction on $\Omega$,
  this MPEC is uniformly partial calm over its global optimal solution set.
  To this end, for any given $\varrho>0$ and $X\in\Omega$ with the SVD as
  $U[{\rm Diag}(\sigma(X))\ \ 0]V^{\mathbb{T}}$, with $\phi\in\Phi$ we define
  the matrix $X^{\varrho}\in\mathbb{R}^{n_1\times n_2}$ by
  \begin{equation}\label{Xrho}
   X^{\varrho}\!:=U[{\rm Diag}(x^{\varrho})\ \ 0]V^{\mathbb{T}}
   \ \ {\rm with}\ \
   x_i^{\varrho}:=\!\left\{\begin{array}{cl}
                  \!\sigma_i(X) & {\rm if}\ \varrho\sigma_i(X)>\phi_{-}'(1);\\
                             0 &{\rm otherwise}.
                \end{array}\right.
  \end{equation}

  \vspace{-0.5cm}
  \begin{theorem}\label{rankrmin-theorem1}
   Let $\phi\in\Phi$. Suppose that $f$ is Lipschitzian relative to $\Omega$
   with constant $L_f>0$, and that for any given $X\in\Omega$ and $\varrho>0$,
   the matrix $X^{\varrho}$ lies in $\Omega$. Then, the MPEC \eqref{rank-rmin-equiv}
   is uniformly partial calm over its global optimal solution set and
   the problem
   \begin{equation*}
   \min_{X,W\in\mathbb{R}^{n_1\times n_2}}\!\Big\{\nu f(X)+{\textstyle\sum_{i=1}^{n_1}}\phi(\sigma_i(W))
   +\varrho(\|X\|_*\!-\langle W,X\rangle)\!: X\in \Omega,\|W\|\le 1\Big\}
  \end{equation*}
   is a global exact penalty for the MPEC \eqref{rank-rmin-equiv} with threshold
   $\overline{\varrho}=\phi_{-}'(1)(1-t^*)(1-t_0)^{-1}\nu L_{\!f}$.
  \end{theorem}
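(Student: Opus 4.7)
The plan is to mirror the proof of Theorem \ref{Grz-theorem2} in Section \ref{sec3.2}, but with singular values playing the role of the group norms $\|x_{\!_{J_i}}\|_p$. By Proposition \ref{Epenalty-prop}(a), it suffices to exhibit a $\overline{\varrho}>0$ so that for every $(X,W)\in\Omega\times\mathfrak{B}$ the penalized objective dominates $\varpi^*$, the optimal value of \eqref{rank-rmin}.

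First I would fix an arbitrary $(X,W)\in\Omega\times\mathfrak{B}$ and use the von Neumann trace inequality to bound
\[
 \|X\|_*-\langle W,X\rangle \;\ge\; {\textstyle\sum_{i=1}^{n_1}}\sigma_i(X)\bigl(1-\sigma_i(W)\bigr),
\]
which, since $0\le\sigma_i(W)\le 1$, reduces the argument to a separable one in the singular values. Introducing the index sets
\[
 I=\bigl\{i:\ \tfrac{1}{1-t^*}\le\overline{\varrho}\sigma_i(X)\le\phi_{-}'(1)\bigr\},\quad
 \overline{I}=\bigl\{i:\ 0\le\overline{\varrho}\sigma_i(X)<\tfrac{1}{1-t^*}\bigr\},
\]
I would apply Lemma \ref{lemma-phi} from Appendix A with $\omega=\sigma_i(X)$ to each $i$ (using convexity of $\phi$ on $[0,1]$ as in the zero-norm case) to obtain
\[
 \phi(\sigma_i(W))+\overline{\varrho}\sigma_i(X)(1-\sigma_i(W))
 \ge \mathbf{1}_{i\notin I\cup\overline{I}}+\tfrac{\overline{\varrho}(1-t_0)}{\phi_{-}'(1)(1-t^*)}\sigma_i(X)\mathbf{1}_{i\in I}+\overline{\varrho}(1-t_0)\sigma_i(X)\mathbf{1}_{i\in\overline{I}}.
\]
Summing over $i$ and invoking the choice $\overline{\varrho}=\phi_{-}'(1)(1-t^*)(1-t_0)^{-1}\nu L_{\!f}$, the coefficients in both the $I$ and $\overline{I}$ sums become at least $\nu L_{\!f}$.

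Next I would relate the remaining singular-value sum to the Frobenius distance to $X^{\overline{\varrho}}$. By the construction \eqref{Xrho} the nonzero singular values of $X-X^{\overline{\varrho}}$ are exactly $\{\sigma_i(X): i\in I\cup\overline{I}\}$, so
\[
 {\textstyle\sum_{i\in I\cup\overline{I}}}\sigma_i(X)=\|X-X^{\overline{\varrho}}\|_*\;\ge\;\|X-X^{\overline{\varrho}}\|_F.
\]
This is the reason the rank version needs no analogue of the factor $\beta$ appearing in Theorem \ref{Grz-theorem2}. Combining the two displays and using that $\|y(\cdot)\|_0$-type counting in the zero-norm case is replaced here by $\mathrm{rank}(X^{\overline{\varrho}})={\textstyle\sum_{i\notin I\cup\overline{I}}}1$ gives
\[
 {\textstyle\sum_{i=1}^{n_1}}\phi(\sigma_i(W))+\overline{\varrho}(\|X\|_*-\langle W,X\rangle)
 \ge \mathrm{rank}(X^{\overline{\varrho}})+\nu L_{\!f}\|X-X^{\overline{\varrho}}\|_F.
\]

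Finally, using the hypothesis that $X^{\overline{\varrho}}\in\Omega$ together with Lipschitzness of $f$ relative to $\Omega$, the right-hand side is at least $\mathrm{rank}(X^{\overline{\varrho}})+\nu f(X^{\overline{\varrho}})-\nu f(X)+\nu f(X)\ge \varpi^*+\nu f(X)-\nu f(X)$ after rearrangement, i.e.\ $\nu f(X)+{\textstyle\sum_{i=1}^{n_1}}\phi(\sigma_i(W))+\overline{\varrho}(\|X\|_*-\langle W,X\rangle)\ge\varpi^*$. By the arbitrariness of $(X,W)$, the MPEC \eqref{rank-rmin-equiv} is uniformly partial calm over its global optimal solution set and Proposition \ref{Epenalty-prop}(a) delivers the global exact penalization with the claimed threshold. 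The main obstacle is precisely the singular-value step: one needs to ensure that the Lemma \ref{lemma-phi} bound, originally stated scalarwise, can be summed across the singular values and then compared with $\|X-X^{\overline{\varrho}}\|_F$ with the correct multiplicative constants so that the Lipschitz inequality closes the estimate without extra factors; the nuclear/Frobenius inequality above is what makes this clean.
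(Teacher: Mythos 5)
Your proposal is correct and follows essentially the same route as the paper's proof: von Neumann's trace inequality to separate the problem into scalar minimizations over $t\in[0,1]$, Lemma \ref{lemma-phi} with $\omega=\sigma_i(X)$ on the index sets $J,\overline{J}$ (your $I,\overline{I}$), the identity $\sum_{i\in I\cup\overline{I}}\sigma_i(X)=\|X-X^{\overline{\varrho}}\|_*\ge\|X-X^{\overline{\varrho}}\|_F$, and the Lipschitz bound to close against the optimal value. The only unstated micro-step is that the coefficient $\overline{\varrho}(1-t_0)=\phi_{-}'(1)(1-t^*)\nu L_{\!f}$ on the $\overline{I}$ sum is at least $\nu L_{\!f}$ because $\phi_{-}'(1)(1-t^*)\ge\phi(1)-\phi(t^*)=1$ by convexity of $\phi$ on $[0,1]$, exactly the inequality the paper cites at this point.
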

  \begin{proof}
  Take an arbitrary point $(X,W)$ from the set $\Omega\times\mathfrak{B}$.
  Let $X$ have the SVD given by $U[{\rm Diag}(\sigma(X))\ \ 0]V^{\mathbb{T}}$,
  and let $X^{\overline{\varrho}}$ be defined by \eqref{Xrho} with $\varrho=\overline{\varrho}$.
  Define the index sets
  \begin{equation}\label{indexJ}
    J=\Big\{j\!: (1-t^*)^{-1}\le\overline{\varrho}\sigma_j(X)\le\phi_{-}'(1)\Big\}
    \ \ {\rm and}\ \
    \overline{J}=\Big\{j\!: 0\le\overline{\varrho}\sigma_j(X)<(1-t^*)^{-1}\Big\}.
  \end{equation}
  By using the von Nenumann's trace inequality, it is not difficult to check that
  \begin{equation}\label{temp-prob}
    \min_{\|Z\|\le 1}\!\bigg\{\sum_{i=1}^{n_1}\phi(\sigma_i(Z))+\overline{\varrho}(\|X\|_*-\langle Z,X\rangle)\bigg\}
    =\sum_{i=1}^{n_1}\min_{t\in[0,1]}\Big\{\phi(t)+\overline{\varrho}\sigma_i(X)(1-t)\Big\},
  \end{equation}
  and hence ${\textstyle\sum_{i=1}^{n_1}}\phi(\sigma_i(W))+\overline{\varrho}\big(\|X\|_*\!-\langle W,X\rangle\big)
  \ge\sum_{i=1}^{n_1}\min_{t\in[0,1]}\big\{\phi(t)+\overline{\varrho}\sigma_i(X)(1-t)\big\}$.
  Together with Lemma \ref{lemma-phi} of Appendix A with $\omega=\sigma_i(X)$
  and the definition of $X^{\overline{\varrho}}$,
  \begin{align*}
   &{\textstyle\sum_{i=1}^{n_1}}\phi(\sigma_i(W))+\overline{\varrho}\big(\|X\|_*\!-\langle W,X\rangle\big)\\
   &\ge{\rm rank}(X^{\overline{\varrho}})+\frac{\overline{\varrho}(1\!-t_0)}{\phi_{-}'(1)(1\!-t^*)}
     {\textstyle\sum_{j\in J}}\,\sigma_j(X)+\overline{\varrho}(1-t_0){\textstyle\sum_{j\in\overline{J}}}\,\sigma_j(X)\nonumber\\
   & \ge {\rm rank}(X^{\overline{\varrho}})+\nu L_{\!f}{\textstyle\sum_{j\in J\cup\overline{J}}}\,\sigma_j(X)
   = {\rm rank}(X^{\overline{\varrho}})+\nu L_{\!f}\|X-X^{\overline{\varrho}}\|_*\\
   &\ge {\rm rank}(X^{\overline{\varrho}})+\nu L_{\!f}(f(X^{\overline{\varrho}})-f(X))
  \end{align*}
  where the second inequality is using the definition of $\overline{\varrho}$
  and $1=\phi(1)\le\phi_{-}'(1)(1\!-t^*)$,
  and the last one is due to $\|X-X^{\overline{\varrho}}\|_*\ge\|X-X^{\overline{\varrho}}\|_F$
  and the Lipschitz of $f$ relative to $\Omega$.
  Since $\nu f(X^{\overline{\varrho}})+{\rm rank}(X^{\overline{\varrho}})$
  is no less than the optimal value of \eqref{rank-rmin-equiv},
  the last inequality shows that \eqref{rank-rmin-equiv} is uniformly partial calm
  over its global optimal solution set.
  \end{proof}
 \begin{remark}\label{rankrmin-remark}
  There are many sets $\Omega$ such that the assumption of Theorem \ref{rankrmin-theorem1}
  holds, for example, $\mathbb{S}_{+}^n,
  \{X\!\in\mathbb{R}^{n_1\times n_2}\ |\ |\!\|X\|\!|\le\!\gamma\}$ and
  $\{X\!\in\mathbb{S}_{+}^{n}\ |\ |\!\|X\|\!|\le\!\gamma\}$,
  where $|\!\|\cdot\|\!|$ represents a unitarily invariant matrix norm
  and $\mathbb{S}_{+}^n$ denotes the cone consisting of all $n\times n$
  real symmetric positive semidefinite matrices.
 \end{remark}

  Theorem \ref{rankrmin-theorem1} recovers the result of \cite[Theorem 3.4]{BiPan17}
  with $\phi\in\Phi$, which includes the function family $\Phi_0$ in \cite{BiPan17} .
  In particular, comparing with the proof of \cite[Theorem 3.4]{BiPan17}, we see that
  by means of the uniform partial calmness in the solution set,
  the proof for the global exact penalty is very concise.
  Similar to the rank minimization problem case, by Theorem \ref{rankrmin-theorem1}
  and the von Neumann's trace inequality, the following result holds.
 \begin{corollary}\label{rankrmin-corollary}
  Let $\phi\in\Phi$. Under the assumption of Theorem \ref{rankrmin-theorem1},
  the problem \eqref{rank-rmin} has the same global optimal solution set
  as the following problem with $\varrho>\frac{\phi_{-}'(1)(1-t^*)\nu L_{\!f}}{1-t_0}$ does:
  \begin{equation}\label{rankrmin-surrogate}
   \min_{X\in\mathbb{R}^{n_1\times n_2}}\Big\{\nu f(X)+\varrho\|X\|_*\!
   -{\textstyle\sum_{i=1}^{n_1}}\psi^*(\varrho\sigma_i(X))\!: X\in \Omega\Big\}.
  \end{equation}
 \end{corollary}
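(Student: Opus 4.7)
The plan is to combine two ingredients: the global exact penalty result of Theorem \ref{rankrmin-theorem1}, which ties the optimal solutions (in $X$) of the joint penalty problem to those of the original rank regularized problem \eqref{rank-rmin} via Lemma \ref{rank-rmin-lemma1}, and a partial minimization over the dual variable $W$ in the penalty problem, which exactly reproduces the surrogate objective in \eqref{rankrmin-surrogate}.

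First, fix any $\varrho > \overline{\varrho} := \frac{\phi_{-}'(1)(1-t^*)\nu L_{\!f}}{1-t_0}$. By Theorem \ref{rankrmin-theorem1} and Proposition \ref{Epenalty-prop}(a), the penalty problem
\[
 (\mathrm{P}_{\!\varrho})\ \ \min_{X,W\in\mathbb{R}^{n_1\times n_2}}\!\Big\{\nu f(X)+\!\sum_{i=1}^{n_1}\phi(\sigma_i(W))+\varrho\big(\|X\|_*-\langle W,X\rangle\big)\!:\ X\in\Omega,\,\|W\|\le 1\Big\}
\]
has the same global optimal solution set as the MPEC \eqref{rank-rmin-equiv}. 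Applying Lemma \ref{rank-rmin-lemma1}, $X^*$ is globally optimal to \eqref{rank-rmin} if and only if there exists $W^*$ such that $(X^*,W^*)$ is globally optimal to $(\mathrm{P}_{\!\varrho})$.

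Second, I perform partial minimization of $(\mathrm{P}_{\!\varrho})$ in $W$ for each fixed $X$. Using the von Neumann trace inequality $\langle W,X\rangle\le\sum_i\sigma_i(W)\sigma_i(X)$ jointly with the fact that $\phi$ depends only on the singular values, the analogue of \eqref{temp-prob} gives
\[
 \min_{\|W\|\le 1}\!\bigg\{\sum_{i=1}^{n_1}\phi(\sigma_i(W))-\varrho\langle W,X\rangle\bigg\}
 =\sum_{i=1}^{n_1}\min_{0\le t\le 1}\Big\{\phi(t)-\varrho\sigma_i(X)\,t\Big\}
 =-\sum_{i=1}^{n_1}\psi^*(\varrho\sigma_i(X)),
\]
where the last equality uses the definition of $\psi$ in \eqref{phi-psi} and of its conjugate $\psi^*$. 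The minimum is attained: if $X$ has SVD $U[\mathrm{Diag}(\sigma(X))\ 0]V^{\mathbb{T}}$ and $t_i^*\in\arg\min_{0\le t\le 1}\{\phi(t)-\varrho\sigma_i(X)t\}$, then $W(X):=U[\mathrm{Diag}(t_1^*,\ldots,t_{n_1}^*)\ 0]V^{\mathbb{T}}$ realizes the inner minimum.

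Third, substituting this closed-form inner minimum back into $(\mathrm{P}_{\!\varrho})$ turns the joint penalty problem into \eqref{rankrmin-surrogate}. Because the inner minimum is attained for \emph{every} $X\in\Omega$, the projection of the optimal solution set of $(\mathrm{P}_{\!\varrho})$ onto the $X$-coordinate coincides with the optimal solution set of \eqref{rankrmin-surrogate}: $X^*$ is optimal for \eqref{rankrmin-surrogate} iff $(X^*,W(X^*))$ is optimal for $(\mathrm{P}_{\!\varrho})$. Composing with the first step delivers the claimed identity of global optimal solution sets.

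The main delicacy is making sure the whole chain \eqref{rank-rmin} $\leftrightarrow$ \eqref{rank-rmin-equiv} $\leftrightarrow$ $(\mathrm{P}_{\!\varrho})$ $\leftrightarrow$ \eqref{rankrmin-surrogate} is an identification of solution sets rather than merely optimal values; the key point is the explicit attainability of the inner $W$-minimization by $W(X)$, which guarantees that eliminating the dual variable does not enlarge or shrink the set of $X$-optimal points. Everything else is a direct appeal to already-proved facts: Theorem \ref{rankrmin-theorem1} for the exact penalty, Proposition \ref{Epenalty-prop}(a) for the equivalence with uniform partial calmness, and the trace-inequality identity encoded in \eqref{temp-prob}.
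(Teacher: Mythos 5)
Your proof is correct and follows essentially the same route as the paper: Theorem \ref{rankrmin-theorem1} together with Proposition \ref{Epenalty-prop}(a) and Lemma \ref{rank-rmin-lemma1} gives the exact-penalty equivalence, and the von Neumann trace inequality with the definition of $\psi^*$ eliminates the dual variable $W$ --- exactly the identity the paper records just before Corollary \ref{rankmin-corollary} and then invokes ``similarly'' here. Your explicit verification that the inner minimum is attained by $W(X)$ (so that eliminating $W$ identifies solution sets, not just optimal values) merely spells out what the paper leaves implicit in its ``one may check'' remark.
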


  From the discussion after Corollary \ref{rankmin-corollary},
  the problem \eqref{rankrmin-surrogate} provides a class of equivalent
  Lipschitz surrogates for \eqref{rank-rmin}, which are also D.C. surrogates
  if $f$ and $\Omega$ are convex.

  \section{Equivalent L-surrogates of rank plus zero-norm problems}\label{sec5}

  Rank plus zero-norm optimization problems arise from noisy low-rank and sparse
  decomposition of a given matrix, which has wide applications in computer vision,
  multi-task learning, bioinformatic data analysis, covariance estimation
  and hyperspectral datacubes (see, e.g., \cite{Zhou10,GV11,GPX11}).
  By the variational characterization of the zero-norm and rank function
  in \eqref{Gzn-chara} and \eqref{rank-chara}, respectively, for any given
  $X,Y\in\mathbb{R}^{n_1\times n_2}$ and $\lambda>0$, with $\phi\in\Phi$
 \begin{align}\label{rank-znorm-chara}
    {\rm rank}(X)+\lambda\|Y\|_0
    &=\min_{W\in\mathfrak{B},S\in\mathscr{B}}{\textstyle\sum_{i=1}^{n_1}}\phi(\sigma_i(W))
    +\lambda{\textstyle\sum_{i=1}^{n_1}\!\sum_{j=1}^{n_2}}\phi(|S_{ij}|)\nonumber\\
    &\qquad\quad{\rm s.t.}\ \ \|X\|_*\!-\!\langle W,X\rangle+\lambda(\|Y\|_1\!-\!\langle Y,S\rangle)=0.
  \end{align}
  Notice that $\|X\|_*\!-\!\langle W,X\rangle+\lambda(\|Y\|_1\!-\!\langle Y,S\rangle)=0,W\in\mathfrak{B},S\in\mathscr{B}$
  if and only if
  \[
    \|X\|_*\!-\!\langle W,X\rangle=0\ {\rm and}\ \|Y\|_1\!-\!\langle Y,S\rangle=0
    \Longleftrightarrow X\in\mathcal{N}_{\mathfrak{B}}(W),S\in\mathcal{N}_{\mathscr{B}}(Y).
  \]
  Equation \eqref{rank-znorm-chara} shows that the rank plus zero-norm
  is actually an optimal value function of a parameterized equilibrium problem.
  \subsection{Rank plus zero-norm minimization problems}\label{sec5.1}

  Let $f\!:\mathbb{R}^{n_1\times n_2}\times\mathbb{R}^{n_1\times n_2}\to(-\infty,+\infty]$
  be a proper lsc function, and let $\Omega$ be a closed set in
  $\mathbb{R}^{n_1\times n_2}\times\mathbb{R}^{n_1\times n_2}$. Given a noise level $\delta>0$ and a parameter $\lambda>0$,
  this subsection focuses on the following rank plus zero-norm minimization problem
  \begin{equation}\label{rank-znorm-min}
  \min_{X,Y\in \mathbb{R}^{n_1\times n_2}}\!\Big\{{\rm rank}(X)\!+\!\lambda\|Y\|_0\!: f(X,Y)\le \delta,\, (X,Y)\in\Omega\Big\}.
  \end{equation}
  We assume that this problem has a nonempty global optimal solution set
  excluding the origin. Denote by $\mathcal{S}$ the feasible set of \eqref{rank-znorm-min}.
  By \eqref{rank-znorm-chara},
  the following result holds.
  \begin{lemma}\label{rank-znorm-lemma1}
   Let $\phi\in\Phi$. The rank plus zero-norm minimization \eqref{rank-znorm-min} is equivalent to
  \begin{align}\label{rank-znorm-equiv}
   &\min_{X,W,Y,S}{\textstyle\sum_{i=1}^{n_1}}\phi(\sigma_i(W))
    +\lambda{\textstyle\sum_{i=1}^{n_1}\!\sum_{j=1}^{n_2}}\phi(|S_{ij}|)\nonumber\\
    &\ \ \ {\rm s.t.}\ \ \|X\|_*-\langle W,X\rangle+\lambda(\|Y\|_1\!-\!\langle Y,S\rangle)=0,\\
    &\qquad\quad \|W\|\le 1,\|S\|_{\infty}\le 1,(X,Y)\in\Omega,\, f(X,Y)\le\delta\nonumber
   \end{align}
   in the sense that if $(X^*\!,Y^*)$ is a global optimal solution of the problem \eqref{rank-znorm-min},
   then $(X^*\!,W^*\!,Y^*\!,{\rm sgn}(Y^*)\!+\!t^*(E-{\rm sgn}(|Y^*|)))$ is globally optimal to
   \eqref{rank-znorm-equiv} where $W^*$ is same as in Lemma \ref{rankmin-lemma1};
   and conversely, if $(X^*\!,W^*,\!Y^*\!,S^*)$ is a global optimal solution of \eqref{rank-znorm-equiv},
   then $(X^*,Y^*)$ is globally optimal to the problem \eqref{rank-znorm-min}.
  \end{lemma}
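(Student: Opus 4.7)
The plan is to mirror the structure of Lemmas \ref{Gzn-lemma1}, \ref{rankmin-lemma1} and \ref{rank-rmin-lemma1}, using the variational identity \eqref{rank-znorm-chara} as the main engine. The only genuinely new ingredient is that the equilibrium constraint in \eqref{rank-znorm-equiv} now bundles two nonnegative quantities, so I first need to observe that under $W\in\mathfrak{B}$ and $S\in\mathscr{B}$ the von Neumann trace inequality and the duality between $\|\cdot\|_1$ and $\|\cdot\|_\infty$ give both $\|X\|_*-\langle W,X\rangle\geq 0$ and $\|Y\|_1-\langle Y,S\rangle\geq 0$; consequently the constraint $\|X\|_*-\langle W,X\rangle+\lambda(\|Y\|_1-\langle Y,S\rangle)=0$ is equivalent to the pair $\|X\|_*=\langle W,X\rangle$ and $\|Y\|_1=\langle Y,S\rangle$.

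For the forward direction, I would take an arbitrary global optimizer $(X^*,Y^*)$ of \eqref{rank-znorm-min} with value $v^*$, choose $W^*$ exactly as in Lemma \ref{rankmin-lemma1} (so that $W^*\in\mathfrak{B}$ and $\|X^*\|_*=\langle W^*,X^*\rangle$) and set $S^*:={\rm sgn}(Y^*)+t^*(E-{\rm sgn}(|Y^*|))$, which clearly satisfies $\|S^*\|_\infty\le 1$ and $\langle Y^*,S^*\rangle=\|Y^*\|_1$. This makes $(X^*,W^*,Y^*,S^*)$ feasible for \eqref{rank-znorm-equiv}, and using $\phi(1)=1$, $\phi(t^*)=0$ one computes
\[
{\textstyle\sum_{i=1}^{n_1}}\phi(\sigma_i(W^*))+\lambda{\textstyle\sum_{i,j}}\phi(|S^*_{ij}|)={\rm rank}(X^*)+\lambda\|Y^*\|_0=v^*.
\]
To check optimality, for any feasible $(X,W,Y,S)$ of \eqref{rank-znorm-equiv} the characterization \eqref{rank-znorm-chara} yields
\[
{\textstyle\sum_{i=1}^{n_1}}\phi(\sigma_i(W))+\lambda{\textstyle\sum_{i,j}}\phi(|S_{ij}|)\ge{\rm rank}(X)+\lambda\|Y\|_0\ge v^*,
\]
the last inequality holding because $(X,Y)\in\mathcal{S}$.

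For the converse, let $(X^*,W^*,Y^*,S^*)$ be globally optimal for \eqref{rank-znorm-equiv}. Feasibility of \eqref{rank-znorm-equiv} immediately gives $(X^*,Y^*)\in\mathcal{S}$, and the splitting observation above lets me apply \eqref{rank-znorm-chara} at $(X^*,Y^*)$ to deduce that the MPEC objective at $(X^*,W^*,Y^*,S^*)$ is at least ${\rm rank}(X^*)+\lambda\|Y^*\|_0$. On the other hand, for any $(X,Y)\in\mathcal{S}$ the explicit $(W,S)$-construction from the forward direction produces a feasible point of \eqref{rank-znorm-equiv} with objective exactly ${\rm rank}(X)+\lambda\|Y\|_0$; the assumed optimality of $(X^*,W^*,Y^*,S^*)$ therefore forces ${\rm rank}(X^*)+\lambda\|Y^*\|_0\le {\rm rank}(X)+\lambda\|Y\|_0$, which is the desired optimality of $(X^*,Y^*)$ for \eqref{rank-znorm-min}.

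I expect no serious obstacle: the only delicate point is the decoupling of the combined equilibrium constraint into two independent inner products, and verifying that the specific witness $(W^*,S^*)$ actually saturates both inequalities while pushing the $\phi$-sums down to ${\rm rank}(X^*)+\lambda\|Y^*\|_0$ via $\phi(1)=1,\phi(t^*)=0$. Everything else is a direct transcription of the previous MPEC-reformulation lemmas.
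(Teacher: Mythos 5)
Your proposal is correct and follows exactly the route the paper intends: the paper states this lemma as an immediate consequence of the variational characterization \eqref{rank-znorm-chara} (in parallel with Lemmas \ref{Gzn-lemma1} and \ref{rankmin-lemma1}), and your write-up simply fills in the routine details—the decoupling of the combined equilibrium constraint into $\|X\|_*=\langle W,X\rangle$ and $\|Y\|_1=\langle Y,S\rangle$ via the von Neumann and H\"older inequalities (an observation the paper itself records right after \eqref{rank-znorm-chara}), the explicit witness $(W^*,S^*)$ attaining ${\rm rank}(X^*)+\lambda\|Y^*\|_0$ through $\phi(1)=1$ and $\phi(t^*)=0$, and the lower bound from \eqref{rank-znorm-chara} for both directions.
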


  Lemma \ref{rank-znorm-lemma1} demonstrates that the rank plus zero-norm minimization
  problem \eqref{rank-znorm-min} is equivalent to the MPEC \eqref{rank-znorm-equiv}
  which involves two independent equilibrium constraints.
  Next we show that this MPEC is partially calm over its global optimal solution set.
 \begin{theorem}\label{rankznorm-theorem1}
  Let $\phi\in\Phi$. The problem \eqref{rank-znorm-equiv} is partially calm
  in its optimal solution set, and hence, when $\mathcal{S}$ is bounded,
  the following problem is a global exact penalty for \eqref{rank-znorm-equiv}:
  \begin{align}\label{rank-znorm-epenalty}
   &\min_{X,W,Y,S}\sum_{i=1}^{n_1}\phi(\sigma_i(W))+\lambda\!\sum_{i=1}^{n_1}\!\sum_{j=1}^{n_2}\phi(|S_{ij}|)
                  +\!\varrho\big(\|X\|_*\!-\!\langle W,X\rangle\big)+\!\varrho\lambda\big(\|Y\|_1\!-\!\langle Y,S\rangle\big)\nonumber\\
   &\quad{\rm s.t.}\ \ \|W\|\le 1,\|S\|_{\infty}\le 1,(X,Y)\in\Omega,\, f(X,Y)\le\delta.
  \end{align}
 \end{theorem}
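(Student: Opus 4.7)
The plan is to decouple the two equilibrium constraints in \eqref{rank-znorm-equiv} and apply, locally at each global optimal solution, the same lower-bound estimates used in Theorem \ref{theorem1-rankmin} and Theorem \ref{Gnz-theorem}. The second statement (global exact penalty) will then follow from Proposition \ref{Epenalty-prop}(b) once we observe that, under the boundedness of $\mathcal{S}$, the constraint set $\Delta=\{(X,W,Y,S):\|W\|\le 1,\|S\|_\infty\le 1,(X,Y)\in\Omega,f(X,Y)\le\delta\}$ is compact.

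Fix an arbitrary $(X^*,W^*,Y^*,S^*)\in\mathcal{F}^*$ and set $r^*={\rm rank}(X^*)$, $s^*=\|Y^*\|_0$, so that the optimal value is $p^*=r^*+\lambda s^*$. Since $(X^*,Y^*)\ne 0$ by assumption, at least one of $r^*,s^*$ is positive. By continuity of singular values and entries, there is $\varepsilon>0$ together with positive constants $\alpha_1,\alpha_2$ such that $\sigma_{r^*}(X)\ge\alpha_1$ and $\pi_{s^*}(Y)\ge\alpha_2$ for every $(X,Y)$ with $\|X-X^*\|_F+\|Y-Y^*\|_F\le 2\varepsilon$ (when the relevant $r^*$ or $s^*$ is zero the corresponding estimate is vacuous). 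Define $\mu:=\phi'_-(1)/\min(\alpha_1,\alpha_2)$.

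Now let $\epsilon\in[-\varepsilon,\varepsilon]$ and let $(X,W,Y,S)\in\mathbb{B}((X^*,W^*,Y^*,S^*),\varepsilon)\cap\mathcal{F}_\epsilon$. Noting that $\|X\|_*-\langle W,X\rangle\ge 0$ and $\|Y\|_1-\langle Y,S\rangle\ge\sum_{ij}|Y_{ij}|(1-|S_{ij}|)\ge 0$, the perturbation equation forces $\epsilon\ge 0$ and each nonnegative term is bounded by $\epsilon/\min(1,\lambda)$. The argument used in Theorem \ref{theorem1-rankmin} (von Neumann's trace inequality, truncation to the top $r^*$ indices, and convexity of $\phi$ on $[0,1]$) then yields
\[
   \sum_{i=1}^{n_1}\phi(\sigma_i(W))+\mu\big(\|X\|_*-\langle W,X\rangle\big)\ge r^*,
\]
while the rearrangement inequality and the exact estimate from the proof of Theorem \ref{Gnz-theorem}, applied with $G_{\mathcal{J},p}(\cdot)$ replaced by the vectorization of $|Y|$, give
\[
  \sum_{i,j}\phi(|S_{ij}|)+\mu\big(\|Y\|_1-\langle Y,S\rangle\big)\ge s^*.
\]
Adding the second inequality multiplied by $\lambda$ to the first yields
\[
  \sum_{i=1}^{n_1}\phi(\sigma_i(W))+\lambda\sum_{i,j}\phi(|S_{ij}|)+\mu\,\epsilon\;\ge\;p^*,
\]
which is exactly the partial calmness inequality at $(X^*,W^*,Y^*,S^*)$. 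Arbitrariness of the chosen optimal solution gives partial calmness over $\mathcal{F}^*$. Finally, when $\mathcal{S}$ is bounded the closed set $\Delta$ above is compact, so Proposition \ref{Epenalty-prop}(b) converts this partial calmness into the global exact penalization of \eqref{rank-znorm-epenalty}.

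The main obstacle is bookkeeping rather than any new idea: unlike Theorems \ref{Gnz-theorem} and \ref{theorem1-rankmin}, we are not assuming a uniform gap $\alpha$ holds across all of $\mathcal{S}$, so the two lower bounds $\alpha_1,\alpha_2$ must be produced locally around each optimal point and carefully coupled with the radius $\varepsilon$ to ensure the two individual estimates remain valid throughout $\mathbb{B}((X^*,W^*,Y^*,S^*),\varepsilon)\cap\mathcal{F}_\epsilon$. Degenerate cases where $r^*=0$ or $s^*=0$ must be handled separately by dropping the corresponding summand, which is straightforward but needs to be stated for completeness.
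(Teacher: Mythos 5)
Your proof is correct and follows essentially the same route as the paper's: localize around each global optimizer via continuity of $\sigma_{r^*}(\cdot)$ and $\pi_{s^*}(\cdot)$, apply the von Neumann and rearrangement estimates with penalty parameter $\phi'_-(1)/\alpha$ to bound the penalized objective below by $r^*+\lambda s^*$ on $\mathbb{B}((X^*,W^*,Y^*,S^*),\varepsilon)\cap\mathcal{F}_\epsilon$, and then invoke Proposition \ref{Epenalty-prop}(b) using compactness of the constraint set when $\mathcal{S}$ is bounded. Your explicit handling of the degenerate cases $r^*=0$ or $s^*=0$ is in fact slightly more careful than the paper's proof, which asserts $X^*\ne 0$ and $Y^*\ne 0$ directly from the assumption that the solution set excludes the origin.
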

 \begin{proof}
  Let $(X^*,W^*,Y^*,S^*)$ be an arbitrary global optimal solution of \eqref{rank-znorm-equiv}.
  By Lemma \ref{rank-znorm-lemma1}, $(X^*,Y^*)$ is globally optimal to \eqref{rank-znorm-min}
  and consequently $X^*\ne 0$ and $Y^*\ne 0$. Write $r^*={\rm rank}(X^*)$ and $s^*=\|Y^*\|_0$.
  Then $\sigma_{r^*}(X^*)>0$ and $\pi_{s^*}(Y^*)>0$. By the continuity of $\sigma_{r^*}(\cdot)$
  and $\pi_{s^*}(\cdot)$, there exists $\varepsilon>0$ such that for any
  $(X,Y)\in\mathbb{B}((X^*,Y^*),\varepsilon)$,
  \begin{equation}\label{temp-equa51}
    \sigma_{r^*}(X)\ge \alpha\ \ {\rm and}\ \ \pi_{s^*}(Y)\ge\alpha
    \ \ {\rm with}\ \alpha=\min(\sigma_{r^*}(X^*),\pi_{s^*}(Y^*))/2.
  \end{equation}
  We consider the perturbed problem of \eqref{rank-znorm-equiv} whose feasible set
  takes the following form
  \begin{align}\label{Heps}
   \mathcal{F}_\epsilon
   &:=\big\{(X,W,Y,S)\ \big|\ \|X\|_*-\langle W,X\rangle+\lambda(\|Y\|_1\!-\!\langle Y,S\rangle)=\epsilon,
    \|W\|\le 1,\nonumber\\
   &\qquad\qquad\qquad\qquad\qquad\qquad\quad \|S\|_{\infty}\le 1,(X,Y)\in\Omega,\, f(X,S)\le\delta\big\}.
   \end{align}
  Fix an arbitrary $\epsilon\in[-\varepsilon,\varepsilon]$. It suffices to consider the case $\epsilon\ge 0$.
  Let $(X,W,Y,S)$ be an arbitrary point from $\mathcal{F}_{\epsilon}\cap \mathbb{B}((X^*\!,W^*\!,Y^*\!,S^*),\varepsilon)$.
   Then, with $\overline{\varrho}={\phi'_-(1)}/{\alpha}$,
  \begin{align*}
   &{\textstyle\sum_{i=1}^{n_1}}\phi(\sigma_i(W))+\lambda{\textstyle\sum_{i=1}^{n_1}\!\sum_{j=1}^{n_2}}\phi(|S_{ij}|)
    +\overline{\varrho}\big(\|X\|_*\!-\!\langle W,X\rangle\big)+\overline{\varrho}\lambda\big(\|Y\|_1\!-\!\langle Y,S\rangle\big)\\
   &\ge{\textstyle\sum_{i=1}^{n_1}}\!\big[\phi(\sigma_i(W))+\overline{\varrho}\sigma_i(X)(1\!-\!\sigma_i(W))\big]
     +\lambda{\textstyle\sum_{i=1}^{n_1n_2}}\!\big[\phi(\pi_i(S))+\overline{\varrho}\pi_i(Y)\big(1\!-\!\pi_i(S)\big)\big]\\
   &\ge{\textstyle\sum_{i=1}^{r^*}}\big[\phi(\sigma_i(W))+\overline{\varrho}\sigma_{r^*}(X)(1\!-\!\sigma_i(W))\big]
      +\lambda{\textstyle\sum_{i=1}^{s^*}}\big[\phi(\pi_i(S))+\overline{\varrho}\pi_{s^*}(Y)(1\!-\!\pi_i(S))\big]\\
   &\ge{\textstyle\sum_{i=1}^{r^*}}\big[\phi(\sigma_i(W))+\phi'_-(1)(1\!-\!\sigma_i(W))\big]
      +\lambda{\textstyle\sum_{i=1}^{s^*}}\big[\phi(\pi_i(S))+\phi'_-(1)(1\!-\!\pi_i(S))\big]\\
   &\ge \phi(1)(r^*\!+\lambda s^*)={\rm rank}(X^*)+\lambda\|Y^*\|_0,
  \end{align*}
  where the first inequality is by the von Neumann's inequality and
  $\langle Y,S\rangle\le\langle\pi(Y),\pi(S)\rangle$, the second one is
  by the nonnegativity of $\phi$ in $[0,1]$, the third one is due to \eqref{temp-equa51}
  and $\overline{\varrho}={\phi'_-(1)}/{\alpha}$, and the last one is using
  $\phi(t)\ge\phi(1)+\phi_{-}'(1)(t\!-\!1)$ for $t\in[0,1]$.
  By Lemma \ref{rank-znorm-lemma1}, ${\rm rank}(X^*)+\lambda\|Y^*\|_0$ is exactly
  the optimal value of \eqref{rank-znorm-equiv}.
  Thus, by the arbitrariness of $\epsilon$ in $[-\varepsilon,\varepsilon]$ and
  that of $(X,W,Y,S)$ in $\mathbb{B}((X^*,W^*,Y^*,S^*),\varepsilon)\cap\mathcal{F}_{\epsilon}$,
  the last inequality shows that \eqref{rank-znorm-equiv} is partially calm at
  $(X^*,W^*,Y^*,S^*)$. By the arbitrariness of $(X^*,W^*,Y^*,S^*)$ in the global
  optimal solution set, it is partially calm in its optimal solution set.
  The second part of the conclusions follows by Proposition \ref{Epenalty-prop}(b).
 \end{proof}

  Similar to the rank minimization problem, from \eqref{rank-znorm-epenalty} one may get
  a class of equivalent Lipschitz surrogates for the rank plus zero-norm minimization
  problem \eqref{rank-znorm-min}, which are also D.C. surrogates if the feasible set
  of \eqref{rank-znorm-min} is convex. This result is stated as follows.
 \begin{corollary}\label{rank-znorm-corollary}
  Let $\phi\in\Phi$. If $\mathcal{S}$ is bounded, then there exists $\varrho^*>0$
  such that the problem \eqref{rank-znorm-min} has the same global optimal solution set as the following
  problem with $\varrho>\varrho^*$ does:
  \[
   \min_{(X,Y)\in\Omega}\bigg\{\varrho\|X\|_*\!-\!\sum_{i=1}^{n_1}\psi^*(\varrho\sigma_i(X))
                  +\!\varrho\lambda\|Y\|_1\!-\!\lambda\sum_{i=1}^{n_1}\!\sum_{j=1}^{n_2}\psi^*\big(\varrho|Y_{ij}|\big)\!:
                  f(X,Y)\le\delta\bigg\}.
  \]
 \end{corollary}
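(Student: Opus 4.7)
The plan is to reduce the corollary to Theorem \ref{rankznorm-theorem1} by a partial minimization over the auxiliary variables $(W,S)$. The boundedness of $\mathcal{S}$ together with Theorem \ref{rankznorm-theorem1} produces a threshold $\varrho^*>0$ such that for every $\varrho>\varrho^*$ the penalized problem \eqref{rank-znorm-epenalty} is a global exact penalty of the MPEC \eqref{rank-znorm-equiv}; Lemma \ref{rank-znorm-lemma1} then identifies the optimal set of \eqref{rank-znorm-equiv} with that of \eqref{rank-znorm-min} under the projection $(X,W,Y,S)\mapsto(X,Y)$. It therefore suffices to show that, once a feasible $(X,Y)$ is fixed, partial minimization of the objective of \eqref{rank-znorm-epenalty} over $(W,S)$ yields precisely the surrogate displayed in the statement.

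For this inner minimization, the $(W,S)$-dependence decouples. For the $W$-piece I would apply von Neumann's trace inequality $\langle W,X\rangle\le \sum_i\sigma_i(W)\sigma_i(X)$, whose equality case is attained at a $W$ simultaneously diagonalized with $X$ and feasible for $\|W\|\le 1$; combined with the definition $\psi^*(s)=\sup_{t\in[0,1]}\{st-\phi(t)\}$, this yields
\[
\min_{\|W\|\le 1}\Big\{{\textstyle\sum_{i=1}^{n_1}}\phi(\sigma_i(W))-\varrho\langle W,X\rangle\Big\}=-{\textstyle\sum_{i=1}^{n_1}}\psi^*(\varrho\sigma_i(X)),
\]
which is the identity already recorded right before Corollary \ref{rankmin-corollary}. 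For the $S$-piece, both the objective $\lambda\sum_{ij}\phi(|S_{ij}|)-\varrho\lambda\langle Y,S\rangle$ and the constraint $\|S\|_{\infty}\le 1$ are entry-wise separable; choosing $S_{ij}=\mathrm{sgn}(Y_{ij})t_{ij}$ reduces each scalar problem to $\max_{t\in[0,1]}\{\varrho|Y_{ij}|t-\phi(t)\}=\psi^*(\varrho|Y_{ij}|)$. Adding the terms $\varrho\|X\|_*$ and $\varrho\lambda\|Y\|_1$ already present in \eqref{rank-znorm-epenalty} produces the stated surrogate.

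The argument then closes by a standard two-way implication: $(X^*,Y^*)$ minimizes the surrogate over $\Omega\cap\{f\le\delta\}$ if and only if any pair $(W^*,S^*)$ attaining the inner minimum completes it to a minimizer of \eqml{rank-znorm-epenalty}, which by the reduction above is equivalent to optimality in \eqref{rank-znorm-min}. There is no substantive obstacle once Theorem \ref{rankznorm-theorem1} is in hand; the only care-point is the equality case in von Neumann's inequality under $\|W\|\le 1$, and this is automatic because $\mathrm{dom}\,\psi=[0,1]$ forces every optimal $t_i$ to lie in $[0,1]$, so a spectral-radius-one $W$ that realises the bound can be read off from the SVD of $X$.
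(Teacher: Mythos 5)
Your proposal is correct and follows essentially the route the paper intends: the paper states this corollary without a displayed proof, deriving it from Theorem \ref{rankznorm-theorem1} together with Proposition \ref{Epenalty-prop}(b) (boundedness of $\mathcal{S}$ giving compactness of the constraint set) and the elimination of $(W,S)$ via von Neumann's trace inequality and the conjugacy identity $-\sum_{i}\psi^*(\varrho\sigma_i(X))=\min_{\|W\|\le 1}\{\sum_i\phi(\sigma_i(W))-\varrho\langle W,X\rangle\}$ recorded before Corollary \ref{rankmin-corollary}, with the entrywise analogue for $S$ exactly as in \eqref{temp-prob1}. Your handling of the attainment and equality cases, and the projection argument through Lemma \ref{rank-znorm-lemma1}, matches the paper's scheme, so nothing substantive is missing.
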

  \subsection{Rank plus zero-norm regularized problems}\label{sec5.2}

  This subsection concerns with the rank plus zero-norm
  regularized minimization problem
  \begin{equation}\label{rank-znorm-rmin}
  \min_{X,Y\in \mathbb{R}^{n_1\times n_2}}\!\Big\{\nu f(X,Y)+{\rm rank}(X)\!+\lambda\|Y\|_0\!: (X,Y)\in\Omega\Big\},
  \end{equation}
  where $f$ and $\Omega$ are same as those in Subsection \ref{sec5.1}.
  Assume that \eqref{rank-znorm-rmin} has a nonempty global optimal solution set.
  By the characterization in \eqref{rank-znorm-chara},
  the following result holds.
  \begin{lemma}\label{rank-znorm-lemma2}
   Let $\phi\in\Phi$. The rank plus zero-norm problem \eqref{rank-znorm-rmin} is equivalent to
  \begin{align}\label{rank-znorm-requiv}
   &\min_{X,W,Y,S}\nu f(X,Y)+\!{\textstyle\sum_{i=1}^{n_1}}\phi(\sigma_i(W))+\lambda{\textstyle\sum_{i=1}^{n_1}\!\sum_{j=1}^{n_2}}\phi(|S_{ij}|)\nonumber\\
    &\ \ \ {\rm s.t.}\ \ \|X\|_*-\langle W,X\rangle+\lambda(\|Y\|_1\!-\!\langle Y,S\rangle)=0,\\
    &\qquad\quad \|W\|\le 1,\|S\|_{\infty}\le 1,(X,Y)\in\Omega,\nonumber
   \end{align}
   in the sense that if $(X^*,Y^*)$ is a global optimal solution of the problem \eqref{rank-znorm-rmin},
   then $(X^*,W^*,Y^*,{\rm sgn}(Y^*)\!+\!t^*(E-{\rm sgn}(|Y^*|)))$ is globally optimal to
   \eqref{rank-znorm-requiv} with the optimal value equal to that of \eqref{rank-znorm-rmin},
   where $W^*$ is same as in Lemma \ref{rankmin-lemma1}; and conversely, if $(X^*,W^*,Y^*,S^*)$
   is globally optimal to \eqref{rank-znorm-requiv}, then $(X^*,Y^*)$ is globally optimal
   to \eqref{rank-znorm-rmin}.
  \end{lemma}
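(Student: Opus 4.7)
The plan is to deduce Lemma \ref{rank-znorm-lemma2} from the variational characterization \eqref{rank-znorm-chara}, following the same pattern as the proof of Lemma \ref{rank-znorm-lemma1} but with the noise-level constraint $f(X,Y)\le\delta$ replaced by the regularization term $\nu f(X,Y)$ inside the objective. The two directions are parallel, so I would set up one common lemma-style computation: for any $(X,Y)\in\Omega$ and any $(W,S)\in\mathfrak{B}\times\mathscr{B}$ satisfying the equilibrium constraint $\|X\|_*-\langle W,X\rangle+\lambda(\|Y\|_1-\langle Y,S\rangle)=0$, equation \eqref{rank-znorm-chara} immediately gives
\[
  \nu f(X,Y)+{\textstyle\sum_{i=1}^{n_1}}\phi(\sigma_i(W))+\lambda{\textstyle\sum_{i,j}}\phi(|S_{ij}|)
  \ge \nu f(X,Y)+{\rm rank}(X)+\lambda\|Y\|_0.
\]
This inequality is the only quantitative ingredient needed.

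For the forward direction, I would take a global minimizer $(X^*,Y^*)$ of \eqref{rank-znorm-rmin} and verify by direct computation that $W^*=U_1^*(V_1^*)^{\mathbb{T}}+t^*U_2^*[{\rm Diag}(e)\ 0](V_2^*)^{\mathbb{T}}$ (exactly as in Lemma \ref{rankmin-lemma1}) and $S^*={\rm sgn}(Y^*)+t^*(E-{\rm sgn}(|Y^*|))$ attain the minimum in \eqref{rank-znorm-chara}. Two checks are required: first, $\|W^*\|\le 1$ and $\|S^*\|_\infty\le 1$ (use $t^*\in[0,1)$ from \eqref{phi-assump}); second, $\|X^*\|_*=\langle W^*,X^*\rangle$ (since $U_1^*(V_1^*)^{\mathbb{T}}$ is a subgradient of the nuclear norm at $X^*$) and $\|Y^*\|_1=\langle Y^*,S^*\rangle$ (since entries of $S^*$ at the support of $Y^*$ equal ${\rm sgn}(Y_{ij}^*)$). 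Together with the fact that $\phi(t^*)=0$ and $\phi(1)=1$, the objective of \eqref{rank-znorm-requiv} at $(X^*,W^*,Y^*,S^*)$ evaluates to $\nu f(X^*,Y^*)+{\rm rank}(X^*)+\lambda\|Y^*\|_0$. Combined with the displayed inequality above, this point is globally optimal to \eqref{rank-znorm-requiv} and the two optimal values coincide.

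For the converse, let $(X^*,W^*,Y^*,S^*)$ be a global minimizer of \eqref{rank-znorm-requiv}. By \eqref{rank-znorm-chara} applied to $(X^*,Y^*)$, we can replace $(W^*,S^*)$ by the minimizers of that variational problem without losing feasibility of \eqref{rank-znorm-requiv} and without increasing its objective; optimality then forces the value ${\textstyle\sum}\phi(\sigma_i(W^*))+\lambda{\textstyle\sum}\phi(|S_{ij}^*|)$ to coincide with ${\rm rank}(X^*)+\lambda\|Y^*\|_0$. For any competitor $(X,Y)\in\Omega$, the forward-direction construction exhibits a feasible $(X,W,Y,S)$ for \eqref{rank-znorm-requiv} whose objective equals $\nu f(X,Y)+{\rm rank}(X)+\lambda\|Y\|_0$, and comparing with $(X^*,W^*,Y^*,S^*)$ yields global optimality of $(X^*,Y^*)$ for \eqref{rank-znorm-rmin}.

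I do not foresee a real obstacle here; the statement is a bookkeeping consequence of \eqref{rank-znorm-chara} once the forward construction is checked. The only point that requires care is confirming that the stated minimizer $(W^*,S^*)$ really attains \eqref{rank-znorm-chara}: the rank part is handled by the SVD construction from Lemma \ref{rankmin-lemma1}, while the zero-norm part needs the observation that padding ${\rm sgn}(Y^*)$ by $t^*$ on the zero-entries both keeps $\|S^*\|_\infty\le 1$ and makes each summand $\phi(|S_{ij}^*|)$ equal to $0$ off the support of $Y^*$ and equal to $\phi(1)=1$ on the support, so the sum equals $\|Y^*\|_0$.
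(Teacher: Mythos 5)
Your proposal is correct and takes essentially the same route the paper intends: the paper states Lemma \ref{rank-znorm-lemma2} without a written proof, treating it as an immediate consequence of the variational characterization \eqref{rank-znorm-chara}, exactly as you do. Your write-up merely fills in the routine verifications the paper leaves implicit --- that $(W^*,S^*)$ is feasible with $\langle W^*,X^*\rangle=\|X^*\|_*$, $\langle Y^*,S^*\rangle=\|Y^*\|_1$ and objective value ${\rm rank}(X^*)+\lambda\|Y^*\|_0$ (using $\phi(t^*)=0$, $\phi(1)=1$, $t^*\in[0,1)$), together with the two-sided comparison via the inequality from \eqref{rank-znorm-chara} --- and these checks are all sound.
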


  By Lemma \ref{rank-znorm-lemma2}, the rank plus zero-norm regularized problem
  \eqref{rank-znorm-rmin} is equivalent to the MPEC \eqref{rank-znorm-requiv}.
  Next we shall show that under a suitable restriction on $\Omega$,
  this MPEC is uniformly partial clam over its global optimal solution set.
  To this end, for any given $(X,Y)\!\in\Omega$ and $\varrho\!>0$,
  with $\phi\in\!\Phi$ we define $X^{\varrho}\in\mathbb{R}^{n_1\times n_2}$ as in \eqref{Xrho}
  and $Y^{\varrho}\!\in\mathbb{R}^{n_1\times n_2}$ by
  \begin{equation}\label{Yrho}
    Y_{ij}^{\varrho}:=\left\{\begin{array}{cl}
                           \!|Y_{ij}|&{\rm if}\ \varrho|Y_{ij}|>\phi_{-}'(1);\\
                               0 &{\rm otherwise}.
                           \end{array}\right.
  \end{equation}

  \vspace{-0.3cm}
  \begin{theorem}\label{rankznorm-theorem2}
   Let $\phi\in\Phi$. Suppose that $f$ is Lipschitzian relative to $\Omega$ with
   constant $L_{\!f}>0$, and for any given $(X,Y)\in\Omega$ and $\varrho>0$,
   $(X^{\varrho},Y^{\varrho})\in\Omega$. Then, the MPEC \eqref{rank-znorm-requiv}
   is  uniformly partial calm over its global optimal solution set and the following problem
  \begin{align}\label{rank-znorm-repenalty}
   &\min_{X,W,Y,S}\nu f(X,Y)\!+\!\sum_{i=1}^{n_1}\phi(\sigma_i(W))\!+\lambda\sum_{i=1}^{n_1}\!\sum_{j=1}^{n_2}\phi(|S_{ij}|)
   \!+\!\varrho\big(\|X\|_*\!-\!\langle W,X\rangle\!+\!\lambda(\|Y\|_1\!-\!\langle Y,S\rangle)\big)\nonumber\\
    &\quad{\rm s.t.}\ \ \|W\|\le 1,\|S\|_{\infty}\le 1,(X,Y)\in\Omega
   \end{align}
  is a global exact penalty for the MPEC \eqref{rank-znorm-requiv}
   with threshold $\overline{\varrho}=\frac{\phi_{-}'(1)(1-t^*)\nu L_{\!f}}{(1-t_0)\min(1,\lambda)}$.
  \end{theorem}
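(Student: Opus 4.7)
The plan is to prove the theorem by combining the techniques used for Theorems \ref{Grz-theorem2} and \ref{rankrmin-theorem1}, since the MPEC \eqref{rank-znorm-requiv} couples a rank-type equilibrium constraint with a sparsity-type one and its penalized objective splits additively over the two blocks. I would pick an arbitrary feasible quadruple $(X,W,Y,S)$ with $\|W\|\le 1$, $\|S\|_\infty\le 1$ and $(X,Y)\in\Omega$, set $\varrho=\overline{\varrho}$ in \eqref{rank-znorm-repenalty}, and aim to show that the resulting objective value is bounded below by $\nu f(X^{\overline{\varrho}},Y^{\overline{\varrho}})+{\rm rank}(X^{\overline{\varrho}})+\lambda\|Y^{\overline{\varrho}}\|_0$. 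By the hypothesis $(X^{\overline{\varrho}},Y^{\overline{\varrho}})\in\Omega$, this lower bound is no smaller than the optimal value of \eqref{rank-znorm-rmin}, which equals the optimal value of \eqref{rank-znorm-requiv} by Lemma \ref{rank-znorm-lemma2}. Uniform partial calmness over the global optimal solution set then follows by the arbitrariness of $(X,W,Y,S)$, and the global exact penalization conclusion is immediate from Proposition \ref{Epenalty-prop}(a).

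For the lower bound, I would first apply the von Neumann trace inequality to decouple $\|X\|_*-\langle W,X\rangle$ (as in Theorem \ref{rankrmin-theorem1}) and the rearrangement inequality $\langle Y,S\rangle\le\langle\pi(Y),\pi(S)\rangle$ to decouple $\|Y\|_1-\langle Y,S\rangle$ (as in Theorem \ref{Grz-theorem2}), reducing the problem to separable scalar minimizations of the form $\min_{t\in[0,1]}\{\phi(t)+\overline{\varrho}\omega(1-t)\}$ with $\omega=\sigma_i(X)$ on the rank block and $\omega=|Y_{ij}|$ on the sparse block (the latter carrying the extra factor $\lambda$). Partitioning the scalar indices into three regions according to whether $\overline{\varrho}\omega>\phi'_-(1)$, $(1-t^*)^{-1}\le\overline{\varrho}\omega\le\phi'_-(1)$, or $\overline{\varrho}\omega<(1-t^*)^{-1}$, and invoking Lemma \ref{lemma-phi} of Appendix A in each case, yields a piecewise lower bound whose integer part equals ${\rm rank}(X^{\overline{\varrho}})+\lambda\|Y^{\overline{\varrho}}\|_0$ and whose linear part equals a positive multiple of $\|X-X^{\overline{\varrho}}\|_*+\lambda\|Y-Y^{\overline{\varrho}}\|_1$.

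The main technical point, and the step I expect to require care, is the joint calibration of $\overline{\varrho}$. After the three-region decomposition, the linear residuals on the rank block come with multiplier $\frac{\overline{\varrho}(1-t_0)}{\phi'_-(1)(1-t^*)}$, while the sparse block inherits the same multiplier scaled by $\lambda$. To absorb these residuals by the Lipschitz bound $|f(X,Y)-f(X^{\overline{\varrho}},Y^{\overline{\varrho}})|\le L_f\bigl(\|X-X^{\overline{\varrho}}\|_F+\|Y-Y^{\overline{\varrho}}\|_F\bigr)\le L_f\bigl(\|X-X^{\overline{\varrho}}\|_*+\|Y-Y^{\overline{\varrho}}\|_1\bigr)$, both multipliers must dominate $\nu L_f$. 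The rank-block condition forces $\overline{\varrho}(1-t_0)\ge\phi'_-(1)(1-t^*)\nu L_f$, and the sparse-block condition requires the analogous inequality after division by $\lambda$; the two together are exactly $\overline{\varrho}\ge\frac{\phi'_-(1)(1-t^*)\nu L_f}{(1-t_0)\min(1,\lambda)}$, which is the threshold in the statement and explains the appearance of the $\min(1,\lambda)^{-1}$ factor. Once this accounting is set, the remainder mirrors the proof of Theorem \ref{rankrmin-theorem1} applied simultaneously to both blocks.
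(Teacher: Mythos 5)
Your proposal is correct and follows essentially the same route as the paper's proof: the same decoupling of the penalty term via von Neumann's trace inequality on the rank block and the rearrangement inequality $\langle Y,S\rangle\le\langle\pi(Y),\pi(S)\rangle$ on the sparse block, the same three-region partition of the separable scalar problems handled by Lemma \ref{lemma-phi}, the same calibration of $\overline{\varrho}$ yielding the $\min(1,\lambda)$ factor, and the same absorption of $\|X-X^{\overline{\varrho}}\|_*+\|Y-Y^{\overline{\varrho}}\|_1$ via the Lipschitz property of $f$ and $(X^{\overline{\varrho}},Y^{\overline{\varrho}})\in\Omega$, concluding through Proposition \ref{Epenalty-prop}(a). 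Your accounting of where each multiplier must dominate $\nu L_{\!f}$ matches the paper's second inequality exactly, so there are no gaps.
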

  \begin{proof}
  Take an arbitrary $(X,Y,W,S)$ with $(X,Y)\in\Omega$ and $(W,S)\in\mathfrak{B}\times\mathscr{B}$.
  Let $X$ have the SVD as $U[{\rm Diag}(\sigma(X))\ \ 0]V^{\mathbb{T}}$.
  Let $X^{\overline{\varrho}}$ be defined as in \eqref{Xrho} with $\varrho=\overline{\varrho}$,
  and let $Y^{\overline{\varrho}}$ be defined by \eqref{Yrho} with $\varrho=\overline{\varrho}$.
  Let $J$ and $\overline{J}$ be defined by \eqref{indexJ}. Define
  \begin{equation}\label{indexI}
    I=\Big\{i\!: (1-t^*)^{-1}\le\overline{\varrho}\pi_i(Y)\le\phi_{-}'(1)\Big\}
    \ \ {\rm and}\ \
    \overline{I}=\Big\{i\!: 0\le\overline{\varrho}\pi_i(Y)<(1-t^*)^{-1}\Big\}.
  \end{equation}
  From the proof of Theorem \ref{rankrmin-theorem1} and
  $\langle Y,T\rangle\le\langle \pi(Y),\pi(T)\rangle$, it is easy to verify that
  \begin{align}\label{temp-prob1}
   &\!\min_{\|Z\|\le 1,\|T\|_{\infty}\le 1}\bigg\{\sum_{i=1}^{n_1}\phi(\sigma_i(Z))\!+\!
   \lambda\sum_{i=1}^{n_1}\!\sum_{j=1}^{n_2}\phi(|T_{ij}|)
   +\overline{\varrho}\big(\|X\|_*\!-\!\langle Z,X\rangle+\lambda(\|Y\|_1\!-\!\langle Y,T\rangle)\big)\bigg\}\nonumber\\
   &=\sum_{i=1}^{n_1}\min_{t\in[0,1]}\Big\{\phi(t)+\overline{\varrho}\sigma_i(X)(1-t)\Big\}
    +\lambda\sum_{i=1}^{n_1n_2}\min_{t\in[0,1]}\Big\{\phi(t)+\overline{\varrho}|\pi_i(Y)|(1-t)\Big\}.
  \end{align}
  Together with Lemma \ref{lemma-phi} with $\omega=\sigma_i(X)$ and $|\pi_i(Y)|$, respectively,
  it follows that
  \begin{align*}
   &{\textstyle\sum_{i=1}^{n_1}}\phi(\sigma_i(W))+\lambda{\textstyle\sum_{i=1}^{n_1}\!\sum_{j=1}^{n_2}}\phi(|S_{ij}|)
   +\overline{\varrho}\big(\|X\|_*\!-\!\langle W,X\rangle\big)
    +\overline{\varrho}\lambda\big(\|Y\|_1\!-\!\langle Y,S\rangle\big)\nonumber\\
   &\ge {\rm rank}(X^{\overline{\varrho}})+\frac{\overline{\varrho}(1\!-t_0)}{\phi_{-}'(1)(1\!-t^*)}
     {\textstyle\sum_{j\in J}}\,\sigma_j(X)+\overline{\varrho}(1-t_0){\textstyle\sum_{j\in\overline{J}}}\,\sigma_j(X)\\
   &\quad+\lambda\|Y^{\overline{\varrho}}\|_0+\frac{\lambda\overline{\varrho}(1\!-t_0)}{\phi_{-}'(1)(1\!-t^*)}
     {\textstyle\sum_{i\in I}}\,\pi_i(Y)+\lambda\overline{\varrho}(1-t_0){\textstyle\sum_{i\in\overline{I}}}\pi_i(Y)\\
   &\ge {\rm rank}(X^{\overline{\varrho}})+\nu L_{\!f}{\textstyle\sum_{j\in J\cup\overline{J}}}\,\sigma_i(X)
      +\lambda\|Y^{\overline{\varrho}}\|_0+\nu L_{\!f}{\textstyle\sum_{i\in I\cup\overline{I}}}\,\pi_i(Y)\nonumber\\
   &\ge {\rm rank}(X^{\overline{\varrho}})+\lambda\|Y^{\overline{\varrho}}\|_0
       +\nu L_{\!f}\big[\|X\!-\!X^{\overline{\varrho}}\|_*+\|Y\!-\!Y^{\overline{\varrho}}\|_1\big]\\
   &\ge {\rm rank}(X^{\overline{\varrho}})+\lambda\|Y^{\overline{\varrho}}\|_0
       + \nu f(X^{\overline{\varrho}},Y^{\overline{\varrho}})-\nu f(X,Y)
  \end{align*}
  where the second inequality is by the definition of $\overline{\varrho}$,
  and the last one is due to the Lipschitz of $f$ relative to $\Omega$ and
  $(X^{\overline{\varrho}},Y^{\overline{\varrho}})\in\Omega$.
  Since $\nu f(X^{\overline{\varrho}},Y^{\overline{\varrho}})+
  {\rm rank}(X^{\overline{\varrho}})+\lambda\|Y^{\overline{\varrho}}\|_0$ is no less than
  the optimal value of \eqref{rank-znorm-rmin}, by the arbitrariness of $((X,Y),W,S)$
  in $\Omega\times\mathfrak{B}\times\mathscr{B}$, this shows that
  \eqref{rank-znorm-requiv} is uniformly partial calm over its global optimal solution set.
  \end{proof}
 \begin{remark}\label{remark-rank-znorm-rmin}
  When $\Omega=\Omega_1\times\Omega_2$ where $\Omega_1$ takes one of the sets
  in Remark \ref{rankrmin-remark} and $\Omega_2$ takes one of the sets as in
  Remark \ref{Gzr-remark}, the assumption of Theorem \ref{rankznorm-theorem2} holds.
 \end{remark}

  Similarly, from \eqref{rank-znorm-repenalty} one may obtain a class of equivalent
  Lipschitz surrogates for \eqref{rank-znorm-rmin},
  which are also D.C. surrogates if the function $f$ and the set $\Omega$ are convex.
 \begin{corollary}\label{rank-znorm-corollary2}
  Let $\phi\in\Phi$. Under the assumption of Theorem \ref{rankznorm-theorem2},
  the problem \eqref{rank-znorm-rmin} has the same global optimal solution set
  as the following problem with $\varrho>\frac{\phi_{-}'(1)(1-t^*)\nu L_{\!f}}{\min(1,\lambda)(1\!-t_0)}$ does:
  \begin{equation}\label{rank-znorm-surrogate2}
   \min_{(X,Y)\in\Omega}\!\bigg\{\nu\!f(X,Y)+\!\varrho\|X\|_*\!-\!\sum_{i=1}^{n_1}\psi^*(\varrho\sigma_i(X))
                  +\!\varrho\lambda\|Y\|_1\!-\lambda\sum_{i=1}^{n_1}\!\sum_{j=1}^{n_2}\psi^*\big(\varrho|Y_{ij}|\big)\bigg\}.
  \end{equation}
 \end{corollary}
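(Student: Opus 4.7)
The plan is to deduce Corollary \ref{rank-znorm-corollary2} from Theorem \ref{rankznorm-theorem2} by analytically eliminating the dual variables $W$ and $S$ from the global exact penalty \eqref{rank-znorm-repenalty}, exactly paralleling how Corollary \ref{rankrmin-corollary} was obtained from Theorem \ref{rankrmin-theorem1}.

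First, I would invoke Theorem \ref{rankznorm-theorem2} to conclude that whenever $\varrho>\frac{\phi_{-}'(1)(1-t^*)\nu L_{\!f}}{(1-t_0)\min(1,\lambda)}$, the global optimal solution set of \eqref{rank-znorm-repenalty} coincides with that of the MPEC \eqref{rank-znorm-requiv}. Combining this with Lemma \ref{rank-znorm-lemma2}, it follows that $(X,Y)$ is globally optimal to \eqref{rank-znorm-rmin} if and only if there exist $(W,S)\in\mathfrak{B}\times\mathscr{B}$ such that $(X,W,Y,S)$ is globally optimal to \eqref{rank-znorm-repenalty}.

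The crux is then to fix $(X,Y)\in\Omega$ and compute the partial minimum of the objective of \eqref{rank-znorm-repenalty} over $(W,S)\in\mathfrak{B}\times\mathscr{B}$. The $W$- and $S$-parts decouple. For the $W$-piece, I would invoke the von Neumann trace inequality to reduce
\[
\min_{\|W\|\le 1}\!\Big\{{\textstyle\sum_{i=1}^{n_1}}\phi(\sigma_i(W))-\varrho\langle W,X\rangle\Big\}
={\textstyle\sum_{i=1}^{n_1}}\min_{t\in[0,1]}\!\{\phi(t)-\varrho\sigma_i(X)t\}=-{\textstyle\sum_{i=1}^{n_1}}\psi^*(\varrho\sigma_i(X)),
\]
using the definitions of $\psi$ in \eqref{phi-psi} and of its conjugate $\psi^*$; attainment is provided by choosing $W$ whose SVD inherits the singular vectors of $X$ with scalar-optimal singular values in $[0,1]$. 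For the $S$-minimization the objective is entrywise separable in $S_{ij}\in[-1,1]$, and since $\phi(|S_{ij}|)$ is even in $S_{ij}$, the analogous scalarization gives
\[
\min_{\|S\|_{\infty}\le 1}\!\Big\{\lambda{\textstyle\sum_{i,j}}\phi(|S_{ij}|)-\varrho\lambda\langle Y,S\rangle\Big\}
=-\lambda{\textstyle\sum_{i=1}^{n_1}\!\sum_{j=1}^{n_2}}\psi^*(\varrho|Y_{ij}|).
\]

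Substituting the two partial minima back into \eqref{rank-znorm-repenalty} produces exactly the objective of \eqref{rank-znorm-surrogate2}; therefore the $(X,Y)$-projection of the optimizer set of \eqref{rank-znorm-repenalty} coincides with the optimizer set of \eqref{rank-znorm-surrogate2}, and the corollary follows by chaining this with the equivalence from the first paragraph. The only technical point requiring care is the attainment in the von Neumann step: one must verify that the lower bound is realized by some $W\in\mathfrak{B}$ and $S\in\mathscr{B}$, which rests on the compactness of $\mathfrak{B}$ and $\mathscr{B}$ together with the lower semicontinuity of $\phi$ on $[0,1]$. Beyond that, the argument is essentially symbolic substitution and will not require any estimate beyond those already embedded in Theorem \ref{rankznorm-theorem2}.
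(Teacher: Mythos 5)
Your proposal is correct and follows exactly the route the paper intends: Theorem \ref{rankznorm-theorem2} supplies the global exact penalty, and the dual variables $(W,S)$ are eliminated via the von Neumann trace inequality and the conjugate identity $-\psi^*(s)=\min_{t\in[0,1]}\{\phi(t)-st\}$, which is precisely the separable reduction the paper already records in \eqref{temp-prob1} and in the identity stated before Corollary \ref{rankmin-corollary}. Your attention to attainment of the inner minimum (compactness of $\mathfrak{B}$, $\mathscr{B}$, $[0,1]$ and lower semicontinuity of $\phi$) is a sound addition but does not change the argument.
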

  \section{Equivalent L-surrogates for simultaneous rank and zero-norm minimization problems}\label{sec6}

  Simultaneous rank and zero-norm optimization problems arise from
  the applications of simultaneous structured models in signal processing,
  phase retrieval, multi-task learning and sparse principal component analysis
  (see, e.g., \cite{Richard12,OH15}). Let $f\!:\mathbb{R}^{n_1\times n_2}\to(-\infty,+\infty]$
  be a proper lsc function, and $\Omega\subseteq\mathbb{R}^{n_1\times n_2}$
  be a closed set. Given a noise level $\delta>0$ and a parameter $\lambda>0$,
  consider the simultaneous rank and zero-norm minimization
  \begin{equation}\label{srank-znorm-min}
  \min_{X\in \mathbb{R}^{n_1\times n_2}}\!\Big\{{\rm rank}(X)\!+\lambda\|X\|_0\!: f(X)\le \delta,\,X\in\Omega\Big\}.
  \end{equation}
  We assume that \eqref{srank-znorm-min} has a nonempty global optimal solution set
  excluding the origin. By the characterization in \eqref{Gzn-chara} and
  \eqref{rank-chara}, for any $X\!\in\mathbb{R}^{n_1\times n_2}$, with $\phi\in\Phi$ one has that
  \begin{align}\label{srank-znorm-chara}
    {\rm rank}(X)+\lambda\|X\|_0
    &=\min_{W\in\mathfrak{B},S\in\mathscr{B}}\sum_{i=1}^{n_1}\phi(\sigma_i(W))
    +\lambda\sum_{i=1}^{n_1}\!\sum_{j=1}^{n_2}\phi(|S_{ij}|)\nonumber\\
    &\qquad\quad{\rm s.t.}\ \ \|X\|_*\!-\!\langle W,X\rangle+\lambda(\|X\|_1\!-\!\langle X,S\rangle)=0.
  \end{align}
  Notice that $\|X\|_*\!-\!\langle W,X\rangle+\lambda(\|X\|_1\!-\!\langle X,S\rangle)=0,
  W\in\mathfrak{B},S\in\mathscr{B}$
  if and only if
  \[
   \|X\|_*-\langle W,X\rangle+\|X\|_1\!-\langle X,S\rangle=0,\|W\|\le 1,\|S\|_{\infty}\le 1
   \Longleftrightarrow X\in\mathcal{N}_{\mathfrak{B}}(W)\cap\mathcal{N}_{\mathscr{B}}(S).
  \]
  Equation \eqref{srank-znorm-chara} shows that the simultaneous rank and zero-norm function
  is also an optimal value function of a parameterized equilibrium problem.
  \begin{lemma}\label{srank-znorm-lemma1}
   Let $\phi\in \Phi$. The simultaneously structured problem \eqref{srank-znorm-min} is equivalent to
   \begin{align}\label{srank-znorm-equiv}
   &\min_{X,W,S\in\mathbb{R}^{n_1\times n_2}}\!{\textstyle\sum_{i=1}^{n_1}}\phi(\sigma_i(W))+\lambda{\textstyle\sum_{i=1}^{n_1}\!\sum_{j=1}^{n_2}}\phi(|S_{ij}|)\nonumber\\
    &\qquad\ {\rm s.t.}\ \ \|X\|_*-\langle W,X\rangle+\lambda(\|X\|_1\!-\!\langle X,S\rangle)=0,\\
    &\qquad\qquad\ \|W\|\le 1,\|S\|_{\infty}\le 1,X\in\Omega,\, f(X)\le\delta\nonumber
   \end{align}
   in the sense that if $X^*$ is a global optimal solution to \eqref{srank-znorm-min},
   then $(X^*,W^*,S^*)$ is globally optimal to \eqref{srank-znorm-equiv} where
   $W^*$ is same as in Lemma \ref{rankmin-lemma1} and $S^*\!={\rm sgn}(X^*)+t^*(E-{\rm sgn}(|X^*|))$;
   conversely, if $(X^*,W^*,S^*\!)$ is globally optimal to \eqref{srank-znorm-equiv},
   then $X^*$ is globally optimal to \eqref{srank-znorm-min}.
  \end{lemma}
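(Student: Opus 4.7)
The plan is to mimic the template established for Lemmas \ref{rankmin-lemma1}, \ref{rank-rmin-lemma1}, \ref{rank-znorm-lemma1}, and \ref{rank-znorm-lemma2}: use the variational identity \eqref{srank-znorm-chara} to relate the two optimal values directly, then unpack the minimizer on the right-hand side to produce the explicit $W^*$ and $S^*$.

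First I would establish the ``$\Longrightarrow$'' direction. Let $X^*$ be a global optimum of \eqref{srank-znorm-min} with objective value $v^*$. By \eqref{srank-znorm-chara} applied at $X=X^*$, one has
\[
{\rm rank}(X^*)+\lambda\|X^*\|_0
=\min_{W\in\mathfrak{B},\,S\in\mathscr{B}}\Big\{{\textstyle\sum_i}\phi(\sigma_i(W))+\lambda{\textstyle\sum_{i,j}}\phi(|S_{ij}|):\|X^*\|_*-\langle W,X^*\rangle+\lambda(\|X^*\|_1-\langle X^*,S\rangle)=0\Big\}.
\]
I would then check that the proposed pair $(W^*,S^*)$ with $W^*=U_1^*(V_1^*)^{\mathbb{T}}+t^*U_2^*[{\rm Diag}(e)\ 0](V_2^*)^{\mathbb{T}}$ and $S^*={\rm sgn}(X^*)+t^*(E-{\rm sgn}(|X^*|))$ actually attains this minimum. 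Concretely, $\|W^*\|\le 1$ holds because its singular values are in $\{1,t^*\}\subseteq[0,1]$, and the SVD of $X^*$ yields $\|X^*\|_*=\langle W^*,X^*\rangle$; likewise $\|S^*\|_\infty\le 1$ and $\|X^*\|_1=\langle X^*,S^*\rangle$ by construction of the sign pattern padded with $t^*$. Therefore the equilibrium constraint of \eqref{srank-znorm-equiv} is satisfied. Moreover, $\sum_i\phi(\sigma_i(W^*))=r^*\phi(1)+(n_1-r^*)\phi(t^*)=r^*={\rm rank}(X^*)$ by \eqref{phi-assump}, and analogously $\sum_{i,j}\phi(|S^*_{ij}|)=\|X^*\|_0$, so the objective value of \eqref{srank-znorm-equiv} at $(X^*,W^*,S^*)$ equals $v^*$. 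Combined with the obvious feasibility $X^*\in\Omega$ and $f(X^*)\le\delta$, this shows $(X^*,W^*,S^*)$ is feasible with objective equal to $v^*$.

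For the ``$\Longleftarrow$'' direction, take any global optimum $(X^*,W^*,S^*)$ of \eqref{srank-znorm-equiv} and any feasible $X$ of \eqref{srank-znorm-min}. By \eqref{srank-znorm-chara}, there exist $\widetilde W\in\mathfrak{B}$, $\widetilde S\in\mathscr{B}$ satisfying the equilibrium constraint at $X$ with $\sum_i\phi(\sigma_i(\widetilde W))+\lambda\sum_{i,j}\phi(|\widetilde S_{ij}|)={\rm rank}(X)+\lambda\|X\|_0$; hence $(X,\widetilde W,\widetilde S)$ is feasible for \eqref{srank-znorm-equiv}. By optimality of $(X^*,W^*,S^*)$ and a second application of \eqref{srank-znorm-chara} (this time as a lower bound at $X^*$), we get
\[
{\rm rank}(X^*)+\lambda\|X^*\|_0
\le {\textstyle\sum_i}\phi(\sigma_i(W^*))+\lambda{\textstyle\sum_{i,j}}\phi(|S^*_{ij}|)
\le {\rm rank}(X)+\lambda\|X\|_0,
\]
where the first inequality uses that $(W^*,S^*)$ is admissible in the right-hand side of \eqref{srank-znorm-chara} at $X^*$. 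Since $X^*\in\Omega$ and $f(X^*)\le\delta$, $X^*$ is feasible for \eqref{srank-znorm-min} and the above chain shows it is optimal.

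There is essentially no hard step here; the construction is a direct adaptation of the argument in Lemma \ref{rank-znorm-lemma1}, just carried out on the same variable $X$ instead of two separate variables $X,Y$. The one point that deserves care is checking that the constructed $W^*$ and $S^*$ simultaneously tighten both parts of the equilibrium equality at $X^*$, since here $X$ plays the role of both the low-rank and the sparse argument; but this is immediate because the two constraints are decoupled in $(W,S)$, and the equilibrium sum being zero is equivalent to the two nonnegative terms $\|X\|_*-\langle W,X\rangle$ and $\|X\|_1-\langle X,S\rangle$ vanishing individually.
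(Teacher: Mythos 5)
Your proposal is correct and follows essentially the same route as the paper, which states this lemma as an immediate consequence of the variational characterization \eqref{srank-znorm-chara}, exactly as you spell out: verify that $(W^*,S^*)$ is admissible and attains the value ${\rm rank}(X^*)+\lambda\|X^*\|_0$ via $\phi(1)=1$, $\phi(t^*)=0$, and use \eqref{srank-znorm-chara} as a lower bound for the converse. The only cosmetic remark is that in your forward direction you stop at ``feasible with objective equal to $v^*$''; to conclude global optimality of $(X^*,W^*,S^*)$ for \eqref{srank-znorm-equiv} you should also note that every feasible $(X,W,S)$ has objective at least ${\rm rank}(X)+\lambda\|X\|_0\ge v^*$ by \eqref{srank-znorm-chara} --- the very bound you already invoke in the converse part, so the gap is trivially filled.
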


  Lemma \ref{srank-znorm-lemma1} states that the simultaneous rank and zero-norm minimization
  problem \eqref{srank-znorm-min} is equivalent to the MPEC \eqref{srank-znorm-equiv},
  which involves a double equilibrium restriction on $X$.
  By following the arguments as those for Theorem \ref{rankznorm-theorem1},
  one can establish the uniform partial calmness of the MPEC \eqref{srank-znorm-equiv}
  over its global optimal solution set as follows.
  \begin{theorem}\label{srankmin-theorem1}
  Let $\phi\in\!\Phi$. The problem \eqref{srank-znorm-equiv} is partially calm
  in its optimal solution set, and if the feasible set of \eqref{srank-znorm-min} is bounded,
  the following problem is its global exact penalty:
  \begin{align}\label{srank-znorm-epenalty}
   &\min_{X,W,S}\,\sum_{i=1}^{n_1}\phi(\sigma_i(W))\!+\lambda\sum_{i=1}^{n_1}\sum_{j=1}^{n_2}\phi(|S_{ij}|)
    +\varrho\big(\|X\|_*\!-\langle W,X\rangle\big)+\varrho\lambda\big(\|X\|_1\!-\!\langle X,S\rangle\big)\nonumber\\
   &\ \ {\rm s.t.}\ \ \|W\|\le 1,\|S\|_{\infty}\le 1,X\in\Omega,\, f(X)\le\delta.
   \end{align}
 \end{theorem}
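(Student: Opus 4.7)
The plan is to mirror, almost line for line, the argument used for Theorem \ref{rankznorm-theorem1}, exploiting the fact that the double equilibrium constraint in \eqref{srank-znorm-equiv} decouples into one nuclear-norm type piece and one $\ell_1$-type piece even though both act on the same variable $X$. First I would fix an arbitrary global optimal solution $(X^*,W^*,S^*)$ of \eqref{srank-znorm-equiv}. By Lemma \ref{srank-znorm-lemma1}, $X^*$ is globally optimal to \eqref{srank-znorm-min}, and by the standing nonzero-solution assumption $X^*\neq 0$. Writing $r^*=\mathrm{rank}(X^*)$ and $s^*=\|X^*\|_0$, we then have $\sigma_{r^*}(X^*)>0$ and $\pi_{s^*}(X^*)>0$, so by the continuity of $\sigma_{r^*}(\cdot)$ and $\pi_{s^*}(\cdot)$ there exists $\varepsilon>0$ and $\alpha:=\tfrac12\min(\sigma_{r^*}(X^*),\pi_{s^*}(X^*))$ such that
\[
  \sigma_{r^*}(X)\ge \alpha\quad\textrm{and}\quad \pi_{s^*}(X)\ge \alpha\qquad \forall X\in\mathbb{B}(X^*,\varepsilon).
\]

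Next I would set $\overline{\varrho}=\phi'_{-}(1)/\alpha$ and write down the $\epsilon$-perturbed feasible set $\mathcal{F}_{\epsilon}$ of \eqref{srank-znorm-equiv} (single equation $\|X\|_*-\langle W,X\rangle+\lambda(\|X\|_1-\langle X,S\rangle)=\epsilon$ over the same compact $\mathfrak{B}\times\mathscr{B}$ and over $\{X\in\Omega:f(X)\le\delta\}$). For $\epsilon\in[0,\varepsilon]$ (the case $\epsilon<0$ being symmetric) and $(X,W,S)\in\mathcal{F}_{\epsilon}\cap\mathbb{B}((X^*,W^*,S^*),\varepsilon)$, the key chain of inequalities combines: (i) von Neumann's trace inequality $\langle W,X\rangle\le\sum_i\sigma_i(W)\sigma_i(X)$ together with $\langle S,X\rangle\le\sum_i\pi_i(S)\pi_i(X)$; (ii) the nonnegativity of $\phi$ on $[0,1]$ to restrict the sums to $i\le r^*$ and $i\le s^*$; (iii) the uniform lower bounds $\sigma_{r^*}(X)\ge\alpha$, $\pi_{s^*}(X)\ge\alpha$ combined with $\overline{\varrho}=\phi'_{-}(1)/\alpha$; and (iv) the subgradient bound $\phi(t)\ge\phi(1)+\phi'_{-}(1)(t-1)$ for $t\in[0,1]$. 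Exactly as in the proof of Theorem \ref{rankznorm-theorem1}, this yields
\[
  \sum_{i=1}^{n_1}\phi(\sigma_i(W))+\lambda\!\sum_{i,j}\phi(|S_{ij}|)+\overline{\varrho}\,\epsilon
  \;\ge\; r^*+\lambda s^*,
\]
which, since $r^*+\lambda s^*$ is the optimal value of \eqref{srank-znorm-equiv}, proves partial calmness at $(X^*,W^*,S^*)$; arbitrariness of the chosen optimal point then gives partial calmness over the entire global optimal solution set.

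For the second assertion I would invoke Proposition \ref{Epenalty-prop}(b). The constraint set $\Delta=\mathfrak{B}\times\mathscr{B}\times\{X\in\Omega:f(X)\le\delta\}$ of \eqref{srank-znorm-equiv} is closed, and when the feasible set of \eqref{srank-znorm-min} is bounded, the last factor is compact; together with the bounded unit balls $\mathfrak{B}$ and $\mathscr{B}$, $\Delta$ is compact. Hence partial calmness over $\mathcal{F}^*$ upgrades to global exact penalization, and \eqref{srank-znorm-epenalty} is a global exact penalty of \eqref{srank-znorm-equiv}.

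The main obstacle, such as it is, lies in setting up step (i)–(iv) cleanly under the single-variable coupling: one must be careful that the \emph{same} $X$ is simultaneously estimated from below by $\sigma_{r^*}(X)\ge\alpha$ and $\pi_{s^*}(X)\ge\alpha$ on one common small ball, which forces the choice $\alpha=\tfrac12\min(\sigma_{r^*}(X^*),\pi_{s^*}(X^*))$ and a single $\overline{\varrho}=\phi'_{-}(1)/\alpha$ serving both equilibrium parts. Everything else is a mechanical repetition of the two-variable case with $Y$ replaced by $X$.
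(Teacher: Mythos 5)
Your proposal is correct and takes essentially the same route as the paper, which proves Theorem \ref{srankmin-theorem1} precisely by repeating the argument of Theorem \ref{rankznorm-theorem1} with $Y$ replaced by $X$: a single $\alpha=\min(\sigma_{r^*}(X^*),\pi_{s^*}(X^*))/2$ valid on one common ball, one threshold $\overline{\varrho}=\phi_{-}'(1)/\alpha$ serving both equilibrium parts, and then Proposition \ref{Epenalty-prop}(b) using compactness of $\Delta=\{(X,W,S):\|W\|\le 1,\,\|S\|_\infty\le 1,\,X\in\Omega,\,f(X)\le\delta\}$ when the feasible set of \eqref{srank-znorm-min} is bounded and closed. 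One cosmetic slip: the case $\epsilon<0$ is not ``symmetric'' but vacuous, since $\|X\|_*-\langle W,X\rangle\ge 0$ and $\|X\|_1-\langle X,S\rangle\ge 0$ whenever $\|W\|\le 1$ and $\|S\|_\infty\le 1$, so $\mathcal{F}_\epsilon=\emptyset$ for $\epsilon<0$.
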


  Similarly, from \eqref{srank-znorm-epenalty} one may obtain a class of equivalent
  Lipschitz surrogates for the problem \eqref{srank-znorm-min},
  which are also D.C. surrogates if the feasible set of \eqref{srank-znorm-min} is convex.
 \begin{corollary}\label{rank-znorm-corollary}
  Let $\phi\in\Phi$. If the feasible set of \eqref{srank-znorm-min} is bounded,
  there exists $\varrho^*$ such that \eqref{srank-znorm-min} has the same
  global optimal solution set as the following problem with $\varrho>\!\varrho^*$ does:
  \[
   \min_{X\in\mathbb{R}^{n_1\times n_2}}\!\bigg\{\varrho\|X\|_*\!-\sum_{i=1}^{n_1}\psi^*(\varrho\sigma_i(X))
                  +\!\varrho\|X\|_1\!-\!\lambda\sum_{i=1}^{n_1}\!\sum_{j=1}^{n_2}\psi^*\big(\varrho|X_{ij}|\big)\!:
                  f(X)\le\delta,X\in\Omega\bigg\}.
  \]
 \end{corollary}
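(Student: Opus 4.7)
The plan is to obtain this corollary by eliminating the dual variables $W$ and $S$ from the global exact penalty problem \eqref{srank-znorm-epenalty} and then invoking Theorem \ref{srankmin-theorem1} together with Lemma \ref{srank-znorm-lemma1}. By Theorem \ref{srankmin-theorem1}, under the boundedness of the feasible set of \eqref{srank-znorm-min} there exists $\varrho^*>0$ such that for every $\varrho>\varrho^*$ the global optimal solution set of \eqref{srank-znorm-epenalty} coincides with that of \eqref{srank-znorm-equiv}. By Lemma \ref{srank-znorm-lemma1}, the $X$-projection of the latter equals the global optimal solution set of \eqref{srank-znorm-min}. Hence it suffices to show that performing the partial minimization over $(W,S)\in\mathfrak{B}\times\mathscr{B}$ in \eqref{srank-znorm-epenalty} produces exactly the displayed surrogate, and that this partial minimization preserves global optima in the $X$-component.

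For the $W$-block I would fix $X$ with SVD $X=U[{\rm Diag}(\sigma(X))\ 0]V^{\mathbb{T}}$ and apply the von Neumann trace inequality, precisely as in \eqref{temp-prob} of the proof of Theorem \ref{rankrmin-theorem1}, to reduce
\[
\min_{\|W\|\le 1}\Bigl\{{\textstyle\sum_{i=1}^{n_1}}\phi(\sigma_i(W))-\varrho\langle W,X\rangle\Bigr\}
={\textstyle\sum_{i=1}^{n_1}}\min_{t\in[0,1]}\bigl\{\phi(t)-\varrho\sigma_i(X)\,t\bigr\}.
\]
Since $\psi$ agrees with $\phi$ on $[0,1]$ and equals $+\infty$ elsewhere, the inner minimum is exactly $-\psi^*(\varrho\sigma_i(X))$. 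An equivalent attainer $W^*$ is given (as in Lemma \ref{rankmin-lemma1}) by plugging a maximizer $t_i^*\in\partial\psi^*(\varrho\sigma_i(X))\cap[0,1]$ into the spectral decomposition aligned with $X$. Treating the $S$-block analogously: decoupling entrywise and using $\langle X,S\rangle\le\langle\pi(X),\pi(S)\rangle$ (attained by $S_{ij}={\rm sgn}(X_{ij})\,t_{ij}$), one obtains
\[
\min_{\|S\|_\infty\le 1}\!\Bigl\{\lambda{\textstyle\sum_{ij}}\phi(|S_{ij}|)-\varrho\lambda\langle X,S\rangle\Bigr\}
=-\lambda{\textstyle\sum_{ij}}\psi^*(\varrho|X_{ij}|).
\]
Substituting these two values back in \eqref{srank-znorm-epenalty} produces the displayed objective $\varrho\|X\|_*-\sum_i\psi^*(\varrho\sigma_i(X))+\varrho\lambda\|X\|_1-\lambda\sum_{ij}\psi^*(\varrho|X_{ij}|)$ over $\{X\in\Omega:f(X)\le\delta\}$.

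Finally I would argue the coincidence of global optimal solution sets. If $X^*$ solves the surrogate, then picking $(W^*,S^*)$ from the above attainers gives a triple $(X^*,W^*,S^*)$ feasible for \eqref{srank-znorm-epenalty} whose objective equals the optimal value of the surrogate and which, by construction, is no larger than the value achieved at any competitor in \eqref{srank-znorm-epenalty}; hence $(X^*,W^*,S^*)$ is globally optimal for \eqref{srank-znorm-epenalty}, and by Theorem \ref{srankmin-theorem1} plus Lemma \ref{srank-znorm-lemma1}, $X^*$ is globally optimal for \eqref{srank-znorm-min}. Conversely, if $X^*$ is globally optimal for \eqref{srank-znorm-min}, then the extended triple of Lemma \ref{srank-znorm-lemma1} is globally optimal for \eqref{srank-znorm-epenalty} for $\varrho>\varrho^*$, and restricting to the $X$-component and noting that $(W^*,S^*)$ must achieve the two partial minima shows that $X^*$ solves the surrogate. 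The step I expect to require the most care is the $W$-reduction, because the attainer of the inner minimum $\min_{t\in[0,1]}\{\phi(t)-\varrho\sigma_i(X)t\}$ for indices where $\sigma_i(X)=0$ does not force $\sigma_i(W^*)=0$, so one has to be attentive in using von Neumann's inequality to align the singular subspaces of $W^*$ with those of $X$ without losing the exact value $-\sum_i\psi^*(\varrho\sigma_i(X))$; apart from that the argument is essentially a transcription of the corresponding reductions in Corollaries \ref{Grz-corollary} and \ref{rankrmin-corollary}.
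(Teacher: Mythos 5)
Your proposal is correct and follows essentially the same route the paper intends: invoke Theorem \ref{srankmin-theorem1} for the global exact penalty \eqref{srank-znorm-epenalty}, then eliminate $(W,S)$ by decoupled partial minimization via von Neumann's trace inequality and the conjugacy $\min_{t\in[0,1]}\{\phi(t)-st\}=-\psi^*(s)$, exactly as in the reductions behind Corollaries \ref{rankmin-corollary}, \ref{rankrmin-corollary} and \ref{rank-znorm-corollary2}, with your attainment construction for $W^*$ matching Lemma \ref{rankmin-lemma1} (singular values $t^*$ on the indices where $\sigma_i(X)=0$). One remark: your reduction correctly produces the term $\varrho\lambda\|X\|_1$, which shows that the $\varrho\|X\|_1$ appearing in the corollary's displayed objective is a typo in the paper (a missing factor $\lambda$), so the objective you derived, not the one literally displayed, is the exact partial minimization of \eqref{srank-znorm-epenalty}.
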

  \begin{remark}
  For the simultaneous rank and zero-norm regularized minimization problem
  \begin{equation}\label{srank-znorm-rmin}
  \min_{X\in\Omega}\,\Big\{\nu f(X)+{\rm rank}(X)+\lambda\|X\|_0\Big\},
  \end{equation}
  by \eqref{srank-znorm-chara} we can obtain its equivalent MPEC, but
  our current analysis technique for its uniform partial calmness
  over the global optimal solution set requires a strong restriction
  on $\Omega$. So, we do not include the result here,
  and leave it for a future research topic.
  \end{remark}
  \section{Application to low-rank plus sparsity decomposition}\label{sec7}

  We have proposed a mechanism to produce equivalent Lipschitz surrogates
  for several classes of zero-norm and rank optimization problems by
  the global exact penalty for their equivalent MPECs. This section provides
  an application of these surrogates by designing a multi-stage
  convex relaxation approach to the rank plus zero-norm regularized problem:
  \begin{equation}\label{rank-znorm-rmin1}
   \min_{X,Y\in \mathbb{R}^{n_1\times n_2}}\!\Big\{\frac{\nu}{2}\|X+Y-M\|_F^2
   +{\rm rank}(X)\!+\lambda\|Y\|_0\!: \|X\|\le\gamma_1,\|Y\|_\infty\le\gamma_2\Big\}
  \end{equation}
  where $M\in\mathbb{R}^{n_1\times n_2}$ is a given matrix,
  and $\gamma_1,\gamma_2>0$ are constants. By Corollary \ref{rank-znorm-corollary2},
  to achieve a desirable solution to \eqref{rank-znorm-rmin1},
  it suffices to design a convex relaxation approach to
  \begin{equation}\label{rank-znorm-repenalty1}
   \!\min_{\|X\|\le\gamma_1\atop\|Y\|_\infty\le\gamma_2}\!\bigg\{\frac{\nu}{2}\big\|X+Y\!-\!M\big\|_F^2+\varrho\|X\|_*-
       \!\sum_{i=1}^{n_1}\psi^*(\varrho\sigma_i(X))\!
       +\varrho\lambda\|Y\|_1-\lambda\sum_{i=1}^{n_1}\!\sum_{j=1}^{n_2}\psi^*(\varrho|Y_{ij}|)\bigg\}.
  \end{equation}
  Let $(X^{k},Y^{k})$ be the current iterate. By Lemma \ref{Thetarho-lemma},
  ${\textstyle\sum_{i=1}^{n_1}}\psi^*(\varrho\sigma_i(X))=(\widehat{\Psi}^*\circ\sigma)(\varrho X)$
  is a convex function where the expression of $\widehat{\Psi}^*$ is given by \eqref{Psihat}.
  Take $W^{k}\in\partial(\widehat{\Psi}^*\circ\sigma)(\varrho X^{k})$.
  Then, from the convexity of $\widehat{\Psi}^*\circ\sigma$,
  for any $X\in\mathbb{R}^{n_1\times n_2}$ it holds that
  \[
    {\textstyle\sum_{i=1}^{n_1}}\psi^*(\varrho\sigma_i(X))
     \ge {\textstyle\sum_{i=1}^{n_1}}\psi^*(\varrho\sigma_i(X^k))
     +\varrho\langle W^{k},X-X^{k}\rangle.
  \]
  Let $S_{ij}^{k}\in\partial\psi^*(\varrho|Y_{ij}^{k}|)$ for $i=1,\ldots,n_1$
  and $j=1,\ldots,n_2$. For any $Y\in\mathbb{R}^{n_1\times n_2}$, we have
  \[
     \sum_{i=1}^{n_1}\!\sum_{j=1}^{n_2}\psi^*(\varrho|Y_{ij}|)
     \ge\sum_{i=1}^{n_1}\!\sum_{j=1}^{n_2}\psi^*(\varrho|Y_{ij}^{k}|)
     +\varrho\sum_{i=1}^{n_1}\!\sum_{j=1}^{n_2}S_{ij}^{k}(|Y_{ij}|-|Y_{ij}^{k}|).
  \]
  The last two equations provide a global convex minorization for
  the objective function of \eqref{rank-znorm-repenalty1} at $(X^{k},Y^{k})$.
  We design a multi-stage convex relaxation approach (MSCRA)
  to \eqref{rank-znorm-rmin1} by minimizing this convex minorization.
  The iterates of the MSCRA are as follows.
 \begin{algorithm}[H]
 \caption{\label{MSCRA}{\bf\ GEP-MSCRA for solving the problem \eqref{rank-znorm-repenalty1}}}
 \textbf{Initialization:} Set the starting point $(X^0,Y^0)=(0,0)$. Select
 $W^0\in\partial(\widehat{\Psi}^*\circ\sigma)(0)$ and
 $S_{ij}^0\in\partial\psi^*(0)$ for $i=1,\ldots,n_1;j=1,\ldots,n_2$.
 Choose $\lambda_1>0$ and $\mu_1>0$. Set $k:=1$.

 \medskip
 \noindent
 \textbf{while} the stopping conditions are not satisfied \textbf{do}
 \begin{enumerate}
  \item  Seek an optimal solution $(X^{k},Y^{k})$ to the following matrix convex optimization
         \begin{equation}\label{subproblem1}
          \!\min_{\|X\|\le\gamma_1,\|Y\|_\infty\le\gamma_2}\!\Big\{\frac{1}{2}\big\|X+Y\!-\!M\big\|_F^2
          +\lambda_k\big(\|X\|_*-\!\langle W^{k-1},X\rangle\big)+\mu_k\langle E-S^{k-1},|Y|\rangle\Big\}.
         \end{equation}

  \item  When $k=1$, select $\varrho_1$ and $\widetilde{\varrho}_1$
         by the information of $\|X^1\|$ and $\|Y^1\|_\infty$. Otherwise, when $k\ge 2$,
         select $\varrho_k$ and $\widetilde{\varrho}_k$ such that $\varrho_k\ge\varrho_{k-1}$
         and $\widetilde{\varrho}_k\ge\widetilde{\varrho}_{k-1}$.

  \item  Select $W^k\in\partial(\widehat{\Psi}^*\circ\sigma)(\varrho_kX^k)$ and
         $S_{ij}^k\in\partial\psi^*(\widetilde{\varrho}_k|Y_{ij}^k|)$
         for $i=1,\ldots,n_1;j=1,\ldots,n_2$.

  \item Update $\lambda_{k}$ and $\mu_{k}$. Set~$k\leftarrow k+1$, and then go to Step 1.
 \end{enumerate}
 \textbf{end while}
 \end{algorithm}
 \begin{remark}
  Let $X^{k}$ have the SVD as $U^{k}{\rm Diag}(\sigma(X^{k}))(V^{k})^{\mathbb{T}}$.
  By \cite[Corollary 2.5]{Lewis95},
  \[
    W^{k}\in\Big\{U^{k}{\rm Diag}(w^{k})(V^{k})^{\mathbb{T}}\ |\
    w^{k}\in\partial\widehat{\Psi}^*(\varrho\sigma(X^{k}))\Big\}.
  \]
  Together with the expression of $\widehat{\Psi}^*$ in \eqref{Psihat},
  $w_i^k\in\partial(\psi^*\circ|\cdot|)(\varrho\sigma_i(X^{k}))$
  for $i=1,\ldots,n_1$. Since $\psi^*\circ|\cdot|$ and $\psi^*$ are
  convex functions on $\mathbb{R}$, it is easy to obtain $w_i^k$
  and $S_{ij}^k$; for example, when $\phi$ is the function in Example \ref{SCAD}
  of Appendix B, it holds that
  \begin{subnumcases}{}\label{wik}
   w_i^k=\min\Big[1,\max\Big(\frac{(a+1)\varrho_k\sigma_i(X^k)-2}{2(a-1)},0\Big)\Big]
   \ \ {\rm for}\ \ i=1,2,\ldots,n,\\
   S_{ij}^k=\min\Big[1,\max\Big(\frac{(a+1)\widetilde{\varrho}_k |Y_{ij}^k|-2}{2(a-1)},0\Big)\Big]
   \ {\rm for}\ i=1,\ldots,n_1;j=1,\ldots,n_2.
 \end{subnumcases}
  This means that each step of Algorithm \ref{MSCRA} is solving
  a convex matrix programming \eqref{subproblem1}, that is,
  the GEP-MSCRA yields a desirable solution to the NP-hard problem
  by solving a series of simple convex optimization problems.
  By von Neumann's trace inequality \cite[Page 182]{Horn91},
  it is not hard to verify that Step 3 is actually solving
  \[
    \!\min_{\|W\|\le1,\|S\|_\infty\le1}\!\bigg\{
          \sum_{i=1}^{n_1}\phi(\sigma_i(W))-\varrho_k\langle X^k,W\rangle
          +\sum_{i=1}^{n_1}\sum_{j=1}^{n_2}\phi(|S_{ij}|)-\widetilde{\varrho}_k\langle Y^k,S\rangle \bigg\}.
  \]
  This shows that Algorithm \ref{MSCRA} coincides with the multi-stage
  convex relaxation approach developed in \cite{BiPan17} by solving
  the global exact penalty problem \eqref{rank-znorm-repenalty}
  in an alternating way.
 \end{remark}

  We have implemented Algorithm \ref{MSCRA} with the function $\phi$ in
  Example \ref{SCAD}, where the subproblem \eqref{subproblem1}
  is solved with the accelerated proximal gradient (APG) method
  \cite{Nesterov83,Beck09,Toh10}. For the implementation of the GEP-MSCRA
  for the zero-norm minimization and rank regularized minimization,
  the reader may refer to \cite{BiPan14,BiPan17}. All runs are performed on
  an Intel Core(TM) i7-7700HQ CUP 2.80GHz, running Windows 10 and Matlab 2015a.

  \medskip

  During the testing, we choose $\lambda_1=1$
  and $\mu_1=0.5\lambda_1/\sqrt{n}$, and update $\lambda_k$ and $\mu_k$ by
  \[
    \lambda_k=\min(\max(20,(0.45n/8)),100)\lambda_1
    \ \ {\rm and}\ \ \mu_k=\tau_k\lambda_k/\sqrt{n}\ \ {\rm for}\ k\ge 2
  \]
  where $\tau_2=0.8$ and $\tau_k=0.35$ for $k\ge 3$.
  In addition, we choose $\varrho_k$ and $\widetilde{\varrho}_k$ by
  \[
    \varrho_1=\frac{10}{\|X^1\|},\ \widetilde{\varrho}_1=\frac{10}{9\|Y^1\|_\infty},\
    \varrho_k\equiv\varrho_1\ \ {\rm and}\ \ \widetilde{\varrho}_k=\frac{10}{9}\widetilde{\varrho}_{k-1}
    \ \ {\rm for}\ k\ge 2.
  \]
  We terminate Algorithm \ref{MSCRA} when $|\|X^k+Y^k-M\|_F^2-\|X^{k-1}+Y^{k-1}-M\|_F^2|\le 0.02\|M\|_F$
  and ${\rm rank}(X^{k-j})={\rm rank}(X^{k-j-1})$ for $j=0,1,2$,
  where ${\rm rank}(X)=\sum_{i=1}^n\mathbb{I}_{\{\sigma_i(X)\ge10^{-6}\|X\|\}}$.

  \medskip

  We run a series of synthetic low-rank and sparsity decomposition problems.
  For each $(n,r,s)$ triple, where $n$ ($n=n_1=n_2$) is the matrix dimension,
  $r$ is the predetermined rank, and $s$ is the predetermined sparsity.
  We generate $M=M_R+M_S+M_0$ with $M_R=RL^{\mathbb{T}}$ in the same way as does
  in \cite{Zhou10}, where $M_0$ is a noise matrix whose entries are i.i.d $N(0,\sigma^2)$,
  and $L$ and $R$ are $n\times r$ matrices whose entries are i.i.d. $N(0,\sigma_n^2)$
  with $\sigma_n^2=10\sigma/\sqrt{n}$, and the entries of $M_S$ are independently
  distributed, each taking on value $0$ with probability $1-\rho_s$ and
  uniformly distributed in $[-5,5]$ with probability $\rho_s$. For each setting
  of parameters, we report the average errors over {\bf 10} trials.
  Recall that the constraints $\|X\|\le\gamma_1$ and $\|Y\|_\infty\le\gamma_2$
  are used to ensure that \eqref{rank-znorm-rmin1} has a bounded feasible set,
  and we find from tests that the values of $\gamma_1$ and $\gamma_2$ have no
  influence on the low rank and sparsity of solutions.
  So, we always take $\gamma_1=10\|M_R\|$
  and $\gamma_2=10\|M_S\|_\infty$.

  \medskip

  We first evaluate the performance of Algorithm \ref{MSCRA} with $\sigma=0.1$,
  $\rho_s=0.1$ and $r=0.1n$ for different $n$. We measure estimation errors using
  the root-mean-squared (RMS) error as $\|\widehat{X}-M_R\|_F/n$ and
  $\|\widehat{Y}-M_S\|_F/n$ for the low-rank component and the sparse component,
  respectively, where $(\widehat{X},\widehat{Y})$ is the output of Algorithm \ref{MSCRA}.
  Figure \ref{fig1}(a) plots the RMS error of $\widehat{X}$ and $\widehat{Y}$
  as a function of $n$, and Figure \ref{fig1}(b) plots the rank curve of $\widehat{X}$
  and the sparsity curve of $\widehat{Y}$. We see that the RMS error decreases
  as $n$ increases, and the rank of $\widehat{X}$ almost equals that of $M_R$,
  while the sparsity of $\widehat{Y}$ is lower than that of $M_S$.
  This shows that the GEP-MSCRA can yield a solution with desired low-rank
  and sparse components.


 \begin{figure}[ht]
 \setlength{\abovedisplayskip}{3pt}
 \setlength{\belowcaptionskip}{-3pt}
\begin{center}
\includegraphics[width=15cm,height=6.0cm]{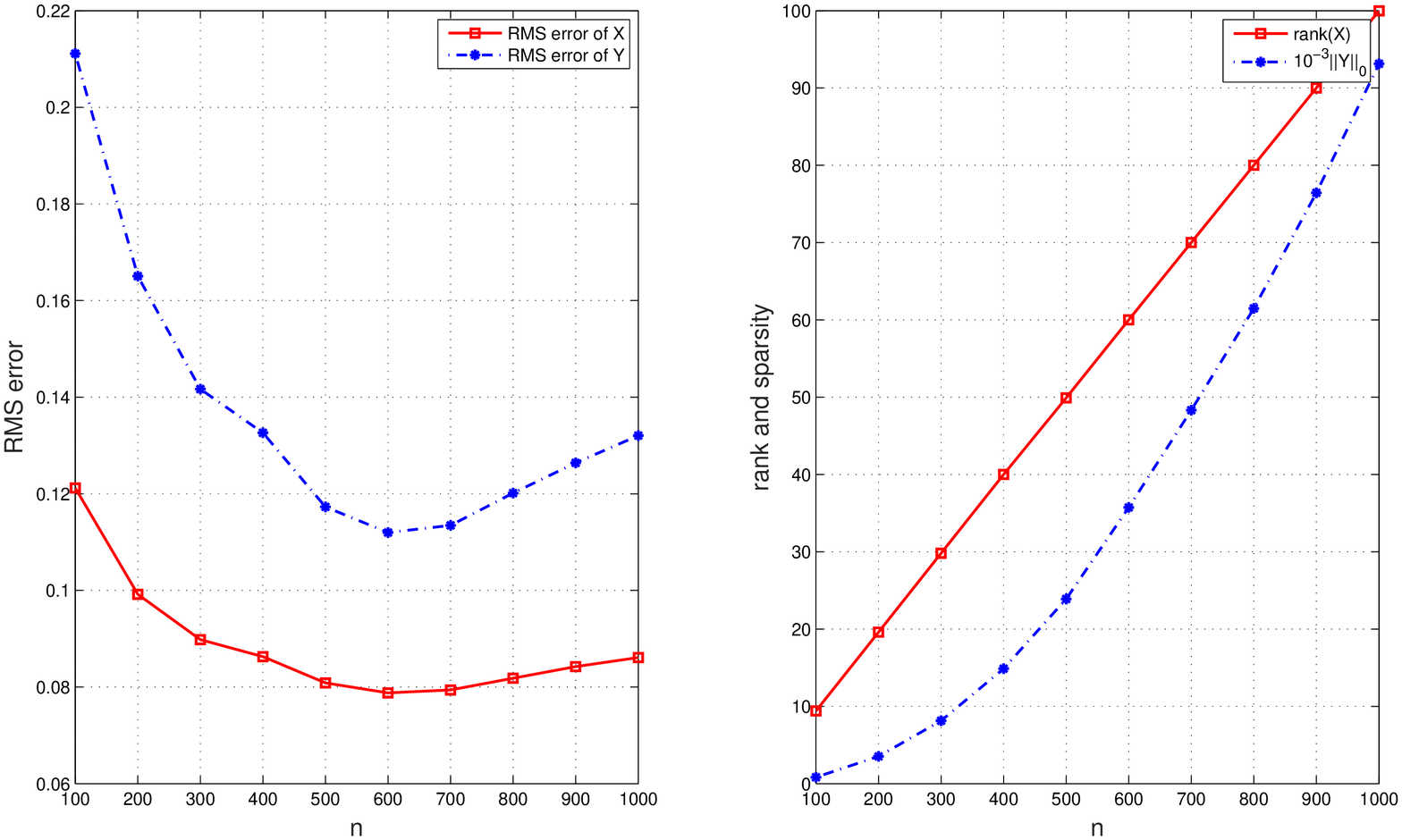}
\end{center}
\caption{RMS error as a function of $n$ in (a) and rank and sparsity in (b)}\label{fig1}
\end{figure}

\begin{figure}[ht]
\begin{center}
\includegraphics[width=15cm,height=6.0cm]{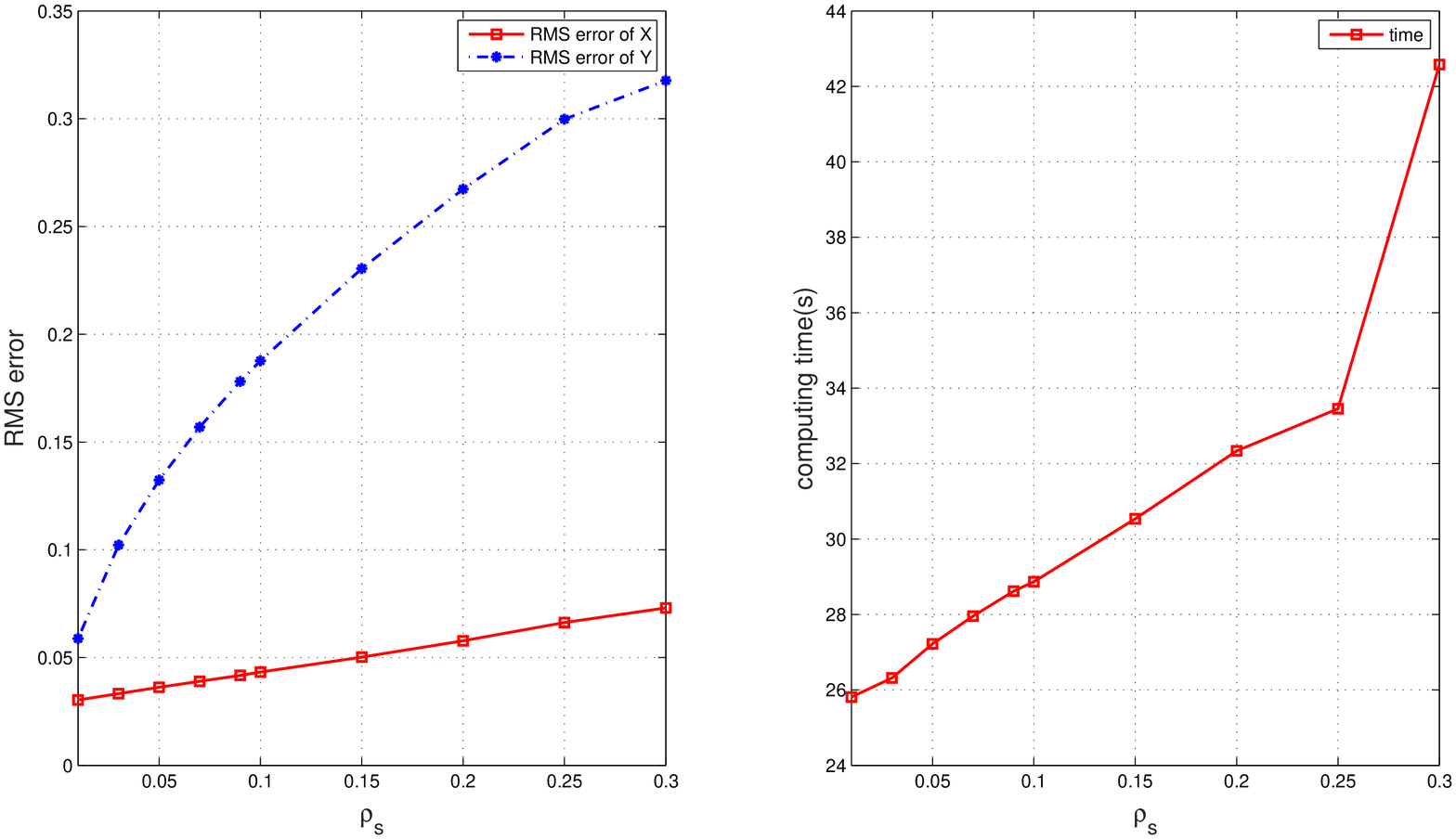}
\end{center}
\caption{RMS error as a function of $\rho_s$ in (a) and the computing time in (b)}\label{fig2}
\end{figure}

  \medskip

  Next we evaluate the performance of Algorithm \ref{MSCRA} with $\sigma=0.2$ and
  $r=10$ under different $\rho_s$. Figure \ref{fig2}(a) plots the RMS error of
  $\widehat{X}$ and $\widehat{Y}$ as a function of $\rho_s$, and Figure \ref{fig2}(b)
  plots the computing time curve. We see that the RMS error of $(\widehat{X},\widehat{Y})$
  increases as $\rho_s$ increases, and the increased range of $\widehat{X}$
  is much less than that of $\widehat{Y}$. All test problems are solved within
  one minute, and as $\rho_s$ increases, the computing time has a little increase.

 \section{Conclusion}

  We have proposed a mechanism to produce equivalent Lipschitz surrogates for
  several classes of zero-norm and rank optimization problems by the global
  exact penalty for their equivalent MPECs, and taken the rank plus zero-norm
  regularized minimization problem for example to illustrate an application of
  these surrogates in the design of multi-stage stage convex relaxation approach.
  We found that this relaxation method coincides with the one \cite{BiPan17}
  developed by solving the global exact penalty in an alternating way.

  \medskip
  \noindent
  {\bf Acknowledgement} The authors would like to thank the referee for
  his/her helpful comments, which led to significant improvements in
  the presentation of this paper.

 \bigskip
 \noindent
 {\bf\large Appendix A}

 \medskip

 The following lemma characterizes an important property of the function family $\Phi$.
 \begin{alemma}\label{lemma-phi}
  Let $\phi\in\Phi$. Then, there exists $t_0\in[0,1)$ such that
  $\frac{1}{1-t^*}\in\partial\phi(t_0)$, and
  for any given $\omega\ge 0$ and $\varrho>0$, the optimal value
  $\upsilon^*\!:=\min_{t\in[0,1]}\{\phi(t)+\varrho\omega(1-\!t)\}$ satisfies
  \[
    \left\{\begin{array}{ll}
    \upsilon^*=1&{\rm if}\ \varrho\omega\in(\phi_{-}'(1),+\infty);\\
    \upsilon^*\ge\frac{\varrho\omega(1-t_0)}{\phi_{-}'(1)(1-t^*)}&{\rm if}\  \varrho\omega\in\big[\frac{1}{1-t^*},\phi_{-}'(1)\big];\\
    \upsilon^*\ge\varrho\omega(1\!-\!t_0) &{\rm if}\  \varrho\omega\in\big[0,\frac{1}{1-t^*}\big).
    \end{array}\right.
  \]
 \end{alemma}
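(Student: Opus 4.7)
\medskip
\noindent
\textbf{Proof proposal.}
The plan is to first identify the auxiliary point $t_0$ via a mean--value argument for convex functions, and then to exploit a single subgradient inequality at $t_0$ to handle the three ranges of $\varrho\omega$ in a unified way.

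To produce $t_0$, since $t^*\in[0,1)$ is a minimizer of $\phi$ on $[0,1]$ we have $\phi_{+}'(t^*)\ge 0$, while the convex chord inequality gives
\[
\phi_{+}'(t^*)\le\frac{\phi(1)-\phi(t^*)}{1-t^*}=\frac{1}{1-t^*}\le\phi_{-}'(1).
\]
The intermediate--value property of the monotone subdifferential of a proper convex function then forces $\frac{1}{1-t^*}\in\partial\phi(t_0)$ for some $t_0\in[t^*,1]$. The subtlety is to rule out $t_0=1$: when the last inequality above is strict this is automatic, and when it is an equality then $\phi$ must be affine on $[t^*,1]$, so any interior point of that interval works as $t_0$.

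Having chosen $t_0\in[t^*,1)$, set $s:=\varrho\omega$ and $g(t):=\phi(t)+s(1-t)$. For $s>\phi_{-}'(1)$, convexity of $g$ together with $g_{-}'(1)=\phi_{-}'(1)-s<0$ force $g$ to be strictly decreasing on $[0,1]$, giving $\upsilon^*=g(1)=\phi(1)=1$. For the remaining two cases I plan to feed in the single subgradient inequality
\[
\phi(t)\ge\phi(t_0)+\frac{t-t_0}{1-t^*}\quad\forall\,t\in[0,1],
\]
coming from $\frac{1}{1-t^*}\in\partial\phi(t_0)$, together with $\phi(t_0)\ge 0$. In the middle range $\frac{1}{1-t^*}\le s\le\phi_{-}'(1)$, this makes $g(t)$ bounded below by an affine function of $t$ whose slope $\frac{1}{1-t^*}-s$ is nonpositive, so its minimum over $[0,1]$ is at $t=1$, yielding $\upsilon^*\ge\phi(t_0)+\frac{1-t_0}{1-t^*}\ge\frac{1-t_0}{1-t^*}\ge\frac{s(1-t_0)}{\phi_{-}'(1)(1-t^*)}$, where the final step uses $s\le\phi_{-}'(1)$. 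In the low range $0\le s<\frac{1}{1-t^*}$ the linear lower bound now has positive slope, so I would split $[0,1]$ at $t_0$: on $[t_0,1]$ the bound $g(t)\ge\frac{t-t_0}{1-t^*}+s(1-t)$ is increasing in $t$, hence $\ge s(1-t_0)$ already at $t=t_0$; on $[0,t_0]$ the cruder bound $\phi(t)\ge 0$ gives $g(t)\ge s(1-t)\ge s(1-t_0)$. Combining the two pieces, $\upsilon^*\ge s(1-t_0)=\varrho\omega(1-t_0)$.

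The step I expect to require the most care is the very first one, namely the existence of $t_0$ strictly inside $[0,1)$ rather than merely in $[0,1]$, since it is $1-t_0>0$ that gives nontrivial content to every subsequent bound; once this is pinned down using the affine--equality case of the convex chord inequality, the rest is a routine unwinding of a single subgradient inequality, with the sign of $\frac{1}{1-t^*}-s$ deciding at which endpoint of $[0,1]$ the linear lower bound is minimized.
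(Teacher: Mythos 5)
Your proposal is correct and follows essentially the same route as the paper: the same pivotal point $t_0$ with $\frac{1}{1-t^*}\in\partial\phi(t_0)$, the same three-range case analysis in $\varrho\omega$, and the same underlying inequalities. The only differences are presentational streamlinings — you obtain $t_0$ via the chord inequality and the intermediate-value property of $\partial\phi$ (handling the equality case by showing $\phi$ is affine on $[t^*,1]$) where the paper integrates $\phi_-'$ and passes through $(\partial\phi)^{-1}(a)=\partial\psi^*(a)$, and your single subgradient inequality at $t_0$ lets you avoid the paper's separate treatment of $t_0=0$ in the middle range.
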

 \begin{proof}
  If $\phi'(t)$ is a constant for $t\in[t^*,1]$, then $\phi'(t)=\frac{\phi(1)-\phi(t^*)}{1-t^*}=\frac{1}{1-t^*}$
  for all $t\in[t^*,1]$, which means that any $t_0\in[t^*,1)$ satisfies the requirement.
  Otherwise, there must exist a point $\overline{t}\in[t^*,1)$ such that
  $\phi_{-}'(\overline{t})<\phi_{-}'(1)$. Together with the convexity of $\phi$ in $[0,1]$,
  we have $\phi_{-}'(t)\le\phi_{-}'(\overline{t})$ for $t\in[t^*,\overline{t}]$.
  By \cite[Corollary 24.2.1]{Roc70}, it follows that
  \begin{align*}
   1&=\phi(1)-\phi(t^*)
   =\int_{t^*}^{1}\phi_{-}'(t)dt
   =\int_{t^*}^{\overline{t}}\phi_{-}'(t)dt+\int_{\overline{t}}^{1}\phi_{-}'(t)dt\\
   &<\phi_{-}'(1)(\overline{t}-t^*)+\int_{\overline{t}}^{1}\phi_{-}'(t)dt
   \le \phi_{-}'(1)(1-t^*).
  \end{align*}
  Also, by the convexity of $\phi$ in $[0,1]$,
  $1=\phi(1)\ge\phi(t^*)+\phi_{+}'(t^*)(1-t^*)=\phi_{+}'(t^*)(1-t^*)$.
  Thus, $a:=\frac{1}{1-t^*}\in(\phi_{+}'(t^*),\phi_{-}'(1))\subseteq[\phi_{+}'(0),\phi_{-}'(1)]$,
  which further implies that
  \[
    (\partial\phi)^{-1}(a)=(\partial\psi)^{-1}(a)=\partial\psi^*(a)\subseteq[0,1].
  \]
  Notice that $(\partial\phi)^{-1}(a)\cap[0,1)\ne\emptyset$ (if not,
  $a\in\partial\phi(1)=[\phi_{-}'(1),\phi_{+}'(1)]$, which
  is impossible since $a<\phi_{-}'(1)$).
  Therefore, $t_0\in(\partial\phi)^{-1}(a)\cap[0,1)$ satisfies the requirement.

  \medskip

  When $\varrho\omega\ge\phi_{-}'(1)$, clearly, $\upsilon^*=\phi(1)=1$ since
  $\phi(t)+\varrho\omega(1\!-t)$ is nonincreasing in $[0,1]$.
  When $\varrho\omega\in\big[0,\frac{1}{1-t^*}\big)$,
  since $\phi_{-}'(t)\ge\phi_{+}'(t_0)>\varrho\omega$ for $t>t_0$,
  the optimal solution $\widehat{t}$ of $\min_{t\in[0,1]}\{\phi(t)+\varrho\omega(1\!-\!t)\}$
  satisfies $\widehat{t}\le t_0$. Along with the convexity of $\phi$ in $[0,1]$,
  \[
   \phi(t)+\varrho\omega(1\!-t)
   \ge \phi(\widehat{t})+\varrho\omega(1\!-\widehat{t})
   \ge \varrho\omega(1-t_0)\quad\ \forall t\in[0,1].
  \]
  This shows that $\upsilon^*\ge\varrho\omega(1-t_0)$ for this case.
  When $\varrho\omega\in\big[\frac{1}{1-t^*},\phi_{-}'(1)\big]$,
  it follows that
  \begin{equation}\label{temp-ineq}
   \phi(t)+\varrho\omega(1-t)
   \ge \phi(t)+\frac{1}{1-t^*}(1-t)\quad\ \forall t\in[0,1].
  \end{equation}
  If $t_0>0$, from the fact that $t_0<1$ and $\frac{1}{1-t^*}\in\partial\phi(t_0)$,
  it immediately follows that
  \[
    \min_{t\in[0,1]}\Big\{\phi(t)+\frac{1}{1-t^*}(1-t)\Big\}
    =\phi(t_0)+\frac{1}{1-t^*}(1-t_0)\ge\frac{1-t_0}{1-t^*}.
  \]
  Together with \eqref{temp-ineq} and $\varrho\omega\le\phi_{-}'(1)$,
  we have $\upsilon^*\ge\frac{1-t_0}{1-t^*}
  \ge\frac{\varrho\omega(1-t_0)}{\phi_{-}'(1)(1-t^*)}$.
  If $t_0=0$, from $\frac{1}{1-t^*}\in\partial\phi(t_0)$ we have
  \(
    \phi_{+}'(0)\ge\frac{1}{1-t^*}\ge 1=\phi(1)\ge\phi(0)+\phi_{+}'(0)\ge\phi_{+}'(0),
  \)
  where the third inequality is due to the convexity of $\phi$ at $[0,1]$.
  Then, for any $t\in[0,1]$,
  \[
   \phi(t)+\frac{1}{1-t^*}(1-t)
  \ge\phi(0)+\phi_{+}'(0)t+\frac{1}{1-t^*}(1-t)\ge\phi(0)+\frac{1}{1-t^*}\ge\frac{1}{1-t^*},
  \]
  where the first inequality is using the convexity of $\phi$ at $[0,1]$.
  Together with \eqref{temp-ineq}, it follows that
  $\upsilon^*\ge\frac{1}{1-t^*}\ge\frac{\varrho\omega}{\phi_{-}'(1)(1-t^*)}
  \ge\frac{\varrho\omega(1-t_0)}{\phi_{-}'(1)(1-t^*)}$.
  The proof is completed.
 \end{proof}
 \begin{alemma}\label{Thetarho-lemma}
  Let $\phi\in\Phi$. For any given $\varrho>0$, the function
  $\Theta_{\varrho}(X):={\textstyle\sum_{i=1}^{n_1}}\psi^*(\varrho\sigma_i(X))$
  is lsc and convex in $\mathbb{R}^{n_1\times n_2}$, where $\psi^*$ is the conjugate
  of $\psi$, defined by \eqref{phi-psi} with $\phi$.
 \end{alemma}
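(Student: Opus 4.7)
The plan is to reduce the claim to a result of Lewis \cite{Lewis95} on convex spectral functions. The strategy is to exhibit $\Theta_{\varrho}$ as the composition of the singular value map $\sigma$ with an absolutely symmetric convex function on $\mathbb{R}^{n_1}$, and then invoke the spectral convexity theorem.

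First, I would define the auxiliary scalar function $g(t):=\psi^*(\varrho |t|)$ on $\mathbb{R}$ and the vector function
\[
   h(x) := \sum_{i=1}^{n_1} g(x_i) \quad \text{for } x\in\mathbb{R}^{n_1}.
\]
The function $h$ is absolutely symmetric (invariant under permutations and sign changes of coordinates) by construction. The key convexity input is that $\psi^*$ is convex on $\mathbb{R}$ (being the conjugate of a proper convex function) and, as noted in the Notations paragraph, nondecreasing on $\mathbb{R}$; consequently the composition $g=\psi^*\circ(\varrho|\cdot|)$ is convex on $\mathbb{R}$, since it is a nondecreasing convex function composed with the convex modulus function. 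Summing over $i$ preserves convexity, so $h$ is convex. Moreover $\mathrm{dom}\,\psi^* = \mathbb{R}$ implies $h$ is finite-valued, hence continuous on $\mathbb{R}^{n_1}$.

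Next I would apply the Lewis convexity theorem \cite[Cor.~2.5 and the surrounding theorem]{Lewis95}: for any absolutely symmetric function $h$ on $\mathbb{R}^{n_1}$, the spectral function $h\circ\sigma$ is convex on $\mathbb{R}^{n_1\times n_2}$ if and only if $h$ is convex. Since the singular values are nonnegative, $|\sigma_i(X)|=\sigma_i(X)$, and therefore
\[
   (h\circ\sigma)(X) = \sum_{i=1}^{n_1}\psi^*(\varrho\sigma_i(X)) = \Theta_{\varrho}(X).
\]
Lewis's theorem then yields convexity of $\Theta_{\varrho}$. Lower semicontinuity follows for free: $\sigma$ is continuous (indeed $1$-Lipschitz under the Frobenius norm) and $h$ is continuous on $\mathbb{R}^{n_1}$, so $\Theta_{\varrho}=h\circ\sigma$ is continuous, in particular lsc.

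There is no serious obstacle here; the only subtlety worth checking carefully is that $\psi^*$ is nondecreasing on all of $\mathbb{R}$ (otherwise $\psi^*\circ|\cdot|$ need not be convex). This is guaranteed by $\mathrm{dom}\,\psi=[0,1]\subseteq\mathbb{R}_{+}$, which the authors already remark on just after \eqref{phi-psi}, so the argument above is complete once that observation is recorded.
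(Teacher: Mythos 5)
Your proposal is correct and takes essentially the same route as the paper: both exhibit $\Theta_{\varrho}$ as an absolutely symmetric convex function on $\mathbb{R}^{n_1}$ composed with the singular value map $\sigma$ and then invoke Lewis's spectral convexity result from \cite{Lewis95}. The only (cosmetic) difference is in how convexity and lower semicontinuity of the symmetric function are obtained: the paper realizes it as the conjugate $\widehat{\Psi}^*$ of $\widehat{\Psi}(x)=\sum_{i=1}^{n_1}\psi(|x_i|)$, so convexity and lsc come for free and the computation yields the explicit formula \eqref{Psihat} reused in Section \ref{sec7}, whereas you verify convexity directly via the nondecreasing-convex composition rule (correctly flagging that monotonicity of $\psi^*$, guaranteed by ${\rm dom}\,\psi=[0,1]\subseteq\mathbb{R}_{+}$, is the one point that must be checked) and get continuity from finiteness of $\psi^*$.
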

 \begin{proof}
  Let $\widehat{\Psi}(x):=\sum_{i=1}^{n_1}\widehat{\psi}(x_i)$ for $x\in\mathbb{R}^{n_1}$,
  where $\widehat{\psi}(t):=\psi(|t|)$ for $t\in\mathbb{R}$. Clearly,
  $\widehat{\Psi}$ is absolutely symmetric, i.e., $\widehat{\Psi}(x)=\widehat{\Psi}(Px)$
  for any signed permutation matrix $P\in\mathbb{R}^{n_1\times n_1}$.
  Moreover, by the definitions of $\psi$ and $\widehat{\Psi}$,
  the conjugate function $\widehat{\Psi}^*$
  of $\widehat{\Psi}$ satisfies
  \begin{align}\label{Psihat}
   \widehat{\Psi}^*(z)
   &=\sup_{x\in\mathbb{R}^{n_1}}\big\{\langle z,x\rangle-\widehat{\Psi}(x)\big\}
   ={\textstyle\sum_{i=1}^{n_1}}\sup_{t\in\mathbb{R}}\big\{z_it-\psi(|t|)\big\}\nonumber\\
   &={\textstyle\sum_{i=1}^{n_1}}\sup_{t'\in\mathbb{R}}\big\{|z_i|t'-\psi(t')\big\}
   ={\textstyle\sum_{i=1}^{n_1}}\psi^*(|z_i|)\quad{\rm for}\ z\in\mathbb{R}^{n_1}.
  \end{align}
  By the definition of $\Theta_\varrho$, we have
  $\Theta_\varrho(X)\equiv(\widehat{\Psi}^*\circ\sigma)(\varrho X)$.
  Notice that $\widehat{\Psi}^*$ is lsc and convex.
  By \cite[Lemma 2.3(b) $\&$ Corollary 2.6]{Lewis95},
  $\Theta_\varrho$ is convex and lsc on $\mathbb{R}^{n_1\times n_2}$.
 \end{proof}

  \bigskip
 \noindent
 {\bf\large Appendix B}
 \begin{aexample}\label{aexample1}
  Let $\phi(t):=t$ for $t\in \mathbb{R}$. Clearly, $\phi\in\Phi$ with $t^*=0$.
  After a calculation,
  \[
    \psi^*(s)=\left\{\begin{array}{ll}
                      s-1 & \textrm{if}\ s>1;\\
                      0 & \textrm{if}\ s\leq 1.
                \end{array}\right.
  \]
 \end{aexample}
 \begin{aexample}\label{aexample2}
  Let $\phi(t):=\frac{\varphi(t)}{\varphi(1)}$ with
  $\varphi(t)=-t-\frac{q-1}{q}(1-t+\epsilon)^{\frac{q}{q-1}}+\epsilon+\frac{q-1}{q}\,(0<q<1)$
  for $t\in (-\infty, 1+\epsilon]$, where $\epsilon\in(0,1)$ is a fixed constant.
  Now one has $\psi^*(s)=\!\frac{h(\varphi(1)s)}{\varphi(1)}$ with
  \[
   h(s):=\left\{\begin{array}{cl}
                      s+\frac{q-1}{q}\epsilon^{\frac{q}{q-1}}-\epsilon-\frac{q-1}{q} & \textrm{if}\ s\geq \epsilon^{\frac{1}{q-1}}-1,\\
                      (1\!+\epsilon)s-\frac{1}{q}(s+1)^q+\frac{1}{q} & \textrm{if}\ (1\!+\epsilon)^{\frac{1}{q-1}}\!-\!1<s<\!\epsilon^{\frac{1}{q-1}}\!-\!1,\\
                      \frac{q-1}{q}(1+\epsilon)^{\frac{q}{q-1}}-\epsilon-\frac{q-1}{q} & \textrm{if}\ s\le (1\!+\epsilon)^{\frac{1}{q-1}}-1.
                \end{array}\right.
 \]
 \end{aexample}
 \begin{aexample}\label{aexample3}
  Let $\phi(t):=\frac{\varphi(t)}{\varphi(1)}$ with
  $\varphi(t)=-t-\ln(1-t+\epsilon)+\epsilon$ for $t\in (-\infty,1+\epsilon)$,
  where $\epsilon\in(0,1)$ is a fixed constant. Clearly, $\phi\in\Phi$ with $t^*=\epsilon$.
  Now $\psi^*(s)=\frac{1}{\varphi(1)}h(\varphi(1)s)$ with
  \[
   h(s):=\left\{\begin{array}{cl}
                      s+1+\ln(\epsilon)-\epsilon & \textrm{if}\ s\geq \frac{1}{\epsilon}-1,\\
                      s(1+\epsilon)-\ln(s+1) & \textrm{if}\ \frac{1}{1+\epsilon}\!-\!1<s< \frac{1}{\epsilon}\!-\!1,\\
                      \ln(1+\epsilon)-\epsilon & \textrm{if}\ s\leq \frac{1}{1+\epsilon}-1.
                \end{array}\right.
 \]
 \end{aexample}
 \begin{aexample}\label{aexample4}
  Let $\phi(t):=\frac{\varphi(t)}{\varphi(1)}$ with
  $\varphi(t)=(1+\epsilon)\arctan\left(\sqrt{\frac{t}{1-t+\epsilon}}\right)-\sqrt{t(1-t+\epsilon)}$
  for $t\in [0,1]$, where $\epsilon\in(0,1)$ is a fixed constant.
  Now one has $\psi^*(s)=\frac{1}{\varphi(1)}h(\varphi(1)s)$ with
  \[
    h(s):=\left\{\begin{array}{cl}
                      s-(1+\epsilon)\arctan(\sqrt{1/\epsilon})+\sqrt{\epsilon} & \textrm{if}\ s\geq\!\sqrt{1/\epsilon},\\
                      s(1+\epsilon)-(1+\epsilon)\arctan(s) & \textrm{if}\ 0<s<\!\sqrt{1/\epsilon},\\
                      0 & \textrm{if}\ s\leq 0.
                \end{array}\right.
   \]
 \end{aexample}
  \begin{aexample}\label{SCAD}
   Let $\phi(t):=\frac{\varphi(t)}{\varphi(1)}$ with
   $\varphi(t)=\frac{a-1}{2}t^2+t$ for $t\in\mathbb{R}$, where $a>1$ is a fixed constant.
   Clearly, $\phi\in\Phi$ with $t^*=0$. For such $\phi$,
   one has $\psi^*(s)=\frac{1}{\varphi(1)}h(\varphi(1)s)$ with
   \[
    h(s):=\left\{\!\begin{array}{ll}
                      0 & {\rm if}\ |s|\leq 1,\\
                       \frac{(|s|-1)^2}{2(a-1)}& {\rm if}\ 1<|s|\leq a,\\
                      |s|-\frac{a+1}{2} & {\rm if}\ |s|>a.
                \end{array}\right.
  \]
  Now the objective function in \eqref{Gnz-surrogate} with $m=n$ and $J_i=\{i\}$, i.e., $\sum_{i=1}^n\big[\varrho|x_i|-\psi^*(\varrho|x_i|)\big]$
  is exactly the SCAD function \cite{FanLi01}. This shows that the minimization
  problem of the SCAD function is an equivalent surrogate for the zero-norm
  problem under a mild condition.
 \end{aexample}

\begin{thebibliography}{1}

  \bibitem{Bach08}
  {\sc F. R.\ Bach},
  {\em Consistency of the group lasso and multiple kernel learning},
   Journal of Machine Learning Research, 9(2008): 1179-1225.

   \bibitem{Beck09}
  {\sc A.\ Beck and M.\ Teboulle},
  {\em A fast iterative shrinkage-thresholding algorithm for linear inverse problems},
  SIAM Journal on Imaging Sciences, 2(2009): 183-202.


  \bibitem{BiPan14}
  {\sc J. S.\ Bi, X. L.\ Liu and S. H.\ Pan},
  {\em Exact penalty decomposition method for zero-norm minimization based on MPEC formulations},
  SIAM Journal on Scientific Computing, 36(2014): A1451-A1477.


  \bibitem{BiPan17}
  {\sc J. S.\ Bi and S. H.\ Pan},
  {\em Multi-stage convex relaxation approach to rank regularized minimization problems based on equivalent MPGCCs},
  SIAM Journal on Control and Optimization, 55(2017): 2493-2518.

  \bibitem{BS00}
  {\sc J. F.\ Bonnans and A.\ Shapiro},
  {\em Perturbation Analysis of Optimization Problems},
  Springer, New York, 2000.



  \bibitem{Candes08}
  {\sc E. J.\ Cand\`{e}s, M. B.\ Wakin and S. P.\ Boyd},
  {\em Enhancing sparsity by reweighted $l_1$ minimization},
  Journal of Fourier Analysis and Applications, 14(2008): 877-905.


  \bibitem{Candes09}
  {\sc E. J.\ Cand{\`e}s and B.\ Recht},
  {\em Exact matrix completion via convex optimization},
  Foundations of Computational Mathematics, 9(2009): 717-772.

   \bibitem{Candes11}
  {\sc E. J.\ Cand{\`e}s and Y.\ Plain},
  {\em Tight oracle inequalities for low-rank matrix recovery from a minimal number of noisy random measurements},
  IEEE Transactions on Information Theory, 57 (2011): 2342-2359.



  \bibitem{Chen16}
  {\sc X. J.\ Chen, Z. S.\ Lu and T. K.\ Pong},
  {\em Penalty methods for a class of non-Lipschitz optimization problems},
  SIAM Journal on Optimization, 26(2016): 1465-1492.


  \bibitem{Clarke83}
  {\sc F. H.\ Clarke},
  {\em Optimization and Nonsmooth Analysis}, New York, 1983.



  \bibitem{DS89}
  {\sc D. L.\ Donoho and B. F.\ Stark},
  {\em Uncertainty principles and signal recovery},
  SIAM Journal on Applied Mathematics, 49(1989): 906-931.


  \bibitem{DL92}
  {\sc D. L.\ Donoho and B. F.\ Logan},
  {\em Signal recovery and the large sieve},
  SIAM Journal on Applied Mathematics, 52(1992): 577-591.



  \bibitem{DR09}
  {\sc A. L.\ Dontchev and R. T.\ Rockafellar},
  {\em Implicit Functions and Solution Mappings-A View from Variational Analysis},
  Springer, 2009.



 \bibitem{FanLi01}
 {\sc J. Q.\ Fan and R. Z.\ Li},
 {\em Variable selection via nonconcave penalized likelihood and its oracle properties},
 Journal of American Statistics Association, 96(2001): 1348-1360.


  \bibitem{Fazel02}
  {\sc M.\ Fazel},
  {\em Matrix Rank Minimization with Applications}, Stanford University, PhD thesis, 2002.

  \bibitem{Fazel03}
  {\sc M.\ Fazel, H.\ Hindi and S.\ Boyd},
  {\em Log-det heuirstic for matrix rank minimization with applications to Hankel and Euclidean distance matrices},
  American Control Conference, 2003. Proceedings of the 2003, 3: 2156-2162.


 \bibitem{Feng15}
 {\sc M. B.\ Feng, J. E.\ Mitchell, J. S.\ Pang, X.\ Shen and A.\ W\"{a}chter},
 {\em Complementarity formulations of $\ell_0$-norm optimization problems},
  Industrial Engineering and Management Sciences. Technical Report.
  Northwestern University, USA (2013).



  \bibitem{GV11}
  {\sc M.\ Golbabaee and P.\ Vandergheynst},
  {\em Hyperspectral image compressed sensing via low-rank and joint sparse matrix recovery}.
  In: Proceedings of IEEE International Conference on on Acoustics, Speech and Signal Processing, Kyota, 2011, 2741-2744.

  \bibitem{GPX11}
  {\sc Y. G.\ Peng, A.\ Ganesh, J.\ Wright, W. L.\ Xu and Y.\ Ma},
  {\em RASL: Robust alignment via sparse and low-rank decomposition for linearly correlated images},
  IEEE Transactions on Pattern Analysis and Machine Intelligence, 34(2012): 2233-2246.

  \bibitem{Zhou10}
  {\sc Z. H.\ Zhou, X. D.\ Li, J.\ Wright, E. J.\ Cand\`{e}s and Y.\ Ma},
  {\em Stable principal component pursuit},
  IEEE International Symposium on Information Theory Proceedings (ISIT), 2010: 1518-1522.


  \bibitem{Horn91}
  {\sc R. A.\ Horn and C. R.\ Johnson},
  {\em Topics in Matrix Analysis},
  Cambridge University Press, 1991.


   \bibitem{LeThi12}
  {\sc H. A.\ Le Thi, D. T.\ Pham and V. N.\ Huynh},
  {\em Exact penalty and error bounds in DC programming}.
  Journal of Global Optimization, 52(2012): 509-535.

%


 \bibitem{Lewis95}
  {\sc A. S.\ Lewis},
  {\em The convex analysis of unitarily invariant matrix functions}, Jounral of Convex Analysis, 2(1995), pp 173-183.


  \bibitem{LiRS16}
 {\sc P.\ Li, S. S.\ Rangapuram and M.\ Slawski},
 {\em Methods for sparse and low-rank recovery under simplex constraints},
  arXiv:1605.00507, 2016.

    \bibitem{LXW13}
  {\sc M. J.\ Lai, Y. Y.\ Xu and W. T.\ Yin},
  {\em Improved iteratively reweighted least squares for unconstrained smoothed $\ell_q$ minimization},
  SIAM Journal on Numerical Analysis, 51(2013): 927-957.

    \bibitem{LPR962}
  {\sc Z. Q.\ Luo, J. S.\ Pang and D.\ Ralph},
  {\em Mathematical Programs with Equilibrium Constraints},
  Cambridge University Press, 1996.

  \bibitem{Bhlmann11}
  {\sc P.\ B\"{u}hlmann  and V. D. G.\ Sara},
  {\em Statistics for High-Dimensional Data: Methods, Theory and Applications},
  Springer, 2011.



  \bibitem{MiaoPS16}
  {\sc W. M.\ Miao, S. H.\ Pan and D. F.\ Sun},
 {\em A rank-corrected procedure for matrix completion with fixed basis coefficients},
  Mathematical Programming, 159(2016): 289-338.

  \bibitem{Mohan12}
  {\sc K.\ Mohan and M.\ Fazel},
  {\em Iterative reweighted algorithm for matrix rank minimization},
  Journal of Machine Learning Research, 13(2012): 3441-3473.


  \bibitem{Negahban11}
  {\sc S.\ Negahban and M. J.\ Wainwright},
  {\em Estimation of (near) low-rank matrices with noise and high-dimensional scaling},
  The Annals of Statistics, 39(2011): 1069-1097.

     \bibitem{Negahban12}
  {\sc S.\ Negahban and M. J.\ Wainwright},
  {\em Restricted strong convexity and weighted matrix completion: Optimal bounds with noise},
  Journal of Machine Learning Research, 13(2012): 1665-1697.


  \bibitem{Nesterov83}
  {\sc Y.\ Nesterov},
  {\em A method of solving a convex programming problem with convergence rate $O(1/k^2)$},
  Soviet Mathematics Doklady, 27(1983): 372-376.


   \bibitem{KesMO10}
  {\sc R. H.\ Keshavan and A.\ Montanari and S.\ Oh},
  {\em Matrix completion from a few entries},
  IEEE Transactions on Information Theory, 56(2010): 2980-2998.



  \bibitem{OH15}
  {\sc S.\ Oymak, A.\ Jalali, M.\ Fazel, Y. C.\ Eldar and B.\ Hassibi},
  {\em Simultaneously structured models with application to sparse and low-rank matrices},
  IEEE Transactions on Information Theory, 61(2015): 2886-2908.


  \bibitem{Pietersz04}
  {\sc R.\ Pietersz and P. J. F.\ Groenen},
  {\em Rank reduction of correlation matrices by majorization},
  Quantitative Finance, 4(2004): 649-662.



  \bibitem{Richard12}
  {\sc E.\ Richard, P.\ Savalle and N.\ Vayatis},
  {\em Estimation simultaneously sparse and low rank matrices}.
  In Proceedings of the 29th International Conference on Machine Learning (ICML), pages 1351-1358, 2012.

  \bibitem{Recht10}
  {\sc B.\ Recht, M.\ Fazel and P. A.\ Parrilo},
  {\em Guaranteed minimum-rank solutions of linear matrix equations via nuclear norm minimization},
  SIAM Review, 52(2010): 471-501.


  \bibitem{Roc70}
  {\sc R. T.\ Rockafellar},
  {\em Convex Analysis}, Princeton University Press, 1970.
%


  \bibitem{Srebro10}
  {\sc R.\ Salakhutdinov and N.\ Srebro},
  {\em Collaborative filtering in a non-uniform world: Learning with the weighted trace norm},
  In Advances in Neural Information Processing Systems (NIPS), 23(2010): 2056-2064.




  \bibitem{Tib96}
  {\sc R.\ Tibshirani},
  {\em Regression shrinkage and selection via the Lasso},
  Journal of Royal Statistical Society B, 58(1996): 267-288.

  \bibitem{Trop06}
  {\sc J.\ Tropp},
  {\em Just relax: convex programming methods for identifying sparse signals},
  IEEE Transactions on Information Theory, 51(2006): 1030-1051.

    \bibitem{Toh10}
  {\sc K. C.\ Toh and S. W.\ Yun},
  {\em An accelerated proximal gradient algorithm for nuclear norm regularized linear least squares problems},
  Pacific Journal of Optimization, 6(2010): 615-640.

  \bibitem{Usman11}
  {\sc M.\ Usman, C.\ Prieto, T.\ Schaeffter and P. G.\ Batchelor},
  {\em $k-\!t$ Group sparse: A method for accelerating dynamic MRI},
  Magnetic Resonance in Medicine, 66(2011): 1163-1176.
%


  \bibitem{Yezhu95}
  {\sc J. J.\ Ye and D. L.\ Zhu},
  {\em Optimality conditions for bilevel programming problems},
  Optimization, 33(1995): 9-27.


  \bibitem{Yezhu97}
  {\sc J. J.\ Ye, D. L.\ Zhu and Q. J.\ Zhu},
  {\em Exact penalization and necessary optimality conditions for generalized bilevel programming problems},
  SIAM Journal on Optimization, 7(1997): 481-507.


  \bibitem{Ye97}
  {\sc J. J.\ Ye and X. Y.\ Ye},
  {\em Necessary optimality conditions for optimization problems with variational inequality constraints},
  Mathematics of Operations Research, 4(1997): 977-997.

  \bibitem{YuanLin06}
  {\sc M.\ Yuan and Y.\ Lin},
  {\em Model selection and estimation in regression with grouped variables},
  Journal of The Royal Statistical Society, series B, 68(2006): 49-67.




  \bibitem{Zhang10}
  {\sc C. H.\ Zhang},
  {\em Nearly unbiased variable selection under minimax concave penalty},
  Annals of Statistics, 38(2010): 894-942.


  \bibitem{Zou06}
  {\sc H.\ Zou},
  {\em The adaptive lasso and its oracle properties},
  Journal of American Statistics Association, 101(2006): 1418-1429.
  \end{thebibliography}
\end{document}